\documentclass{article}
\usepackage{iclr2027_conference,times}
\usepackage[T1]{fontenc}
\usepackage[utf8]{inputenc}
\usepackage{microtype}
\usepackage{amsmath,amssymb,amsthm}
\usepackage{booktabs,multirow,graphicx,tabularx,pdflscape}
\usepackage{subcaption}
\usepackage{xcolor}
\usepackage{hyperref} 
\usepackage{xurl,float,algorithm,algpseudocode}
\usepackage{tikz}
\usetikzlibrary{arrows.meta,positioning,calc}
\usepackage[capitalize,noabbrev]{cleveref}
\graphicspath{{figures/}}

\DeclareCaptionLabelFormat{mysubfig}
  {Figure~\thefigure\alph{subfigure}}

\usepackage{hyperref}
\usepackage{url}

\usepackage{siunitx}
\usepackage{graphicx}
\graphicspath{{source/}}

\usepackage{todonotes}
\usepackage{mathtools}

\usepackage{amsthm}

\newtheorem{theorem}{Theorem}[section]
\newtheorem{lemma}[theorem]{Lemma}

\usepackage{wrapfig}

\usepackage{titlesec}

\titlespacing*{\section}
  {0pt}    
  {1.0ex}  
  {0.5ex}  

\titlespacing*{\subsection}
  {0pt}
  {0.5ex}
  {0.3ex}

\titleformat{\paragraph}[runin]
  {\normalfont\normalsize\bfseries}
  {\theparagraph}
  {1em}
  {}

\titlespacing*{\paragraph}
  {0pt}    
  {0.3ex}  
  {0.6em}  

\usepackage{caption}
\usepackage{algorithm}
\usepackage{algpseudocode}

\usepackage{xspace}
\makeatletter
\DeclareRobustCommand\onedot{\futurelet\@let@token\@onedot}
\def\@onedot{\ifx\@let@token.\else.\null\fi\xspace}

\newcommand{\eg}{{e.g}\onedot} 
\newcommand{\ie}{{i.e}\onedot}

\makeatother

\DeclareMathOperator*{\argmin}{\operatorname{argmin}}

\DeclareMathOperator*{\Tr}{\operatorname{Tr}}

\newcommand{\high}{\texttt{high}\xspace}
\newcommand{\low}{\texttt{low}\xspace}

\newcommand{\MSE}{\operatorname{MSE}\xspace}
\newcommand{\hatMSE}{\widehat{\MSE}}

\usepackage[framemethod=TikZ]{mdframed}
\definecolor{algobackground}{HTML}{F8F9FA}
\definecolor{algoborder}{HTML}{2B6CB0}

\mdfdefinestyle{algorithmbox}{
  backgroundcolor=algobackground,
  linecolor=algoborder,
  topline=false,
  rightline=true,
  bottomline=false,
  leftline=true,
  linewidth=3pt,
  innerleftmargin=12pt,
  innerrightmargin=12pt,
  innertopmargin=8pt,
  innerbottommargin=8pt,
  skipabove=\topsep,
  skipbelow=\topsep
}

\newcommand{\methodname}{Q-PACE\xspace}
\newcommand{\lastlayers}{NVIDIA\xspace}

\title{Q-PACE: Dynamic Precision Allocation for Quantization-Aware Training}

\author{Alexandra Volkova\thanks{Equal contribution.}\\
ISTA \\ 
\texttt{avolkova@ista.ac.at}\\
\And
Matin Ansaripour\footnotemark[1] \\
EPFL, ISTA \\
\And
Erik Schultheis\\
ISTA \\
\And
Christoph H. Lampert\\
ISTA \\
\And
Dan Alistarh\\
ISTA 
}

\iclrfinalcopy   

\begin{document}
\maketitle

\lhead{Preprint}

\maketitle

\begin{abstract}
Quantization-aware training (QAT) leverages lower-precision arithmetic to reduce the cost of LLM deployment, but aggressive quantization degrades final model performance. A common remedy is mixed-precision training, in which high precision is assigned to some of the layers to maintain performance while keeping the cost constrained. This approach then requires \textit{precision assignments} for model layers during training. We provide a new approach, called Q-PACE, consisting of a second-order sensitivity model that predicts the loss increase as a sum of quantization noise MSE weighted by per-layer curvature coefficients. During training, we periodically re-compute these coefficients using perturbations across layers, and re-assign precision. Pretraining and supervised fine-tuning experiments on LLMs of up to 4B parameters show that Q-PACE consistently improves over existing mixed-precision training recipes, and achieves comparable loss at substantially lower total memory budgets. We further find that quantization sensitivity is highly predictable by depth and layer type, and its stability during training allows for infrequent, cheap recalibration.

\end{abstract}

\section{Introduction}
\label{sec:intro}

Quantization-aware training (QAT) methods~\citep{courbariaux2016binaryconnecttrainingdeepneural, Jacob_2018_CVPR} use low-precision arithmetic to lower the cost of both \emph{deploying} LLMs, as they result in lower-bitwidth accurate models, and \emph{training} them, as the matrix multiplications during training can be mapped to lower precision. 
Recent recipes make end-to-end NVFP4 and MXFP4 pretraining viable by leveraging techniques such as block (micro) scaling, stochastic rounding, rotations, and unbiased gradient estimators~\citep{castro2025quartet,chen2025tetrajetv2,nvidia2026pretraininglargelanguagemodels}. However, there is still a clear loss gap between lower-precision approaches and BF16 and MXFP8 training, and uniform four-bit training can be lossy or even become unstable over long horizons~\citep{nvidia2026pretraininglargelanguagemodels}.

It is known that not every network layer needs the same precision, e.g.~\citet{malinovskii2025higgs}. Moreover, modern microscaling formats allow matrix products across various precisions, or even mixed-precision operands~\citep{mxspec}, so the bitwidth of each linear layer can be chosen separately, assuming a global budget constraint. The key question, which we consider here, is how to choose the layer-wise bitwidths that maximize accuracy while satisfying a precision budget. Current approaches either keep a fixed set of layers, typically the last few, in higher precision~\citep{nvidia2026pretraininglargelanguagemodels}, or rank layers by a local proxy, such as the product of gradient norm and quantization error~\citep{snip_quant}. It is not clear whether these heuristics can be improved. 

In this paper, we show that the second-order loss model as a sum of per-layer errors weighted by sensitivity coefficients accurately predicts loss increase due to quantization \textit{during training} (Sec.~\ref{sec:method}). This gives a training-time analogue of the ``linearity theorem'' model of \citet{malinovskii2025higgs}, previously used for one-shot, post-training quantization.

We call our approach~\textbf{Q-PACE}, \textbf{P}recision \textbf{A}llocation via \textbf{C}urvature \textbf{E}stimation for \textbf{Q}uantization-aware training, and apply it at training iterates. Estimated per-layer sensitivity coefficients depend only on the model weights and are reasonably stable across iterations, so \methodname re-estimates them periodically during training: once they are determined, we solve a knapsack problem to determine ``optimal'' per-layer bit-widths. This approach only requires forward passes for coefficient estimation, and has negligible memory overhead and marginal computational overhead. 

We evaluate \methodname on pretraining experiments for models between 350M and 4B parameters, and on supervised fine-tuning of Qwen3 models, against state-of-the-art baselines~\citep{nvidia2026pretraininglargelanguagemodels, snip_quant}, at budgets between 5.0 and 6.5 bits per weight. Our findings are as follows:
\begin{itemize}
    \item \methodname consistently improves over existing mixed-precision training strategies across model scales, precision budgets, and format options. In pretraining, it achieves the lowest final loss in every setting we evaluate, recovering up to $81\%$ of the loss gap between uniform NVFP4 and MXFP8 training; the same trend holds for supervised fine-tuning.
    \item \methodname provides a substantially better quality--precision trade-off. At a fixed precision budget, \methodname produces better final loss than both SNIP and the \lastlayers heuristic, while at matched loss it requires fewer bits per model parameter (Figure~\ref{fig:ranking-and-pareto-500m}).
    \item A detailed analysis of the produced allocations shows that quantization sensitivity is highly structured across the network: it depends on both layer type and depth, and its profile evolves over the course of training. However, these changes occur much slower than parameter updates. As a result, only a small number of reallocations (on the order of 10 over a full training run) is sufficient, keeping the overhead of \methodname low in practice.
\end{itemize}

\begin{figure}[t]
    \includegraphics[width=0.49\linewidth]{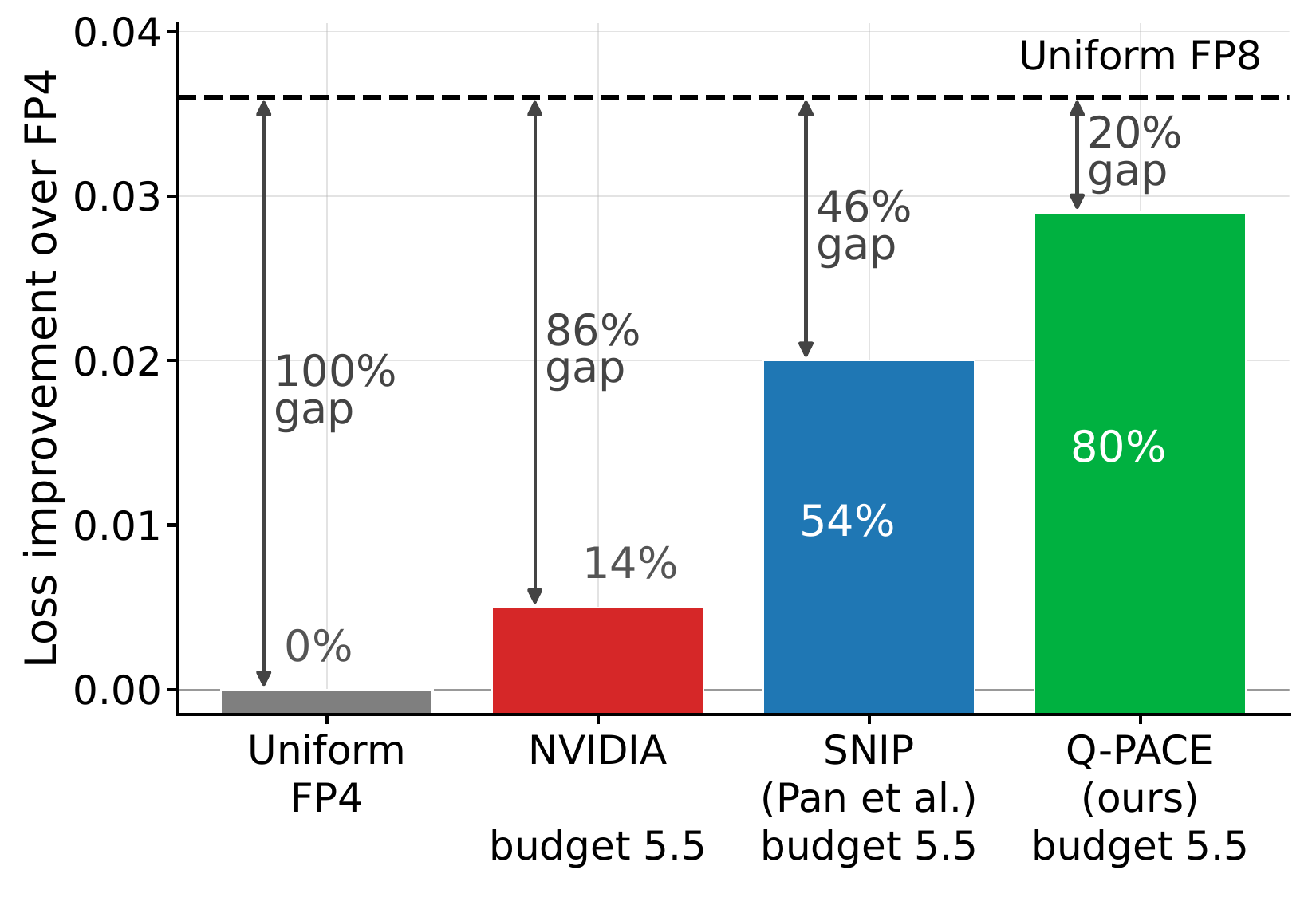} \hfill
    \includegraphics[width=0.49\linewidth]{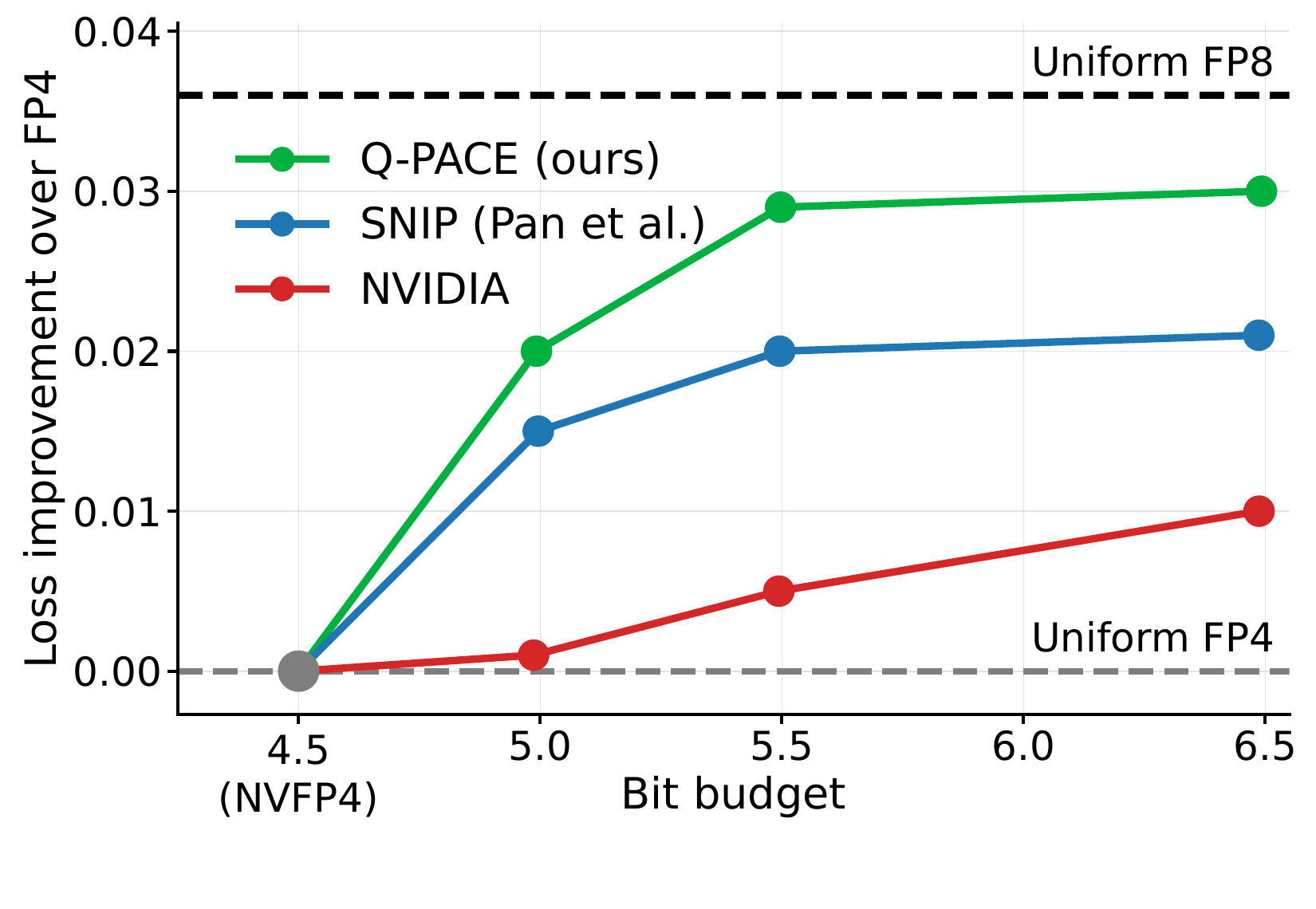}
    \caption{Final loss improvement for a 500M model trained with mixed precision across methods and budgets. We improve over \emph{last layers in BF16}~\citep{nvidia2026pretraininglargelanguagemodels} and \emph{SNIP}~\citep{snip_quant}.}
    \label{fig:ranking-and-pareto-500m}
\end{figure}

All in all, our results show that layer-wise precision can be assigned from a sensitivity model estimated using only forward passes, rather than from more complex error proxies. Doing so can recover a large fraction of the accuracy gap to uniform higher-precision training, at a small fraction of the training cost.

\section{The Problem: QAT with Adaptive Mixed-Precision}
Our goal is to choose the quantization levels of $n$ layers  
in a network during quantization-aware training. 
We concentrate on linear layers, \ie fully-connected layers or attention projections. 
For each of the full-precision weight tensors $W=(W_i)_{i=1}^n$, we want to select a corresponding \emph{bitwidth}, 
$b_i$, from a set of candidate bitwidths, denoted as $\beta$. 

We use the term \emph{bitwidth} as it is common in the literature. Technically, the selection is among \emph{formats} used for matrix multiplication operations. We distinguish between formats even if they nominally have the same number of bits but use a different value representation, such as FP8 E8M0 and FP8 E4M3.
We denote the quantization of a tensor $W_i$ by $Q(W_i, b_i)$. 
Any choice of quantization bitwidths will influence the model's overall quality, 
\ie the achieved \emph{loss}, as well as its resource \emph{cost}, such 
as storage and compute.
Typically, there is a tradeoff between those: a larger bitwidth results 
in lower loss, but higher training and inference cost.

For a sequence $B = (b_1,\dots,b_n)\in\beta^n$, called \emph{bitwidth allocation}, 
we denote its loss by $L(W, B) := L(\{ Q(W_i, b_i)  \text{ for } b_i \in B\})$, and its resource cost by $C(W, B)$. We assume the cost to be additively linear, $C(W,B) := \sum_{i=1}^n c_i(b_i)$, where $c_i(b_i) = d(W_i) \kappa(b_i)$, $d$ is a function of the size of the weight tensor $W_i$ (but not of its actual values), and $\kappa$ is a characteristic cost of a bitwidth. 

With the goal to constraint memory at inference time, we take $d(W_i)=:d_i$ as the number of elements in a tensor $W_i$, and $\kappa(b_i)$ as the number of bits required to store an element, including the format's block scales (4.5 bits for NVFP4; 6.25 and 8.25 bits for MXFP6 and MXFP8; 16 bits for BF16), which we write as $b_i$ again, meaning that $C(W,B) = \sum_{i=1}^n d_i b_i$ measures the total model size in bits. 
Note, however, that the framework we describe here would also apply to other choices of $d(W_i)$ and $\kappa(b_i)$.

Ultimately, for a model with set of weights $W$, the problem of \emph{optimal bitwidth allocation} is minimization of the loss under a given maximum model size constraint, $C_0$. Formally, we want to find  
\begin{equation}
    \label{eq:our-allocation-problem}
    B^\ast = \argmin_{B=(b_1,\dots,b_n)}\, L(W, B) \text{ subject to } \sum_i b_i d_i \leq C_0.
\end{equation}

For post-training quantization, the bitwidth allocation only has to be chosen once. In this work, however, we focus on QAT, where the choice of quantization has a direct influence on the quality of learned weights, and one cannot assume that the best quantization strategy is constant over the course of training~\citep{liu2025paretoqimprovingscalinglaws}.

Consequently, we are interested in finding a \emph{sequence of allocations} $B^1,\dots,B^T$, where $T$ is the number of training steps, that ultimately achieves the lowest final loss while adhering to the cost budget at each iteration. We use a superscript to index training steps.
The new precision allocation at each optimization step then affects both the loss calculation in the forward pass and the errors in the gradient computation in the backward pass.

The final set of weights, $W^T$, which determines the final loss, is affected by all previous allocations. Then the optimal sequence of allocations during training should be 
\begin{equation}
    (B^1,\dots,B^T)^{\ast} \coloneqq \!\!\!\argmin_{\text{all trajectories } (B^t)}\!\!\! L(W^T (B^1, \dots,  B^{T-1} ), B^T) \text{ subject to } \max_{t=1,\dots,T}\!\! C(B_t) \leq C_0.
    \label{eq:over-time-allocation-problem}
\end{equation}

Due to the complex structure with interactions between layers as well as time steps, a direct optimization of Eq.~\eqref{eq:over-time-allocation-problem} is intractable. 
In the following section, we discuss approximations to $L(W, B)$ that make the optimization computationally feasible. 

\section{Method}
\label{sec:method}

For simplicity of exposition, we first discuss a setup where only two bitwidth choices are available: high-precision, \high, and low-precision, \low, \eg BF16 and FP4, respectively. 

For a bitwidth allocation $B\in\{\low,\high\}^n$, we denote the quantization error (perturbation) of quantizing a layer into \low format as $\delta W_i := Q(W_i,b_i) - W_i$. Assuming that \high is the original data format, such that $Q(W_i, \high) = W_i$, we have
$\delta W_i=0$ if $b_i=\high$, but $\delta W_i\neq 0$
otherwise. 
With $\delta W:=(\delta W_1,\dots,\delta W_n)$, a Taylor expansion of the loss function around the unquantized loss $L(W) = L_{\high}$ yields
\begin{align}
    L(Q(W,B))-L(W)  \approx
    \sum_i \Bigl\langle \frac{\partial L(W)}{\partial W_i}, \delta W_i\Bigr\rangle  + \frac{1}{2} 
    \sum_{i,j}
    \delta W_i^\top \frac{\partial^2 L(W)}{\partial^2 W_i W_j} \delta W_j.
\label{eq:taylor}
\end{align}
Following the \emph{Linearity Theorem}~\citep{malinovskii2025higgs}, the quantization error $\delta W_i$ can be modeled as an isotropic random variable with zero mean $\mathbb{E} [\delta W_i] =0$ and finite variance $\mathbb{V} [\delta W_i]= \sigma^2_i I$, $\sigma_i^2 >0$. Moreover, we assume the errors to be uncorrelated between layers, $\mathbb{E} [\delta W_i \delta W_j] = 0$ for $1\leq i<j\leq n$. 
Assuming this, we can take the expectation over $\delta W_i$ in Eq.~\eqref{eq:taylor} to obtain:
\begin{equation}
\begin{aligned}
        \mathbb{E}\Bigl[L(W+\delta W) - L(W)\Bigr] & \approx
     \frac{1}{2} \sum_i \; \Tr(H_{ii}) \, \sigma_i^2 
    =  \frac{1}{2} \sum_i \; \Tr(H_{ii})\underbrace{\frac{1}{d_i} \mathbb{E} ||\delta W_i||^2_F}_{=\MSE(Q(W_i), W_i)},
    \end{aligned}
\label{eq:MSE}
\end{equation}

\noindent where $d_i$ denotes the number of elements in the $i$-th weight matrix, and $H_{ii}$ denotes the Hessian matrix of $L$ with respect to these weights. For a formal proof and implications see App.~\ref{app:loss-approximation}.

The derivation presented above can be done for any precision $\low$; thus, we obtain, for any bitwidth allocation:
\begin{equation}
    L(W, B) \approx L_{\high} + 
    \sum_i \alpha_i \,\hatMSE(Q(W_i,B_i), W_i),
    \label{eq:higgs}
\end{equation}
where $L_{\high} = L(W)$ is the loss over full-precision weights, and $\alpha_i$ is an
estimate of $\frac{1}{2} \Tr (H_{ii})$ that can be interpreted as the 
aggregated curvature of a layer. 

\paragraph{Connection to Hutchinson trace estimation.}
For an isotropic random probe $z$ satisfying 
$\mathbb{E}[z z^\top] = I$, we have $\mathbb{E}_z\!\left[L(W+\epsilon z)-L(W)\right] \approx  \mathbb{E}_z\!\left[z^\top H_{ii} z\right] = \frac{\epsilon^2}{2}\Tr(H_{ii})$ \citep{hutchinson}. Our forward-pass calibration can be viewed as a variant of Hutchinson trace estimation: the layer-wise curvature 
$\alpha_i$ is estimated from the increase in loss induced by isotropic 
random perturbations, normalized by their mean-squared magnitude.

\paragraph{MSE estimation.}
The estimate $\hatMSE$ can be obtained empirically by measuring the statistics of the actual tensor weights or the analytic approximation,
$\hatMSE(Q(W_i), W_i)\propto 4^{-b_i}$, based the analysis in App.~\ref{app:mse-estimation}. The experiments (App.~\ref{app:mse-estimation}) showed a moderate advantage of using the analytical MSE estimates over measured ones, so we adopt the former throughout this study.

\paragraph{Curvature estimation.}
To estimate $\alpha_i$, we use that $Q(W_i,b_i)=W_i$ and $\MSE(Q(W_i,b_i), W_i)=0$ for the layers that are kept in full precision ($b_i= \high$), so effectively Eq.~\eqref{eq:higgs} sums only over those layers that are quantized. 
Then, for layer $i$ we model quantization error by adding Gaussian noise $\delta W_i$ to the weight tensor $W_i$  (with $B_i = \{b_{j \neq i}=\high, b_i=\low\}$) and can directly read off
\begin{equation}
L(W,B_i)-L_{\high} = \alpha_i  \; \hatMSE (Q(W_i,B_i), W_i) = \alpha_i  \; \MSE (W_i + \delta W_i, W_i).
\label{eq:alpha_from_L_and_MSE}
\end{equation}

The dataset on which the loss in Eq.~\ref{eq:alpha_from_L_and_MSE} is evaluated, $\mathcal{D}$, we call 
the \emph{calibration dataset}. It should be large enough for an 
accurate loss approximation while being small enough to 
avoid excessive computational overhead.

In principle, Eq.~\eqref{eq:alpha_from_L_and_MSE} yields $\alpha_1,\dots,\alpha_n$
from $n+1$ forward passes over $\mathcal{D}$, but we note that using several noise perturbations per layer with different variances improves our loss estimation. We used $m \in \{3,\ldots,10\}$ noise levels to estimate $\alpha_i$. Therefore, curvature estimation requires the total of $mn+1$ forward passes. See details on hyperparameter choice in App.~\ref{app:ablation-m-D}.

\algnewcommand{\LineFor}[2]{%
\State \algorithmicfor\ #1\ \algorithmicdo\ #2}

\begin{algorithm}[t]
  \caption{\methodname}
  \label{alg:method}
  \begin{algorithmic}[1]
    \Require Weight tensors $W=(W_1,\dots,W_n)$, calibration data $\mathcal{D}$, update interval $\tau$
    \Repeat 
    \State Evaluate $L_{\high}$ on $\mathcal{D}$
    \For{layers $i = 1, \dots, n$}
        \For{$m$ noise levels $\{\sigma^2_k\}$}
      \State Add noise $\delta W_i \sim N(0, \sigma_k^2)$ to layer weight $W_i$
      \State Determine $\hatMSE(W_i, W_i+\delta W_i)$
      \State Evaluate $L(W + \delta W_i)$ on $\mathcal{D}$
      \EndFor
      \State Estimate sensitivity $\alpha_i$ via least squares using Eq.~\eqref{eq:alpha_from_L_and_MSE}
    \EndFor
    \State $B \gets$ solution of knapsack problem Eq.~\eqref{eq:our-allocation-problem}
    \State Train model with the new bitwidth allocation $B$ for $\tau$ steps
    \Until termination condition
  \end{algorithmic}
\end{algorithm}

\paragraph{Bitwidth assignment.}
Once we have determined sensitivities $\alpha_i$ for all $i=1,\dots,n$, we can solve Eq.~\eqref{eq:our-allocation-problem} with Eq.~\eqref{eq:higgs}
as a proxy for the intractable $L(W, B)$. 
As a discrete selection problem with linear objective and 
linear constraints, this constitutes a \emph{multiple choice knapsack problem (MCKP)}.
For our implementation we use pseudo-polynomial dynamic programming~\citep{pisinger1995minimal}: the budget $C_0$ is discretized into $M$ bins, and each layer cost $d_i b_i$ is rounded up to the whole number of bins. The discrete problem is then solved via a dynamic program that tracks an allocation that gives minimal loss degradation for each cost level. The time complexity of the algorithm is $O(n M |\beta|)$, and in practice, for the models used, we measure $\lesssim 1$ s of knapsack solving time, which is negligible compared to multiple forward passes needed for estimating $\alpha_i$.

\paragraph{\methodname.}
Ideally, the estimates $\alpha_i$ and $\hatMSE$ should stay up-to-date 
at any time during training. 
However, given that they require multiple forward passes to be updated,
it would be unreasonable to change them at every optimization step.
In practice, we observed that sensitivities demonstrate a stable pattern 
during training and infrequent re-allocations suffice. Therefore, 
we propose to update the estimates only at regular intervals of $\tau$ 
steps. 
In our experiments, re-allocation periods lower than 5B training tokens 
did not yield measurable improvement.
Summarizing the discussion above, we obtain the \methodname procedure which 
is our main contribution in this work, see Algorithm~\ref{alg:method} for the
pseudo-code.

\paragraph{Computational and memory overhead.}
\methodname calibration requires only forward passes. To perturb layer $i$ we add $\delta W_i$ to the weights and evaluate the loss on $\mathcal{D}$. The only extra memory is a single noise tensor the size of the largest layer, which is negligible compared to the training state. In contrast, gradient-based methods~\citep{snip_quant} require storing the activations and gradients needed for the backward pass, resulting in a substantially larger memory overhead. 

One recalibration costs $mn+1$ forward passes over $\mathcal{D}$: one unperturbed
pass for $L_{\high}$ and one per (layer, noise level) pair. It is followed by a knapsack solve with
negligible cost. For a batch size $B$ the relative compute overhead is then $\frac{(mn+1) |\mathcal{D}|}{3 \tau B}$, linear in both $m$ and $|\mathcal{D}|$ and inversely proportional to $\tau$. In our experiments the measured wall-clock time overhead was only 5\% for 4B model corresponding to a conservative choice of $\tau=5k$. Time overhead can likely be reduced substantially by increasing $\tau$ (Sec.~\ref{seq:analysis}). More details on hyperparameter choice can be found in App.~\ref{app:ablation-m-D}.

\section{Experimental Results}
\label{sec:results}

\subsection{Experimental Setup}
 We train a decoder-only Transformer~\citep{attention}, with each block following an OLMo-2-style~\citep{olmo2}: multi-head self-attention and a two-layer MLP, with RMSNorm~\citep{rmsnorm} applied to each sub-layer output before the residual. Attention uses rotary position embeddings~\citep{rope}, and Qwen3-style~\citep{qwen3} QK-RMSNorm. We use model sizes of 350M, 500M, 800M, and 4B.
We start our experiments with a 350M model on 18B tokens and evaluate all the strategies, then select the best-performing ones for larger up to 4B model runs. 

The training data is ClimbMix dataset~\citep{climbmix}. We use a subset pre-tokenized with TokenMonster (vocab size 32 000). We used AdamW~\citep{kingma2017adammethodstochasticoptimization, loshchilov2019decoupledweightdecayregularization} and Muon~\citep{muon} optimizers. The learning rate (LR) schedule used is WSD~\citep{wsd}. The optimal LR is found via a grid search. Context length range is from 1024 for 350M model to 4096 for 4B model, more details are provided in App.~\ref{app:training-setup}.

\paragraph{Quantization-aware training.} 
Linear layers in the blocks (Q, K, V, O, and MLP projections) are the only quantized modules; embeddings and the LM head remain in high precision. We follow the e2e QAT setup by \cite{panferov2026quartet}, which performs all of the weights-activations matmul and gradients-weight/activations matmuls in FP4. In our experiments, the candidate low precision formats for matmix multiplications are NVFP4, MXFP6, MXFP8. For mixed-precision runs, a global bitwidth budget is defined as the total memory cost of the quantized weights divided by the number of quantized parameters, i.e., bits per parameter. Unless stated otherwise, all matmuls for a layer, forward and backward, are performed in the same precision. The weight master copy and optimizer state are stored in FP32. 

\paragraph{Baselines.} 
We compare \methodname against uniform NVFP4, MXFP8, the \emph{last-layers} strategy of~\citet{nvidia2026pretraininglargelanguagemodels}, and SNIP~\citep{snip_quant}. \lastlayers keeps the last layers of the model at \high precision, with the cutoff chosen to satisfy the budget. SNIP solves a similar knapsack problem, but estimates layer costs using gradient-based forward sensitivity and backward-update divergence. We also consider a version of SNIP, named SNIP weight-gradient-only, as it performs better than the original SNIP recipe in pretraining. See App.~\ref{app:snipwgo} for details.

\subsection{Forward Error Is Sufficient for Loss Estimation}
Before we report our method's performance, we provide a justification for our 
choice of modeling the quantization error purely as a function of the $\MSE$, \ie weight quantization error, Eq.~\eqref{eq:MSE}. Other mixed-precision training methods model loss degradation using the error introduced during the backward pass~\citep{snip_quant}. We test whether the signal from the backward pass is necessary for predicting the loss under QAT.

For this, we compare two training settings for a 500M model trained on 74B tokens, with various precision allocation methods. In the first setting, the layer-wise precision
assignment is applied to both the forward and backward matrix multiplications. In the second, the precision assignment is applied only to the forward matmuls, while all the backward matmuls are computed in FP4 (FP4 bwd). The second setup introduces additional backward pass quantization error that is not captured by our forward error metric (Eq.~\ref{eq:higgs}).
\begin{wrapfigure}[12]{r}{0.5\textwidth}
\centering
\includegraphics[width=0.5\textwidth]{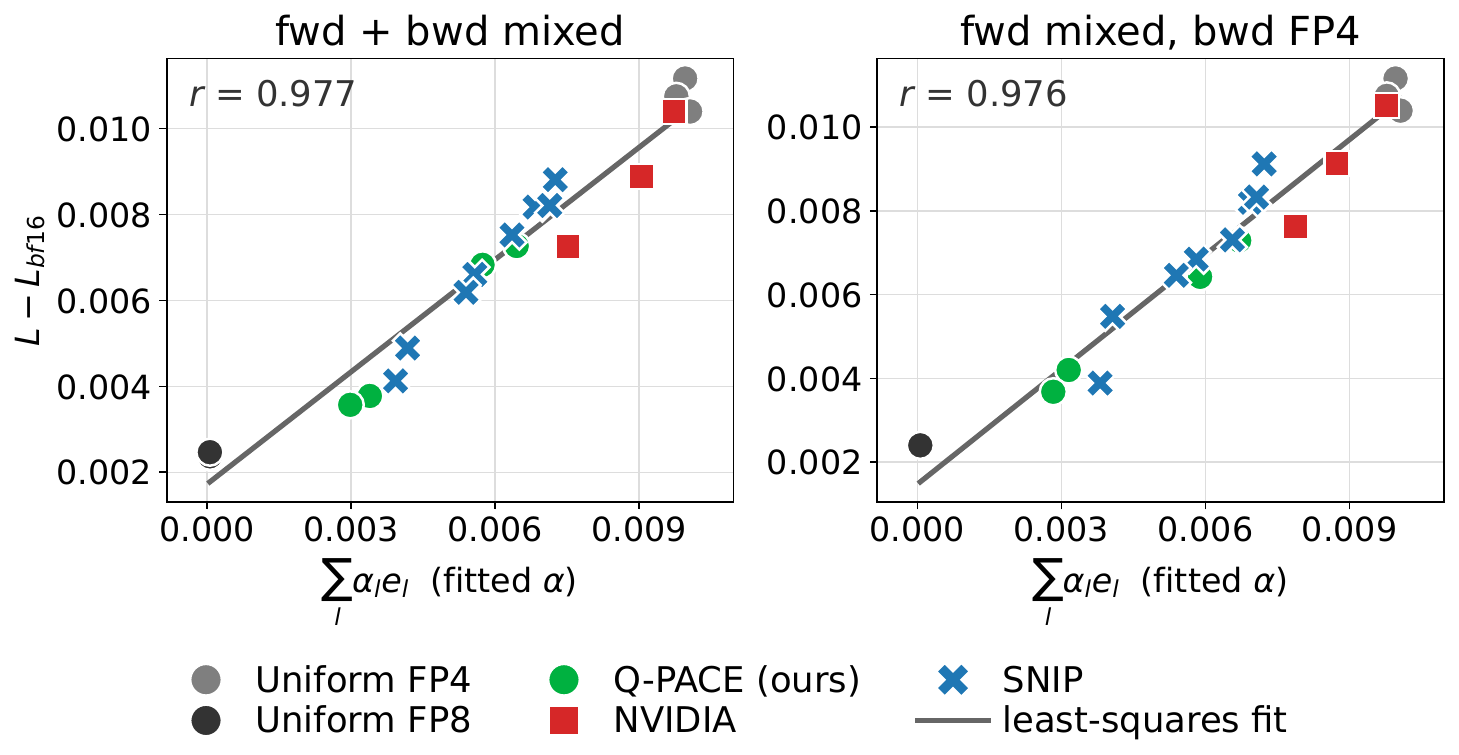}
\caption{\small Test loss vs. the modeled sensitivity-weighted forward error for 500M models' final allocations.}
\label{fig:nmse-vs-loss}
\end{wrapfigure}

Figure~\ref{fig:nmse-vs-loss} shows the final test loss as a function of our loss estimate (Eq.~\ref{eq:higgs}), which uses forward error only. In both settings, the final loss remains linear in the forward error. We calculate the Pearson correlation  between the loss estimate and final test loss, $r$. When forward and backward precisions are the same per layer, the correlations are $r=0.98$ for our model. Keeping the backward pass uniformly in FP4 does not change this, and the corresponding Pearson correlation remains $r=0.98$.

We conclude that, although additional perturbation is indeed introduced during the backward pass in QAT, backward pass metrics are \textit{not required} to predict the impact on the loss. Across the precision allocation methods tested, forward error alone can be a good predictor of final loss. Supporting results in App.~\ref{app:losslinear}.

\subsection{Pretraining Experiments}

\begin{table}[t]
\centering
\caption{ \small
Final test loss across models.
Budget equals target average bits per parameter.
``Rec.'' (recovery) is the share of the FP4-to-FP8 gap recovered, $(L_{\text{FP4}}-L)/(L_{\text{FP4}}-L_{\text{FP8}})$.
The column ``gain'' reports the ratio between the gain of \methodname over FP4
and the gain of the NVIDIA allocation over FP4,
$(L_{\text{FP4}}-L_{\text{\methodname}})/(L_{\text{FP4}}-L_{\text{NVIDIA}})$. The std error measured over 3 random seeds for 350M model is $\leq 0.001$. Best loss per row is shown in bold. 
}
\label{tab:pretraining-results}

\resizebox{\textwidth}{!}{%
\begin{tabular}{ll|c|c|cc|cc|cc|cc}
\toprule
Model
& Budget
& FP4
& FP8
& NVIDIA
& Rec.
& SNIP
& Rec.
& \methodname
& Rec.
& gain \\
\midrule

\multirow{4}{*}{$350$M, $16$B tok}
& $5.0\ \{4,6,8,16\}$
& \multirow{4}{*}{$2.886$}
& \multirow{4}{*}{$2.848$}
& $2.884$ & $6\%$
& $2.872$ & $37\%$
& $\mathbf{2.868}$ & $\mathbf{49\%}$
& $7.5\times$ \\

& $5.5\ \{4,6,8,16\}$
& 
& 
& $2.880$ & $15\%$
& $2.867$ & $51\%$
& $\mathbf{2.858}$ & $\mathbf{74\%}$
& $4.8\times$ \\

& $6.5\ \{4,8,16\}$
& 
& 
& $2.876$ & $28\%$
& $2.864$ & $59\%$
& $\mathbf{2.857}$ & $\mathbf{75\%}$
& $2.7\times$ \\

& $6.5\ \{4,16\}$
& 
& 
& $2.876$ & $28\%$
& $2.874$ & $33\%$
& $\mathbf{2.870}$ & $\mathbf{43\%}$
& $1.6\times$ \\

\midrule

\multirow{4}{*}{$500$M, $74$B tok}
& $5.0\ \{4,6,8,16\}$
& \multirow{4}{*}{$2.693$}
& \multirow{4}{*}{$2.657$}
& $2.692$ & $2\%$
& $2.678$ & $40\%$
& $\mathbf{2.673}$ & $\mathbf{55\%}$
& $32.4\times$ \\

& $5.5\ \{4,6,8,16\}$
& 
& 
& $2.688$ & $14\%$
& $2.673$ & $54\%$
& $\mathbf{2.664}$ & $\mathbf{80\%}$
& $5.8\times$ \\

& $6.5\ \{4,8,16\}$
& 
& 
& $2.683$ & $26\%$
& $2.672$ & $58\%$
& $\mathbf{2.663}$ & $\mathbf{81\%}$
& $3.1\times$ \\

& $6.5\ \{4,16\}$
& 
& 
& $2.683$ & $26\%$
& $2.680$ & $36\%$
& $\mathbf{2.675}$ & $\mathbf{48\%}$
& $1.8\times$ \\

\midrule

\multirow{2}{*}{$800$M, $80$B tok}
& $5.0\ \{4,16\}$
& \multirow{3}{*}{$2.616$}
& \multirow{3}{*}{$2.581$}
& $2.613$ & $9 \%$
& $2.613$ & $9 \%$
& $\mathbf{2.609}$ & $\mathbf{20\%}$
& $2.2\times$ \\

& $5.5\ \{4,16\}$
& 
& 
& $2.611$ & $14 \%$
& $2.612$ & $ 11 \%$
& $\mathbf{2.607}$ & $\mathbf{26 \%}$
& $1.9\times$ \\

& $6.0$ $\{4,16\}$
& 
&
& $2.607$ & $26\%$
& $2.605$ & $31\%$
& $\mathbf{2.602}$ & $\mathbf{40\%}$
& $1.5 \times$\\



\midrule

$4$B, $105$B tok
& $6.5\ \{4,16\}$
& $2.278$
& $2.251$
& $2.270$ & $31\%$
& $2.270$ & $32\%$
& $\mathbf{2.266}$ & $\mathbf{47\%}$
& $1.5\times$ \\ 

\bottomrule
\end{tabular}%
}
\end{table}

\begin{figure}[t]
\sbox0{\footnotesize\setlength{\tabcolsep}{4pt}%
\begin{tabular}{@{}lccc@{}}
\toprule
 & Loss & Rec. & Acc. \\
\midrule
uniform FP4 & $0.7375_{\pm0.0005}$ & $0\%$ & $55.0_{\pm0.3}$ \\
uniform FP8 & $0.6790_{\pm0.0001}$ & $100\%$ & $57.3_{\pm0.2}$ \\
\midrule
\multicolumn{4}{@{}l}{\textit{budget 5 $\{4,6,8,16\}$}} \\
NVIDIA & $0.7283_{\pm0.0004}$ & $16\%$ & $55.0_{\pm0.2}$ \\
SNIP & $0.7115_{\pm0.0003}$ & $44\%$ & $\mathbf{56.2}_{\pm0.2}$ \\
\methodname & $\mathbf{0.7077}_{\pm0.0003}$ & $\mathbf{51\%}$ & $55.9_{\pm0.2}$ \\
\midrule
\multicolumn{4}{@{}l}{\textit{budget 5.5 $\{4,6,8,16\}$}} \\
NVIDIA & $0.7221_{\pm0.0003}$ & $26\%$ & $55.0_{\pm0.3}$ \\
SNIP & $0.7004_{\pm0.0004}$ & $63\%$ & $56.4_{\pm0.1}$ \\
\methodname & $\mathbf{0.6961}_{\pm0.0005}$ & $\mathbf{71\%}$ & $\mathbf{56.5}_{\pm0.3}$ \\
\midrule
\multicolumn{4}{@{}l}{\textit{budget 6.5 $\{4,8,16\}$}} \\
NVIDIA & $0.7174_{\pm0.0005}$ & $34\%$ & $54.8_{\pm0.2}$ \\
SNIP & $0.6973_{\pm0.0003}$ & $69\%$ & $\mathbf{56.7}_{\pm0.1}$ \\
\methodname & $\mathbf{0.6919}_{\pm0.0006}$ & $\mathbf{78\%}$ & $56.5_{\pm0.2}$ \\
\midrule
\multicolumn{4}{@{}l}{\textit{budget 6.5 $\{4,16\}$}} \\
NVIDIA & $0.7174_{\pm0.0005}$ & $34\%$ & $54.8_{\pm0.2}$ \\
SNIP & $0.7099_{\pm0.0011}$ & $47\%$ & $\mathbf{56.0}_{\pm0.3}$ \\
\methodname & $\mathbf{0.7085}_{\pm0.0005}$ & $\mathbf{50\%}$ & $55.8_{\pm0.3}$ \\
\bottomrule
\end{tabular}}%
\begin{minipage}[t]{\dimexpr\textwidth-\wd0-1em\relax}\vspace{0pt}\centering
\includegraphics[width=\textwidth]{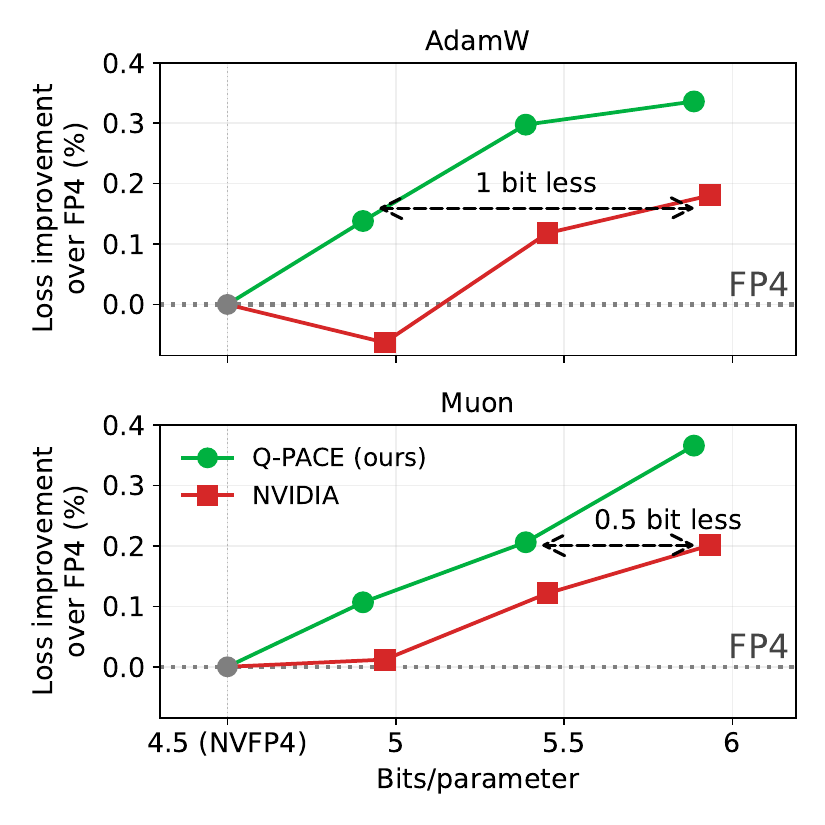}
\caption{Loss improvement over uniform FP4 for 800M models with varying budgets. To achieve the same quality models trained with \methodname need 1 bit/parameter less than those trained with \lastlayers when using AdamW and 0.5 bit/parameter less when using Muon.}
\label{fig:pareto-800m-adam-muon}
\end{minipage}\hfill
\begin{minipage}[t]{\wd0}\vspace{0pt}
\raisebox{\dp0}{\usebox0}
\captionof{table}{SFT held-out loss, recovered share of the FP4-to-FP8 gap (Rec.), and mean downstream accuracy of Qwen3-4B-Base on Tulu-3 (mean and standard deviation over three seeds).}
\label{tab:sftgrid}
\end{minipage}
\end{figure}

We compare the mixed-precision training methods by their final test loss in Table~\ref{tab:pretraining-results}. Across all model scales, precision budgets, and candidate precision sets, \methodname achieves lower final test loss than both the \lastlayers heuristic and SNIP~\citep{snip_quant}. These gains are obtained under the same average precision budget. \methodname recovers substantially more of the FP4-to-FP8 loss gap than other methods, up to $81\%$ of the gap. \methodname provides $1.5\times$--$32\times$ the loss improvement of the \lastlayers across the reported settings. 
This shows that allocating precision according to curvature-aware layer sensitivity is consistently more effective compared to other baselines.

The benefit of sensitivity-aware allocation is particularly clear when more than 2 precision levels are available. At the same $6.5$-bit budget, replacing the bitwidth set $\{4,16\}$ with $\{4,8,16\}$ increases the recovered gap of \methodname from $43\%$ to $75\%$ for the 350M model and from $48\%$ to $81\%$ for the 500M model. Figure~\ref{fig:dyn4b} further shows how our method improves with additional \textit{bitwidth choices}. With the same budget of 6.5 bits/parameter, using $\{4, 8,16\}$ as available formats allows to close $79\%$ of gap to the FP8 level, whereas with only $\{4,16\}$ choices it reaches $47\%$. 

\subsection{Supervised Fine-Tuning Experiments}
\label{subsec:sft}
We extend our experiments to supervised fine-tuning (SFT). We fine-tune Qwen3-4B-Base~\citep{qwen3technicalreport} on Tulu-3~\citep{tulu3} with budgets from 5.0 to 6.5 over three random seeds. Table~\ref{tab:sftgrid} reports the held-out loss, the recovered share of the FP4-to-FP8 gap, and the mean accuracy over the downstream benchmarks. \methodname reaches the lowest loss at every budget, with a substantial margin over the baselines. \methodname outperforms the NVIDIA baseline in the downstream tasks average. However, it falls very close to SNIP and within the noise range. Detailed downstream benchmarks are provided in App.~\ref{app:sft}.

\subsection{Performance-memory Trade-off}

Next, we study the trade-off between model quality and precision budget. 
Figure~\ref{fig:pareto-800m-adam-muon} shows test loss improvement over the FP4 baseline 800M model across bitwidths. 
For both AdamW and Muon, \methodname forms an empirical Pareto frontier: at a fixed precision budget, it achieves a larger loss improvement than the \lastlayers, while at a matched loss it requires fewer bits per weight. 
For Adam, our method outperforms \lastlayers strategy while using 1 bit per parameter less, that is 16\% less model size in memory.
With Muon, the corresponding saving is approximately 0.5 bits per weight. 
Altogether, this indicates that the benefit of curvature-aware sensitivity is not restricted to a single precision budget or optimizer.

Since a model can be deployed under various weight and activation formats during inference, we additionally measure the loss--memory trade-off for different \textit{inference formats}. 
Figure~\ref{fig:inference-pareto} shows the evaluation of 800M model checkpoints under several inference configurations, including $W_{\mathrm{mix}}A_{\mathrm{mix}}$, $W_{\mathrm{mix}}A_8$, and $W4A8$. Here $_\text{mix}$ denotes mixed precision across layers.

Across the evaluated configurations, models trained with \methodname again lie on the empirical Pareto frontier: for a given inference-memory budget they achieve lower loss, or equivalently reach a comparable loss with less memory. 
Thus, the advantage of sensitivity-guided precision allocation during training is retained under variable inference formats.

\begin{figure}[t]
    \centering
    \includegraphics[width=0.7\linewidth]{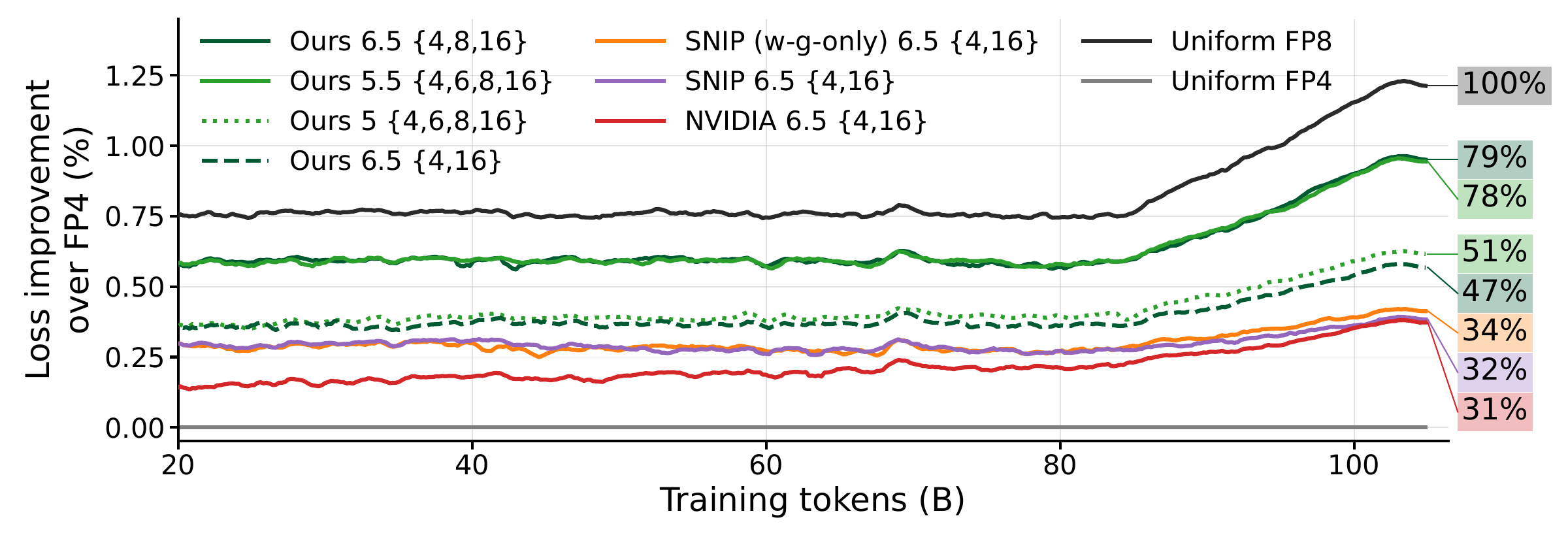}
    \caption{\small Pretraining experiment for a 4B model. Test-loss improvement over uniform FP4; labels give the recovered gap between FP4 and FP8 trainings, realized bits/parameter, and the precision options in parentheses.}
    \label{fig:dyn4b}
\end{figure}

\begin{figure}[t]
    \centering
    \begin{subfigure}[]{0.49\textwidth}
        \centering
    \includegraphics[width=0.9\linewidth]{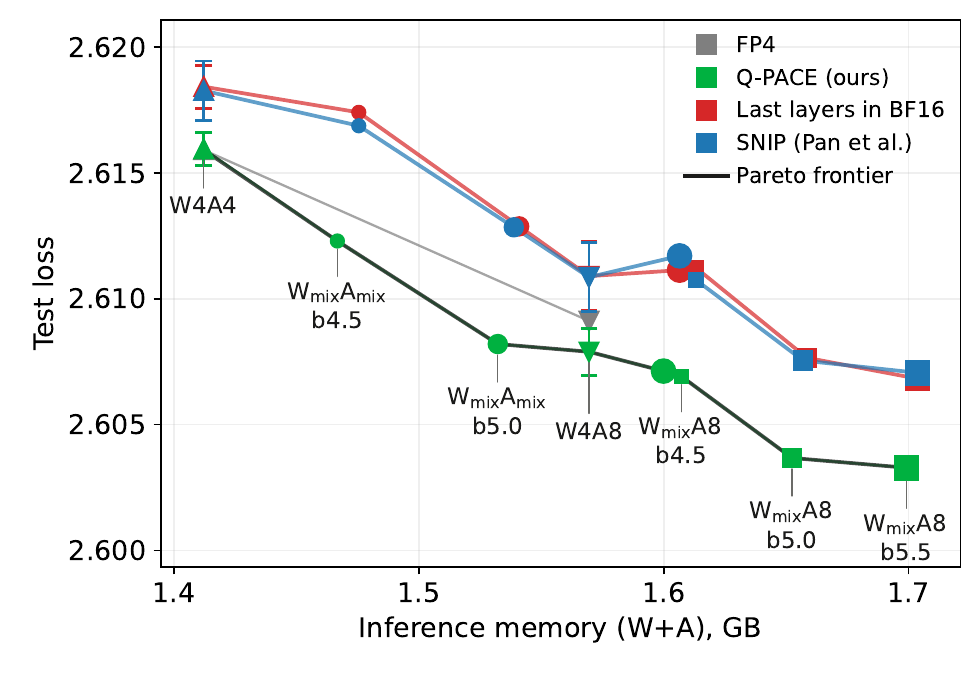}
    \caption{Loss -- inference memory tradeoff for 800M model across different inference formats: $W_{\text{mix}} A_{\text{mix}}$, $W_{\text{mix}} A8$, $W4A8$.}
    \label{fig:inference-pareto}
    \end{subfigure}
    \hfill
    \begin{subfigure}[]{0.49\textwidth}
        \centering
        \includegraphics[width=0.9\linewidth]{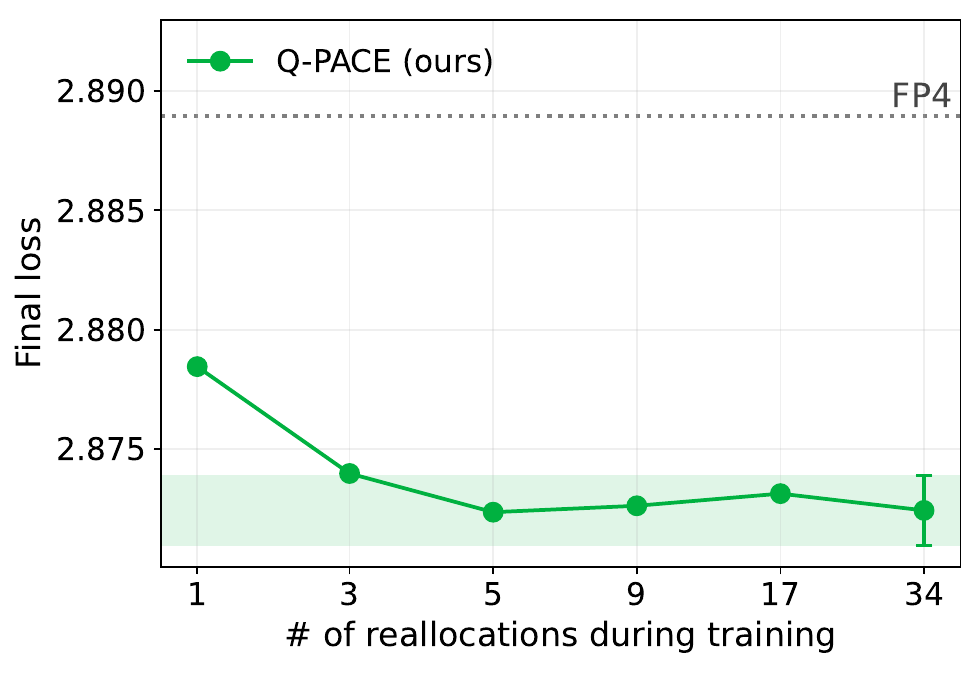}
        \caption{Final loss as a function of number of reallocations during training for the 350M model. }
        \label{fig:frequency-ablation}
    \end{subfigure}

\end{figure}

\section{Analysis}
\label{seq:analysis}
Next we analyze the properties and evolution of bitwidth allocations produced by our method. 
\paragraph{Sensitivity is structured by layer type and depth. } Figure~\ref{fig:4b-allocation} shows the final precision allocation produced by \methodname and SNIP for the 4B model. The allocation is highly non-uniform: the assigned precision depends both on the layer type and its position in the model. 
The depth dependence is particularly pronounced in the final block, where \methodname promotes five of the six quantized linear layers to the full precision.
This observation helps to explain why the \lastlayers heuristic is effective relative to uniform FP4 training: later layers are indeed highly sensitive to quantization.

However, sensitivity is not solely a function of depth. Layers of different types within the one block receive different precisions, while sensitive layers can also occur substantially earlier in the network.
In particular, K and V projection layers receive a substantial fraction of the higher-precision budget under both \methodname and SNIP. At the same time, SNIP does not promote MLP projections at all, while \methodname assigns high precision to MLP projections toward the end of the model.

\paragraph{Sensitivity evolves during training.} Figure~\ref{fig:4b-evolution} shows the fraction of each layer type assigned to BF16 throughout training.
While the number of layers of each type, selected by SNIP, remains nearly constant over time, the allocation produced by \methodname evolves. The largest changes occur between the early and middle stages of training, after which the allocation becomes substantially more stable. K and V projections remain repeatedly promoted to BF16, while the share of promoted Q projections decreases over time, and the share of MLP grows.

Therefore, the sensitivity pattern has both a constant factor, determined by layer type and block number, as well as a time-dependent factor. The former explains why simple static heuristics can provide some benefit, whereas the latter motivates periodic reallocation during training. Additional analysis results can be found
in App.~\ref{app:sensitivity-analysis}

\paragraph{How frequently is reallocation necessary?}
Next, we analyze the impact of the reallocation period $\tau$. Figure~\ref{fig:frequency-ablation} reports final loss while varying the number of reallocations over training. For a 350M model, reallocating more than five times over the full training run provides no additional benefits. 

This result demonstrates that the gain of \methodname is not only due to finding a better \emph{initial} mixed-precision allocation. Adapting the allocation dynamically during training provides an additional benefit. 
Secondly, it shows that sensitivity changes on a much slower timescale than optimization steps, allowing to keep the computational overhead of \methodname recalibration marginal. 
\begin{figure}[t]
    \centering
    \begin{subfigure}[]{0.49\textwidth}
        \centering
    \includegraphics[width=\linewidth]{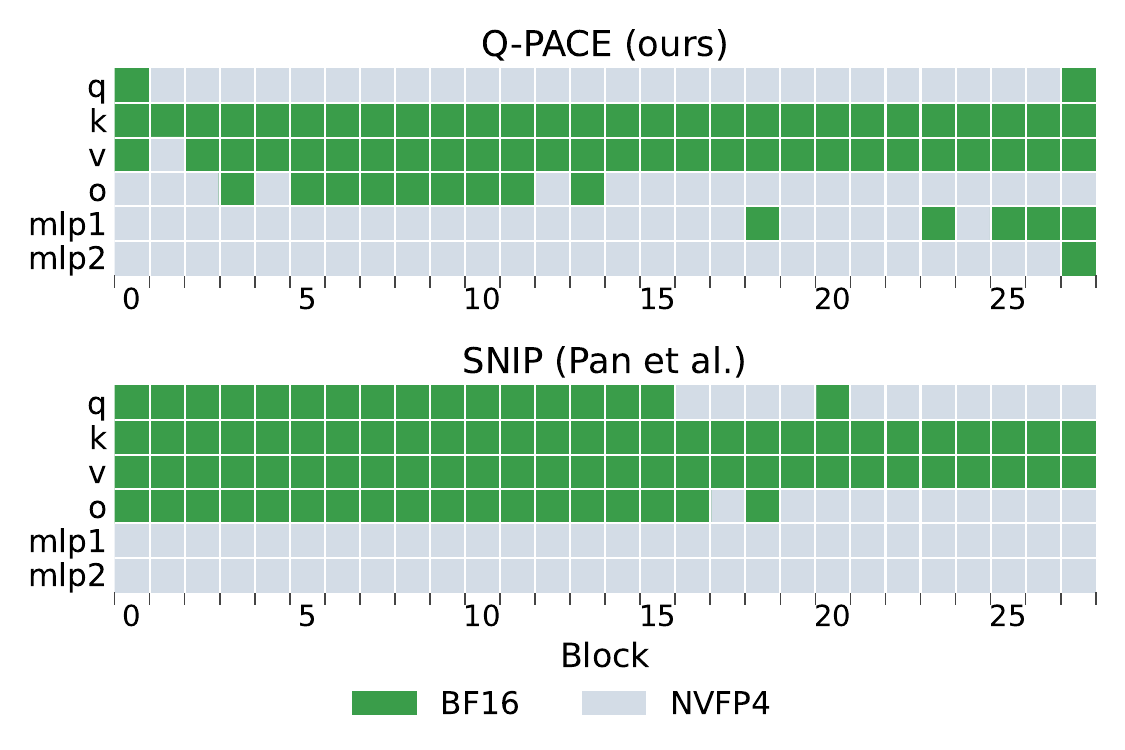}
    \caption{Final format allocation for 4B model at 6.5 bit/param budget for \methodname and SNIP~\citep{snip_quant}. Both method favor K and V projection layers, with \methodname also promoting the MLP in the last blocks.}
    \label{fig:4b-allocation}
    \end{subfigure}
    \hfill
    \begin{subfigure}[]{0.49\textwidth}
        \centering
        \includegraphics[width=\linewidth]{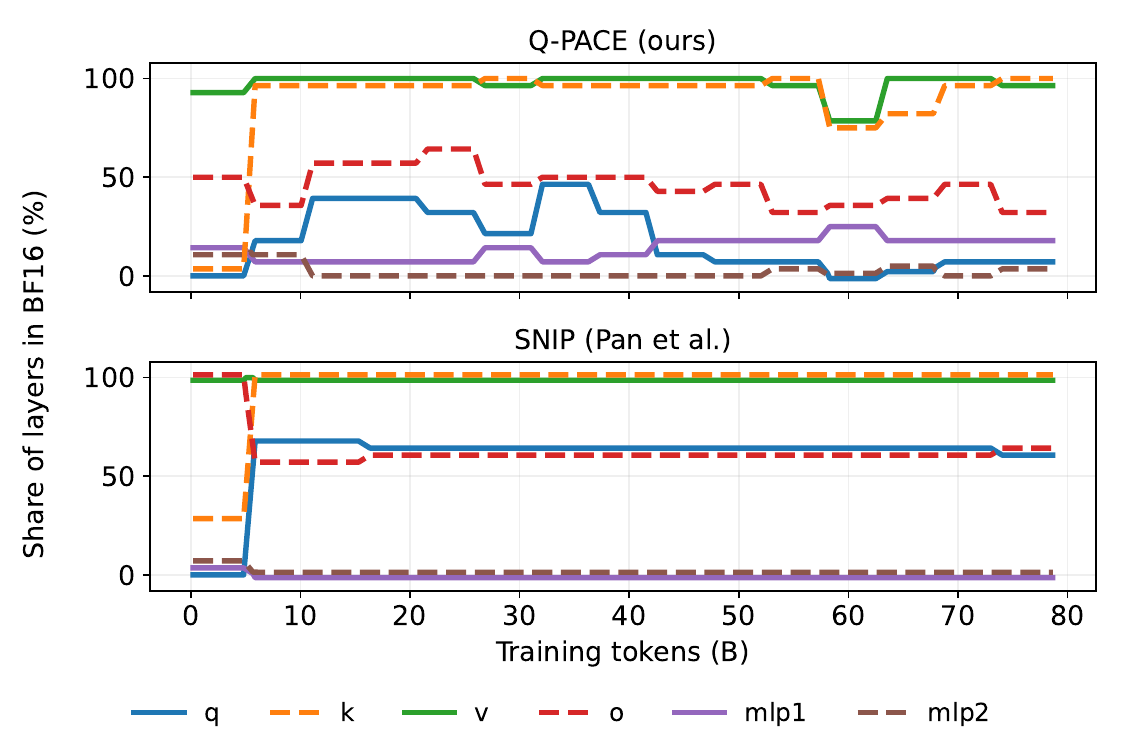}
        \caption{Share of layers promoted to BF16 in 4B model at 6.5 bit/param, by layer type, over training duration.}
        \label{fig:4b-evolution}
    \end{subfigure}

\end{figure}

\section{Related Work}
\label{sec:related}
Post-training quantization (PTQ)~\citep{dettmers2022llmint8,xiao2023smoothquant, frantar2023gptq} has become a standard approach for reducing the memory and inference cost of LLMs.
Most PTQ methods apply the same bitwidth to every layer.

\paragraph{Non-uniform compression.}
A parallel line of work distributes compression non-uniformly across layers.
AMC~\citep{he2018amc} and HAQ~\citep{wang2019haq} used reinforcement learning with hardware feedback for this, while DNAS~\citep{wu2018dnas} relaxed the search into a differentiable one. 
HAWQ~\citep{hawc,dong2020hawqv2,yao2021hawqv3} instead scores layers by second-order information, and selects bitwidths by integer programming.
SPDY~\citep{frantar2022spdy} and OBC~\citep{frantar2022obc} cast the allocation as a dynamic program over per-layer compression levels.
EvoPress~\citep{sieberling2025evopress} showed that such additive (``error-monotone'') models can fail, and replaced them with an evolutionary search.
For LLMs, mixed-precision schemes range from keeping salient weights or groups in higher precision~\citep{dettmers2024spqr,kim2024squeezellm,huang2025slimllm}, to layer-wise allocation driven by per-layer sensitivity estimates and knapsack solvers~\citep{malinovskii2025higgs,helcig2026statisticallylosslessquantizationlargelanguage}, or even more complex manifold optimization~\citep{helcig2026rco}.
Other work accounts for interactions between layers~\citep{schaefer2023interlayer} or for perturbations~\citep{hu2025pqi}.
The closest work to ours is HIGGS~\citep{malinovskii2025higgs}, whose linearity theorem underlies our sensitivity model.
\methodname differs from it in three main ways: 
1. HIGGS allocates bitwidths once; we allocate \emph{during} training;
2. HIGGS compresses only the weights, whereas we compress weights, activations, and gradient multiplications.
3., We investigate hardware-supported microscaling formats (NVFP4, MXFP6, MXFP8, BF16) and use an analytic $4^{-b}$ distortion model, so no per-layer, per-format errors are needed.

\paragraph{Mixed precision during training.}
Non-uniform precision \emph{during} training has received less attention.
Low-precision training recipes typically keep a hand-picked set of components in higher precision, e.g.  DeepSeek-V3 keeps the embeddings, output head, MoE gating, normalization, and attention in higher precision~\citep{deepseekv3}, and NVFP4 pretraining keeps the last blocks in BF16~\citep{nvidia2026pretraininglargelanguagemodels}.
Earlier work on convolutional networks learned per-layer bitwidths jointly with the weights during QAT~\citep{wu2018dnas,uhlich2020mixed,vanbaalen2020bayesianbits}.
However, these methods target the precision of the final inference model whereas we periodically re-allocate precision during training under a fixed global budget.
The work closest to ours is SNIP~\citep{snip_quant}, which periodically re-assigns per-layer precision during LLM pretraining by solving a knapsack problem.
Its scores combine a first-order Taylor estimate of the forward loss change with an optimizer-dependent estimate of the divergence in the weight updates.
\methodname replaces these with a second-order, loss-based sensitivity model, estimated from forward passes alone.
We show that this yields substantially better accuracy.

\section{Conclusion}
We propose \methodname, a new method for adaptive mixed-precision LLM training, which probes linear layers and estimates their sensitivity. This information is then used for periodic reassignment of precision across layers during training. With no memory overhead, our method consistently improves over existing baselines and allows for substantial memory savings at a matched performance. Across pretraining experiments from 350M to 4B parameters and supervised fine-tuning of Qwen3-4B, it outperforms other methods and recovers up to $81\%$ of the NVFP4-to-MXFP8 loss gap at 6.5 bit/parameter. 

Our study is limited to a restricted range of model scales and precision formats, and primarily considers settings where the same layer-wise precision is used in the forward and backward passes. Extending dynamic allocation to larger models, broader format choices, and independently optimized forward and backward precisions are promising directions for future work.

\subsection*{Acknowledgments}

We would like to thank our contacts at Verda, Paul Chang and Antonio Dominguez, for hardware support that was essential to this project. 

The computational results have been achieved using the Austrian Scientific Computing (ASC) infrastructure. This research was supported by the Scientific Service Units (SSU) of IST Austria through resources provided by Scientific Computing (SciComp). 

 We thank NVIDIA and Google corporation for their grants, which supported part of this research. This research was supported by the ISTA Responsible AI Program, made possible through the generous support of Garrett Camp and the Camp Foundation.

\subsection*{AI use statement}

We used ChatGPT and Claude to assist in organizing and editing the manuscript, as well as verifying mathematical reasoning and our interpretation of the results. Additionally, we employed Claude Code to visualize our experimental results. We have thoroughly reviewed and verified the final manuscript, including all citations, experimental details, and AI-assisted edits, and we take full responsibility for its content.

\subsection*{Reproducibility Statement}

We provide detailed information intended to support reproduction of our results. The model architectures, training configurations, datasets, optimizer settings, learning-rate schedules, batch sizes, context lengths, and recalibration frequencies are described in Appendix~\ref{app:training-setup}. The numerical formats and implementation details for NVFP4, MXFP6, MXFP8, and BF16 training, including the hardware used, are specified in Appendix~\ref{app:training-setup}. The Q-PACE procedure is given explicitly in Algorithm 1, including the sensitivity-estimation and precision-allocation steps, and we report the calibration and reallocation hyperparameters used in our experiments. Appendix~\ref{app:ablation-m-D} studies the sensitivity of the method to its main calibration hyperparameters, while the remaining appendices provide additional  ablations, analyses, and downstream results. All reported experimental results are obtained from the configurations described in the paper.

\bibliographystyle{iclr2027_conference}
\bibliography{iclr2027_conference}

\clearpage
\appendix

\section{Modeling loss under quantization noise}
\label{app:loss-approximation}
\begin{lemma}[Expected loss under zero-mean quantization noise]
\label{lem:loss-approximation}

Let $i=1, \dots , n$ index model layers and $W=(W_1,\ldots,W_n)$ denote the set of full-precision model parameters. 
Let $w_i=\operatorname{vec}(W_i)\in\mathbb{R}^{d_i}$ be vectorized form of $W_i$.
Assume that in a neighborhood
of $W$, $L(W)$ is twice continuously differentiable and that its Hessian is Lipschitz.

Let $\delta w_i:= \operatorname{vec}(\delta W_i)$ be random perturbations with finite second and third moments satisfying 
\begin{equation}
       \mathbb{E}[\delta w_i]=0,
    \qquad
    \mathbb{E}[\delta w_i\delta w_i^\top]=\Sigma_i ,
    \label{eq:perturbation-covariance}
\end{equation}
and assume that perturbations of different layers are uncorrelated,
\begin{equation}
    \mathbb{E}[\delta w_i\delta w_j^\top]=0,
    \qquad i\neq j.
    \label{eq:cross-layer-uncorrelated}
\end{equation}
Then, for sufficiently small perturbations,
\begin{equation}
\label{eq:general-loss-approximation}
    \mathbb{E}\!\left[L(W+\delta W)-L(W)\right]
    =
    \frac{1}{2}
    \sum_{i=1}^n
    \Tr\!\left(H_{ii}\Sigma_i\right)
    +
    O\!\left(\mathbb{E}\|\delta W\|^3\right),
\end{equation}
where $\displaystyle  H_{ij} :=    \frac{\partial^2 L(W)}{\partial w_i\,\partial w_j} \in \mathbb{R}^{d_i\times d_j}$ denotes the $(i,j)$ block of the Hessian.
\end{lemma}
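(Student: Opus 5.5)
The plan is a second-order Taylor expansion with integral remainder around $W$, followed by taking expectations term by term. Stack the layer perturbations into a single vector $\delta w=(\delta w_1,\ldots,\delta w_n)\in\mathbb{R}^{D}$ with $D=\sum_i d_i$, write $g=\nabla L(W)$, and let $H=(H_{ij})$ be the full block Hessian. Since $L$ is $C^2$ with $M$-Lipschitz Hessian on a ball around $W$, for every $\delta w$ in that ball Taylor's theorem gives
\begin{equation*}
L(W+\delta W)-L(W)=g^\top\delta w+\tfrac12\,\delta w^\top H\,\delta w+R(\delta w),
\qquad |R(\delta w)|\le \tfrac{M}{6}\,\|\delta w\|^3 .
\end{equation*}
To obtain the remainder bound, write $R(\delta w)=\int_0^1(1-t)\,\delta w^\top\bigl(\nabla^2L(W+t\,\delta w)-H\bigr)\delta w\,dt$ and apply the Lipschitz bound $\|\nabla^2L(W+t\delta w)-H\|\le Mt\|\delta w\|$. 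This gives $|R|\le M\|\delta w\|^3\int_0^1(1-t)t\,dt=\tfrac{M}{6}\|\delta w\|^3$.

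Next, take expectations of the three terms. The linear term vanishes because $\mathbb{E}[\delta w_i]=0$ for every $i$, so $\mathbb{E}[g^\top\delta w]=\sum_i g_i^\top\mathbb{E}[\delta w_i]=0$. For the quadratic term, use $\delta w^\top H\delta w=\Tr(H\,\delta w\,\delta w^\top)$ and linearity of trace and expectation:
\begin{equation*}
\mathbb{E}[\delta w^\top H\delta w]=\sum_{i,j}\Tr\bigl(H_{ij}\,\mathbb{E}[\delta w_j\delta w_i^\top]\bigr)=\sum_i\Tr(H_{ii}\Sigma_i).
\end{equation*}
The off-diagonal blocks drop out by the uncorrelatedness assumption \eqref{eq:cross-layer-uncorrelated}, and the diagonal blocks give $\Sigma_i$ by \eqref{eq:perturbation-covariance}. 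Finiteness of the second moments makes every expectation here well defined. The remainder satisfies $|\mathbb{E}R|\le \tfrac{M}{6}\mathbb{E}\|\delta w\|^3$, which is $O(\mathbb{E}\|\delta W\|^3)$ because the third moments are finite and $\|\delta w\|=\|\delta W\|_F$.

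The main obstacle is making \emph{for sufficiently small perturbations} rigorous. The Taylor bound only holds when $W+\delta W$ lies in the neighborhood where the regularity assumptions hold, but the expectation integrates over the whole support of $\delta w$. I would handle this first by interpreting the hypothesis as $\delta w$ being almost surely bounded by the radius $r$ of that neighborhood, which is natural for quantization error since it is bounded by the grid spacing. Under that reading the pointwise bound holds almost surely and the argument above goes through directly.

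For unbounded noise such as the Gaussian probes used in calibration, I would split the expectation on the events $\{\|\delta w\|\le r\}$ and $\{\|\delta w\|>r\}$. On the large event I would need an additional global growth assumption on $L$. With such an assumption, Markov's inequality bounds that tail contribution by a multiple of $\mathbb{E}\|\delta w\|^3/r^3$, so it is again $O(\mathbb{E}\|\delta W\|^3)$. I would state this as a remark rather than weaken the lemma. Finally, I would note that substituting $\Sigma_i=\sigma_i^2 I$ recovers Eq.~\eqref{eq:MSE}.
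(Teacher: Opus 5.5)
Your proof is correct and follows the paper's route: a second-order Taylor expansion with a cubic remainder controlled by the Lipschitz Hessian, the linear term eliminated by $\mathbb{E}[\delta w_i]=0$, and the quadratic term reduced via $\mathbb{E}[\delta w_i^\top H_{ij}\delta w_j]=\Tr\bigl(H_{ij}\,\mathbb{E}[\delta w_j\delta w_i^\top]\bigr)$ to the diagonal blocks $\Tr(H_{ii}\Sigma_i)$. Your explicit integral-remainder constant $M/6$ and your reading of ``sufficiently small perturbations'' as almost-sure containment of $W+\delta W$ in the regularity ball make precise what the paper leaves implicit when it asserts $|R_3(\delta W)|\le C\|\delta W\|^3$ without restricting to that neighborhood.
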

\begin{proof}
A second-order Taylor expansion of $L$ around $W$ gives
\begin{equation}
\begin{aligned}
    L(W+\delta W)-L(W)
    &=
    \sum_i
    \left\langle \nabla_{w_i}L(W),\delta w_i\right\rangle +
    \frac{1}{2}
    \sum_{i,j}
    \delta w_i^\top H_{ij}\delta w_j
    +
    R_3(\delta W).
\end{aligned}
\label{eq:taylor-general}
\end{equation}
Since the Hessian is locally Lipschitz, there exists
a constant $C>0$, independent of $\delta W$, such that
$    |R_3(\delta W)|
    \leq C\|\delta W\|^3.
$
Consequently,
\begin{equation}
    \left|\mathbb{E}[R_3(\delta W)]\right|
    \leq
    C\,\mathbb{E}\|\delta W\|^3.
\end{equation}
Taking expectations in Eq.~\eqref{eq:taylor-general}, the first-order
term vanishes since $\mathbb{E}[\delta w_i]=0$. 
By Eq.~\eqref{eq:cross-layer-uncorrelated} second order term gives
\begin{equation}
    \mathbb{E}\!\left[
        \delta w_i^\top H_{ij}\delta w_j
    \right]
    = \Tr\!\left(
        H_{ij}
        \mathbb{E}[\delta w_j\delta w_i^\top]
    \right) = \begin{cases}
    \Tr( H_{ii} \Sigma_i ) \text{ for } i = j,\\ 
 0  \text{ for }  i \neq j.
\end{cases}
\end{equation}
Substituting these identities into Eq.~\eqref{eq:taylor-general}
gives Eq.~\eqref{eq:general-loss-approximation}.
\end{proof}

Next we consider two specific noise models.
\paragraph{Case 1: Isotropic quantization noise.}
Suppose that the quantization perturbation of layer $i$ is modeled as
isotropic zero-mean noise,
\begin{equation}
    \mathbb{E}[\delta w_i]=0,
    \qquad
    \mathbb{E}[\delta w_i\delta w_i^\top]
    = \Sigma_i =
  \sigma_i^2   \operatorname{I} .
\end{equation}
Then Lemma~\ref{lem:loss-approximation} gives
\begin{equation}
\label{eq:isotropic-loss-approximation}
    \mathbb{E}\!\left[L(W+\delta W)-L(W)\right]
    =
    \frac{1}{2}
    \sum_i
    \Tr(H_{ii})\,\sigma_i^2
    +
    O\!\left(\mathbb{E}\|\delta W\|^3\right).
\end{equation}
By definition,
\begin{equation}
\MSE(W_i+\delta W_i,W_i)
    :=
    \frac{1}{d_i}
    \mathbb{E}\|\delta w_i\|_2^2
    =
    \frac{1}{d_i}\Tr(\sigma_i^2 I)
    =
    \sigma_i^2.
\end{equation}
Therefore, defining $  \displaystyle \alpha_i^{\mathrm{iso}}
    :=    \frac{1}{2}\Tr(H_{ii})$
we obtain
\begin{equation}
\label{eq:isotropic-mse-loss}
    \mathbb{E}\!\left[L(W+\delta W)-L(W)\right]
    =
    \sum_i
    \alpha_i^{\mathrm{iso}}\,
    \operatorname{MSE}(W_i+\delta W_i,W_i)
    +
    O\!\left(\mathbb{E}\|\delta W\|^3\right).
\end{equation}

\paragraph{Case 2: Floating-point quantization noise.}
For floating-point formats, the magnitude of the quantization error
is proportional to the magnitude of the weight.
We therefore consider the multiplicative perturbation model
\begin{equation}
\label{eq:multiplicative-noise}
    \delta w_i
    =
    \operatorname{diag}(w_i)\nu_i
    =
    w_i\odot\nu_i,
\end{equation}
where the entries of $\nu_i$ are i.i.d. with
$\mathbb{E}[\nu_i]=0,
    \quad
    \mathbb{E}[\nu_i\nu_i^\top]
    =
    \sigma_i^2 I $, similarly to \cite{chen2026wush}.

The covariance of the resulting perturbation is
\begin{align}
    \Sigma_i
    &=
    \mathbb{E}[\delta w_i\delta w_i^\top]
=    \operatorname{diag}(w_i)
    \mathbb{E}[\nu_i\nu_i^\top]
    \operatorname{diag}(w_i)
=    \sigma_i^2\operatorname{diag}(w_i^2).
\end{align}
Lemma~\ref{lem:loss-approximation} yields
\begin{equation}
\label{eq:multiplicative-loss-approximation}
    \mathbb{E}\!\left[L(W+\delta W)-L(W)\right]
    =
    \frac{1}{2}
    \sum_i
    \sigma_i^2
    \Tr\!\left(
        H_{ii}\operatorname{diag}(w_i^2)
    \right)
    +
    O\!\left(\mathbb{E}\|\delta W\|^3\right).
\end{equation}
The mean squared error is $ \displaystyle
    \MSE(W_i+\delta W_i,W_i)
=    \frac{1}{d_i}
    \mathbb{E}\|\delta w_i\|_2^2=    \frac{\sigma_i^2}{d_i}\|w_i\|_2^2$.

Thus, assuming $\|w_i\|_2>0$, we define $\displaystyle
    \alpha_i^{\mathrm{FP}}
    :=
    \frac{d_i}{2\|w_i\|_2^2}
    \Tr\!\left(
        H_{ii}\operatorname{diag}(w_i^2)
    \right)$ ,
and in order to again express the expected loss change as
\begin{equation}
\label{eq:multiplicative-mse-loss}
    \mathbb{E}\!\left[L(W+\delta W)-L(W)\right]
    =
    \sum_i
    \alpha_i^{\mathrm{FP}}\,
    \MSE(W_i+\delta W_i,W_i)
    +
    O\!\left(\mathbb{E}\|\delta W\|^3\right).
\end{equation}
Notice that
$    \Tr\!\left(
        H_{ii}\operatorname{diag}(w_i^2)
    \right)
    =
    \sum_{k=1}^{d_i}
    (H_{ii})_{kk}w_{i,k}^2,
$
so the corresponding sensitivity $\alpha_i^{\mathrm{FP}}$ is diagonal curvature weighted by squared magnitude of the weights.

\paragraph{Effect of error normalization.}
For the floating-point quantization noise, alternatively, one could define 
\begin{equation}
    \displaystyle \varepsilon_i :=\frac{\mathbb{E}||\delta w_i||^2}{||w_i||^2}, \; \alpha_i := d_i \Tr (H_{ii} \operatorname{diag} (w_i^2))
\end{equation}
 and obtain an expression similar to Eq.~\ref{eq:multiplicative-mse-loss}: $     \mathbb{E}\!\left[L(W+\delta W)-L(W)\right]
    \approx    \sum_i    \alpha_i^{}\, \varepsilon_i$.
We additionally investigated whether this alternative choice of normalization affects the performance of our method. 

In particular, we compared both the weight MSE, $\varepsilon_i = \displaystyle
    \frac{\|\delta W_i\|_F^2}{d_i}$,
and the relative squared error $\displaystyle \varepsilon'_i = \frac{\|\delta W_i\|_F^2}{\|W_i\|_F^2}$ following~\cite{malinovskii2025higgs}. In both cases $\alpha_i$ were computed folowing the procedure in Alg.~\ref{alg:method}. 

We find that the relative normalization $\varepsilon_i'$ gives consistently better final models for models trained with Muon, see Table~\ref{tab:error-normalization}. For AdamW the two choices are within noise for both 350M and 800M models. 

\begin{table}[t]
\centering
\small
\begin{tabular}{llccrccr}
\toprule
 & & \multicolumn{3}{c}{AdamW} & \multicolumn{3}{c}{Muon} \\
\cmidrule(lr){3-5}\cmidrule(lr){6-8}
Model & Budget & $\varepsilon = \MSE$ & $\varepsilon' = \frac{\|\delta W_i\|_F^2}{\|W_i\|_F^2}$ & $\Delta$ & $\varepsilon = \MSE$ & $\varepsilon' = \frac{\|\delta W_i\|_F^2}{\|W_i\|_F^2}$ & $\Delta$ \\
\midrule
350M         & 6.0 & \textbf{2.8598}{\scriptsize$\pm$.0014} & 2.8600{\scriptsize$\pm$.0003} & $+0.0002$
                     & 2.8434{\scriptsize$\pm$.0018} & \textbf{2.8392}{\scriptsize$\pm$.0014} & $-0.0041$ \\
                     \midrule
800M         & 5.0 & \textbf{2.6122} & 2.6124 & $+0.0002$ & 2.6105 & \textbf{2.6090} & $-0.0015$ \\
             & 6.0 & \textbf{2.6071} & 2.6087 & $+0.0016$ & 2.6080 & \textbf{2.6022} & $-0.0058$ \\
             \bottomrule
\end{tabular}
\caption{ Final test loss of \methodname mixed-precision runs with the error measured as absolute MSE vs.\ relative to the weight norm;
$\Delta = L(\varepsilon') - L(\varepsilon)$ . }
\label{tab:error-normalization}
\end{table}

Throughout the paper, we use MSE as the general notation for the error term in loss approximation and note that Muon benefits from normalizing by the squared weight norm.

\section{MSE Estimation: Analytic vs Measured}
\label{app:mse-estimation}
The estimate $\hatMSE$ can be obtained empirically by measuring the statistics of the actual weight tensors' quantization error. 
In practice, however, we rely on the efficient analytic approximation,
$\hatMSE(Q(W_i), W_i)\propto 4^{-b_i}$, based on the following analysis. 

For any bitwidth $b$ (except \text{BF16}, where no quantization is required), 
assume quantization with a signed uniform $b$-bit grid. Then, the quantization 
error of a value $x\in[-\xi,\xi)$ is 
\begin{equation}
\delta x=q(x)-x \qquad\text{for} 
\qquad q(x)=s\,\operatorname{clip}\bigl(\operatorname{round}(x/s),-2^{b-1},2^{b-1}-1\bigr),\label{eq:quant}
\end{equation}
with scale parameter $s=2^{-(b-1)}\xi$, \ie there are $2^b$ quantization intervals of size $s$. 
Assuming that values are distributed approximately uniformly within each quantization interval, the 
rounding error has variance $s^2/12$~\citep{6773026}.
Consequently, a natural estimate for the $\MSE$ is 
\begin{equation}
\mathbb{E}\|\delta X\|^2 \propto 4^{-b},
\label{eq:highres}
\end{equation}
recovering the expressions of \emph{high-resolution quantization}~\citep{gish1968asymptotically}. 
Note that the exact constants of proportionality are not important in our application, since they 
will be absorbed automatically in the estimate of $\alpha_i$. 

Employing a single error curve for NVFP4, MXFP6, MXFP8, and BF16 format constitutes a simplifying assumption, 
as these formats differ in element grid, block size, and scale representation. 
However, preliminary experiments using the measured relative weight error for each tier did not improve 
the allocation over Eq.~\eqref{eq:highres}, so we adopt the modeled curve throughout this study.%

\begin{table}[h]
\centering
\begin{tabular}{llccc}
\toprule
Model & Budget & Error & Final test loss & Improvement over FP4 \\
\midrule
\multirow{2}{*}{350M AdamW} & \multirow{2}{*}{6.33} & Analytic & $2.8598_{\pm 0.0014}$ & $0.527_{\pm 0.049}\,\%$ \\
                            &                       & Measured & $2.8598_{\pm 0.0017}$ & $0.527_{\pm 0.060}\,\%$ \\
\midrule
\multirow{2}{*}{800M AdamW} & \multirow{2}{*}{5.39} & Analytic & $\mathbf{2.6081}$ & $\mathbf{0.298}\,\%$ \\
                            &                       & Measured & $2.6095$ & $0.242\,\%$ \\
\bottomrule
\end{tabular}
\caption{Q-PACE with the analytic vs the measured quantization error for MSE estimation in Eq.~\eqref{eq:higgs} model.}
\label{tab:analytic_vs_measured}
\end{table}

\section{On loss linearity vs. forward-pass error}
\label{app:losslinear}

Table~\ref{tab:linearity} extends the analysis presented in Figure~\ref{fig:nmse-vs-loss} to all budget cells. Restricting the mixed tiers to the forward pass has minimal impact on the recovered share for any method, and \methodname achieves the highest performance in every cell.
Figure~\ref{fig:err-vs-loss} evaluates each model's final allocation using the sensitivity-weighted weight error $\sum_l \alpha_l e_l$ rather than the activation NMSE. The sensitivities $\alpha_l$ are refitted on the model's final weights to reflect the loss increase under weight noise.
Across both experimental settings and model sizes, the loss increase resulting from quantization, $L-L_{\mathrm{bf16}}$, exhibits a linear relationship with this error. This holds whether $e_l$ is modeled as $4^{-b_l}$ or measured directly on the quantized weights ($r\ge0.97$).
When using the modeled error, the fitted slope approaches one. In contrast, with the measured error, the slope is approximately one half.

\begin{figure}[p]
\centering
\begin{subfigure}{\linewidth}\centering
\includegraphics[width=0.7\linewidth,trim=0 55 0 0,clip]{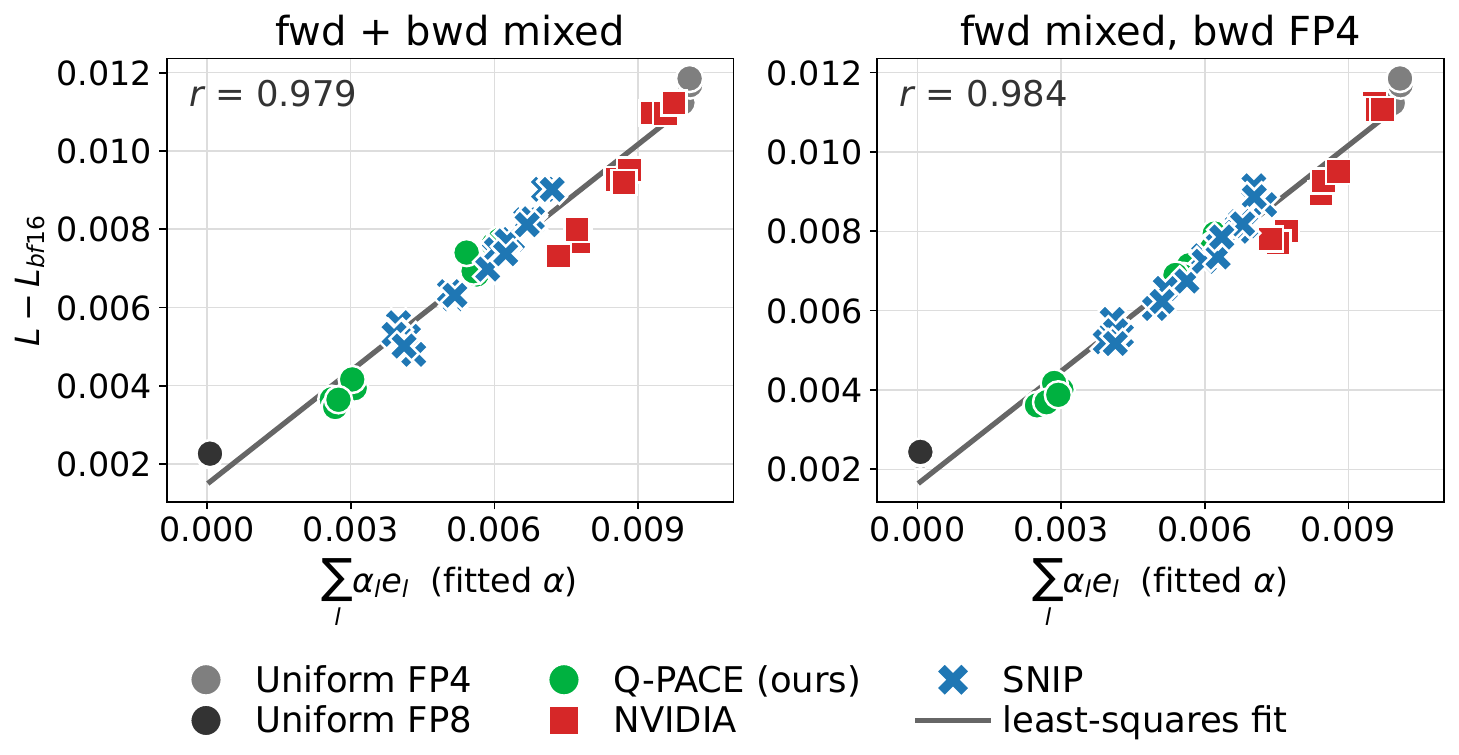}
\caption{350M, analytical error $e_l=4^{-b_l}$.}\label{fig:err-vs-loss-350m}
\end{subfigure}\\[1pt]
\begin{subfigure}{\linewidth}\centering
\includegraphics[width=0.7\linewidth,trim=0 55 0 0,clip]{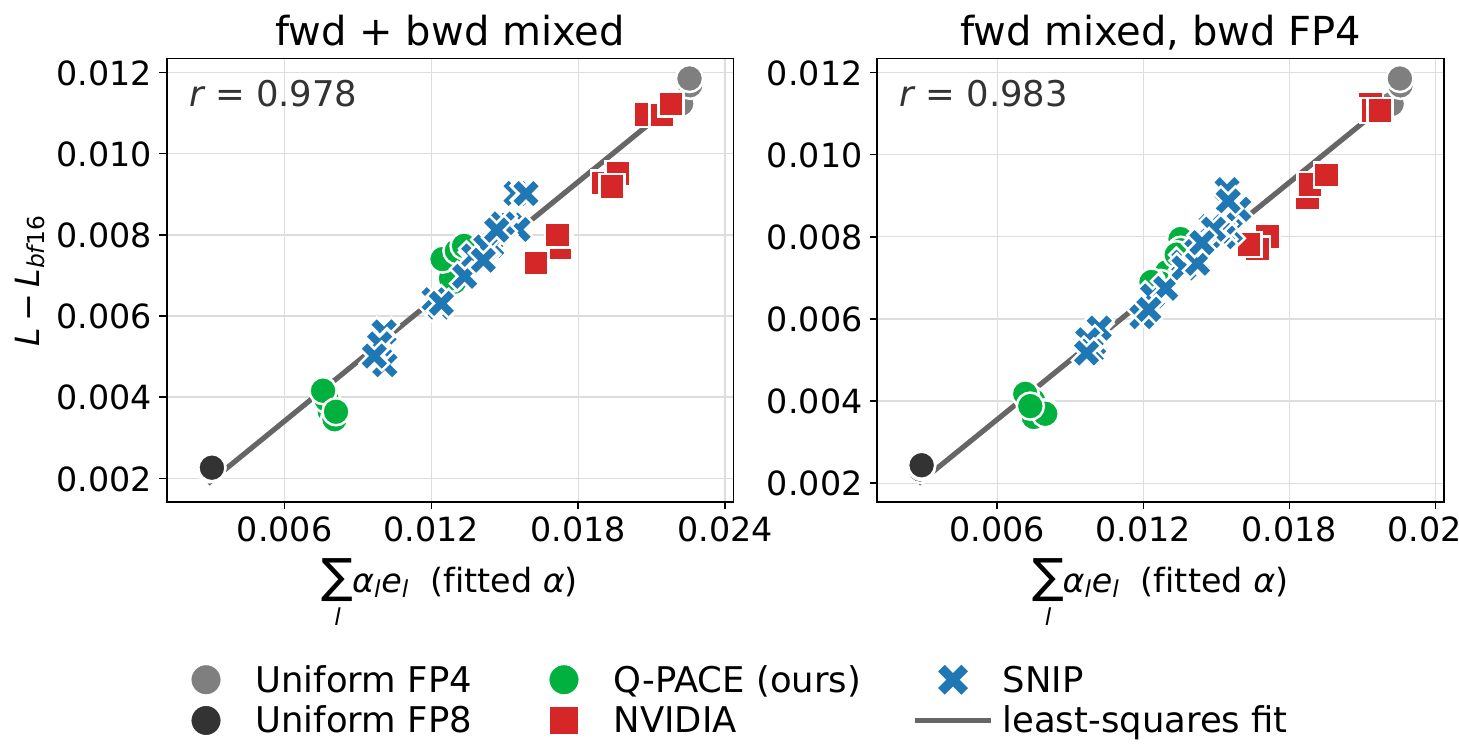}
\caption{350M, measured error $e_l(b_l)$.}\label{fig:err-vs-loss-350m-meas}
\end{subfigure}\\[1pt]
\begin{subfigure}{\linewidth}\centering
\includegraphics[width=0.7\linewidth,trim=0 55 0 0,clip]{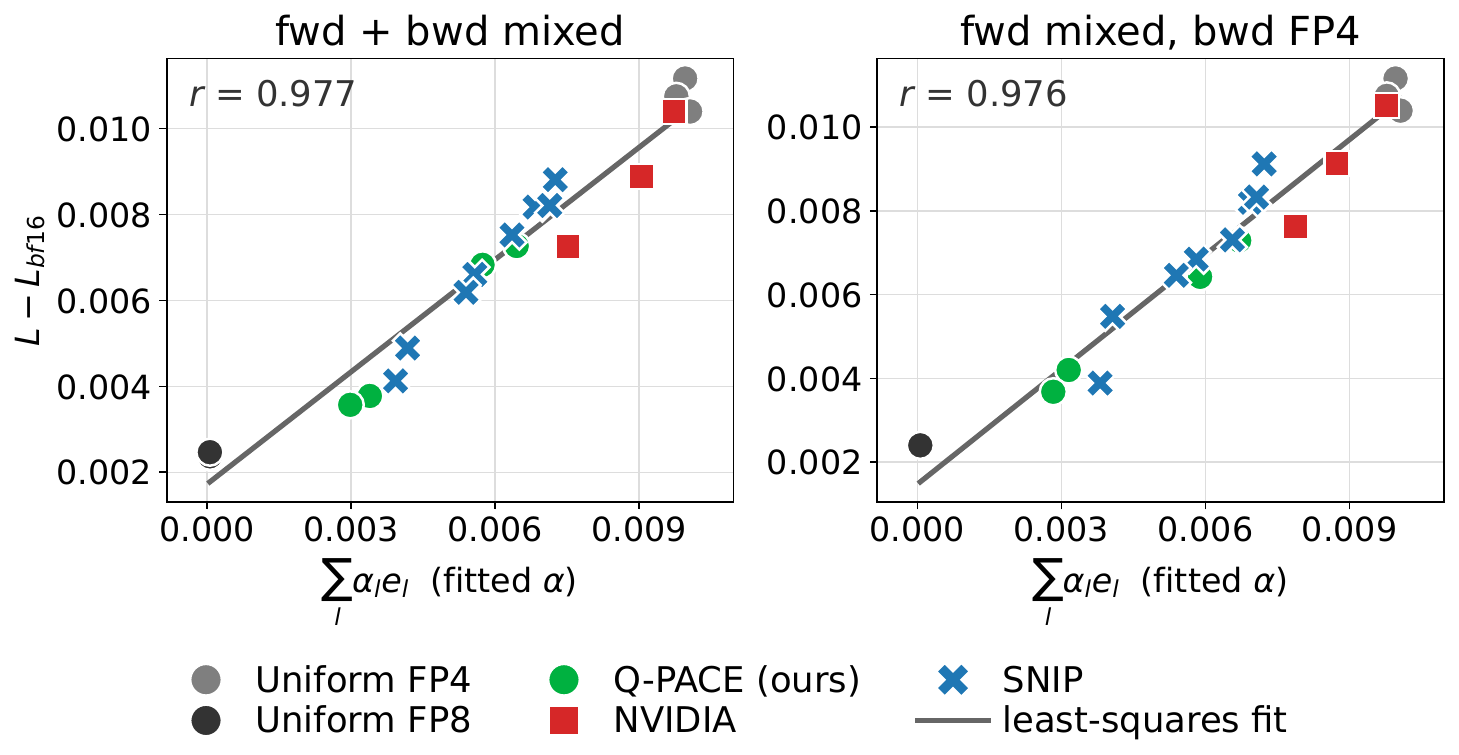}
\caption{500M, analytical error $e_l=4^{-b_l}$.}\label{fig:err-vs-loss-500m}
\end{subfigure}\\[1pt]
\begin{subfigure}{\linewidth}\centering
\includegraphics[width=0.7\linewidth,trim=0 55 0 0,clip]{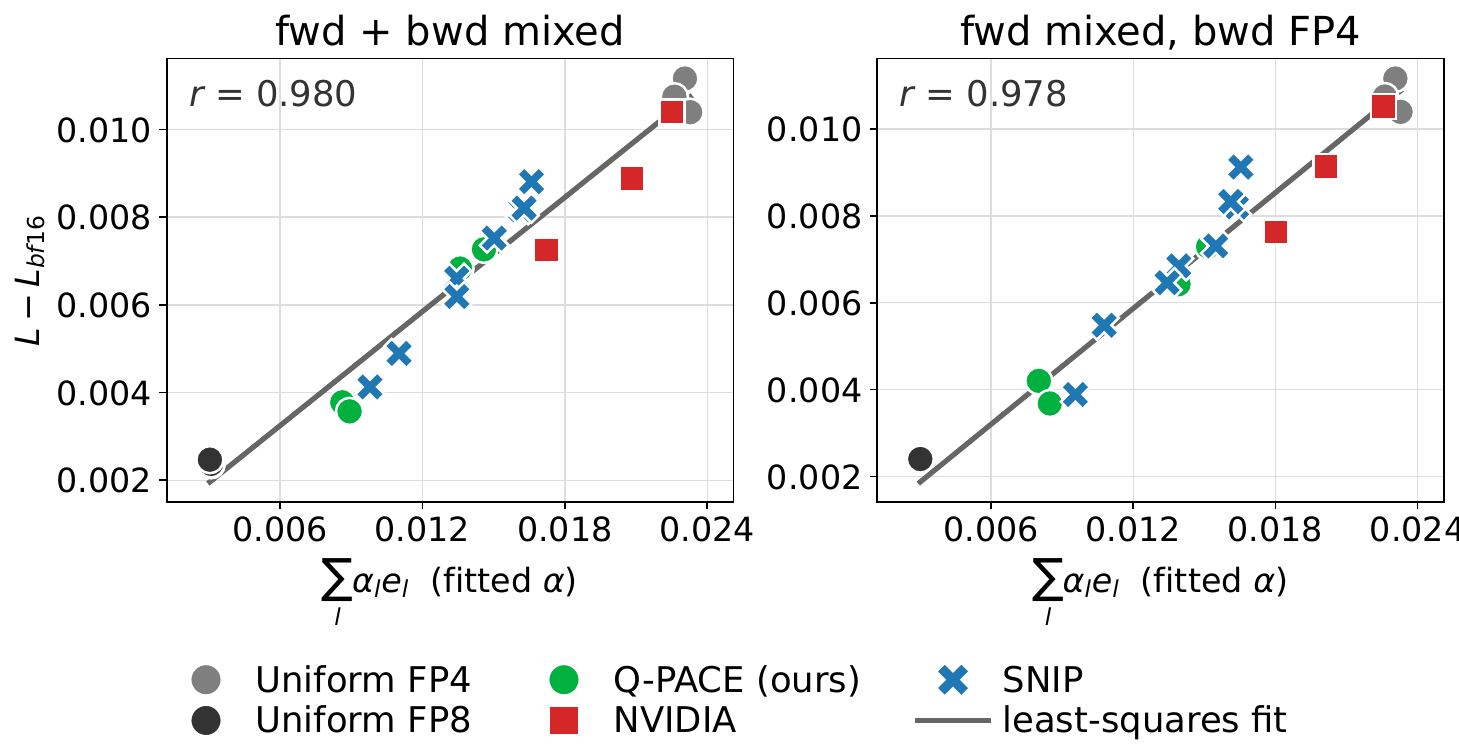}
\caption{500M, measured error $e_l(b_l)$.}\label{fig:err-vs-loss-500m-meas}
\end{subfigure}\\[1pt]
\includegraphics[width=0.7\linewidth,trim=0 0 0 307.5,clip]{figures/higgs_loss_500m_measured.pdf}
\caption{Loss increase from quantization, $L-L_{\mathrm{bf16}}$ (test loss minus that of the same weights in BF16), against the sensitivity-weighted weight error $\sum_l \alpha_l e_l$ of each model's final allocation, with $\alpha_l$ refitted on its final weights. Left: fwd + bwd mixed; right: fwd mixed, bwd FP4.}
\label{fig:err-vs-loss}
\end{figure}

\begin{table}[htbp]
\centering\scriptsize
\caption{Pretraining loss and recovered share of the NVFP4-to-MXFP8 gap (rec.) per budget cell when the tiers set both passes and when they set the forward only (FP4 backward everywhere; each setting against its own MXFP8 run).}
\label{tab:linearity}
\begin{tabular}{@{}llcccc@{}}
\toprule
 & & \multicolumn{2}{c}{fwd mixed + bwd mixed} & \multicolumn{2}{c}{fwd mixed, bwd fp4} \\
\cmidrule(lr){3-4}\cmidrule(lr){5-6}
Budget & Method & loss & rec. & loss & rec. \\
\midrule
\multicolumn{6}{l}{\textit{350M: uniform NVFP4 2.8864; uniform MXFP8 2.8480 (fwd mixed + bwd mixed), 2.8545 (fwd mixed, bwd fp4)}} \\
5 $\{4,6,8,16\}$ & \methodname & $\mathbf{2.8677}$ & $\mathbf{49\%}$ & $\mathbf{2.8703}$ & $\mathbf{50\%}$ \\
 & SNIP & 2.8724 & 37\% & 2.8735 & 41\% \\
 & SNIP (w-g-only) & 2.8700 & 43\% & 2.8728 & 43\% \\
 & NVIDIA & 2.8839 & 6\% & 2.8848 & 5\% \\
\cmidrule(lr){1-6}
5.5 $\{4,6,8,16\}$ & \methodname & $\mathbf{2.8581}$ & $\mathbf{74\%}$ & $\mathbf{2.8621}$ & $\mathbf{76\%}$ \\
 & SNIP & 2.8668 & 51\% & 2.8689 & 55\% \\
 & SNIP (w-g-only) & 2.8604 & 68\% & 2.8643 & 69\% \\
 & NVIDIA & 2.8805 & 15\% & 2.8818 & 15\% \\
\cmidrule(lr){1-6}
6.5 $\{4,8,16\}$ & \methodname & $\mathbf{2.8575}$ & $\mathbf{75\%}$ & $\mathbf{2.8627}$ & $\mathbf{74\%}$ \\
 & SNIP & 2.8638 & 59\% & 2.8667 & 62\% \\
 & SNIP (w-g-only) & 2.8613 & 65\% & 2.8648 & 68\% \\
 & NVIDIA & 2.8756 & 28\% & 2.8786 & 25\% \\
\cmidrule(lr){1-6}
6.5 $\{4,16\}$ & \methodname & $\mathbf{2.8697}$ & $\mathbf{43\%}$ & $\mathbf{2.8718}$ & $\mathbf{46\%}$ \\
 & SNIP & 2.8737 & 33\% & 2.8749 & 36\% \\
 & SNIP (w-g-only) & 2.8715 & 39\% & 2.8736 & 40\% \\
 & NVIDIA & 2.8756 & 28\% & 2.8786 & 25\% \\
\midrule
\multicolumn{6}{l}{\textit{500M: uniform NVFP4 2.6926; uniform MXFP8 2.6565 (fwd mixed + bwd mixed), 2.6634 (fwd mixed, bwd fp4)}} \\
5 $\{4,6,8,16\}$ & \methodname & $\mathbf{2.6727}$ & $\mathbf{55\%}$ & $\mathbf{2.6765}$ & $\mathbf{55\%}$ \\
 & SNIP & 2.6781 & 40\% & 2.6796 & 45\% \\
 & SNIP (w-g-only) & 2.6746 & 50\% & 2.6777 & 51\% \\
 & NVIDIA & 2.6920 & 2\% & 2.6921 & 2\% \\
\cmidrule(lr){1-6}
5.5 $\{4,6,8,16\}$ & \methodname & $\mathbf{2.6636}$ & $\mathbf{80\%}$ & $\mathbf{2.6688}$ & $\mathbf{82\%}$ \\
 & SNIP & 2.6729 & 54\% & 2.6759 & 57\% \\
 & SNIP (w-g-only) & 2.6652 & 76\% & 2.6694 & 80\% \\
 & NVIDIA & 2.6876 & 14\% & 2.6896 & 10\% \\
\cmidrule(lr){1-6}
6.5 $\{4,8,16\}$ & \methodname & $\mathbf{2.6635}$ & $\mathbf{81\%}$ & $\mathbf{2.6684}$ & $\mathbf{83\%}$ \\
 & SNIP & 2.6715 & 58\% & 2.6744 & 62\% \\
 & SNIP (w-g-only) & 2.6669 & 71\% & 2.6717 & 72\% \\
 & NVIDIA & 2.6833 & 26\% & 2.6859 & 23\% \\
\cmidrule(lr){1-6}
6.5 $\{4,16\}$ & \methodname & $\mathbf{2.6754}$ & $\mathbf{48\%}$ & $\mathbf{2.6786}$ & $\mathbf{48\%}$ \\
 & SNIP & 2.6797 & 36\% & 2.6809 & 40\% \\
 & SNIP (w-g-only) & 2.6783 & 40\% & 2.6794 & 45\% \\
 & NVIDIA & 2.6833 & 26\% & 2.6859 & 23\% \\
\bottomrule
\end{tabular}

\end{table}

\section{Training Setup}
\label{app:training-setup}
\paragraph{Pretraining.} The pretraining process employs a 32,000-token TokenMonster vocabulary, untied embeddings (tied in the 800M model), and Rotary Position Embedding (RoPE) with $\theta=10^4$.
Following RoPE, queries and keys are normalized, and a learned temperature parameter is applied to each attention head.
Each attention and multilayer perceptron (MLP) branch undergoes RMS normalization prior to residual addition, and a final RMSNorm is applied before the output head.
Table~\ref{tab:config} provides a summary of each model training configuration.
The 350M, 500M, and 4B models are trained with Muon on the weight matrices of the transformer blocks and AdamW on all other parameters.
The Muon optimizer utilizes a momentum of 0.95 and scales each orthogonalized $m\times n$ update by $0.2\sqrt{\max(m,n)}$. The AdamW optimizer employs $\beta=(0.9,0.95)$ and $\epsilon=10^{-6}$ \citep{muon,liu2025muon}.
The 800M model is trained with AdamW alone, with $\beta=(0.9,0.95)$, $\epsilon=10^{-8}$, and a weight decay of 0.1 on every parameter, including the embeddings.

\begin{table}[htbp]\centering\footnotesize\caption{Pretraining configurations.}
\label{tab:config}
\setlength{\tabcolsep}{3pt}\renewcommand{\arraystretch}{1.12}
\begin{tabularx}{\linewidth}{@{}>{\raggedright\arraybackslash}p{0.24\linewidth}*{4}{>{\raggedright\arraybackslash}X}@{}}\toprule
 & 350M & 500M & 800M & 4B\\\midrule
blocks / heads / GQA ratio (KV heads) & 11 / 12 / 4 (3) & 17 / 12 / 3 (4) & 17 / 16 / 4 (4) & 28 / 28 / 4 (7)\\
hidden / head dim & 1536 / 128 & 1536 / 128 & 2048 / 128 & 3584 / 128\\
parameters (total / non-embedding) & 370.8M / 272.5M & 526.2M / 427.9M & 814.3M / 748.8M & 4.006B / 3.777B\\
switchable linears & 66 & 102 & 102 & 168\\
context & 1024 & 1024 & 1024 & 4096\\
global batch (seqs / tokens) & 512 / 524k & 512 / 524k & 1024 / 1.05M & 256 / 1.05M\\
steps / tokens & 30k / 15.7B & 140k / 73.4B & 76.3k / 80.0B & 100k / 104.9B\\
optimizer & Muon + AdamW & Muon + AdamW & AdamW & Muon + AdamW\\
master weights & fp32 & fp32 & fp32 & fp32\\
peak LR / weight decay & $6{\times}10^{-3}$ / 0.01 & $6{\times}10^{-3}$ / 0.01 & $7.5{\times}10^{-4}$ / 0.1 & $2{\times}10^{-3}$ / 0.02\\
embedding weight decay & 0 & 0 & 0.1 & 0\\
schedule & WSD, cooldown 25k$\to$30k & WSD 5\%/20\% & WSD 5\%/20\% & warmup 2000, cooldown 20000\\
recalibration period & 4,000 & 10,000 & 2,000 & 10,000\\
seeds & 3 & 1 (mixed) / 3 (uniform) & 1 & 1\\
\bottomrule
\end{tabularx}
\end{table}

\paragraph{Supervised-Fine-tuning (SFT).}
Qwen3-4B-Base \citep{qwen3}, which includes 252 switchable projections, is fine-tuned on 10\% of the T\"ulu-3 dataset \citep{tulu3} using TRL \citep{trl}.
The Fused AdamW optimizer is configured with a learning rate of $5\times10^{-6}$, no weight decay, a linear-decay learning-rate scheduler with 3\% warmup, and gradient clipping set to 1.0.
Each epoch consists of 706 steps, with a global batch size of 128, a maximum sequence length of 4096, and training conducted in BF16 precision.
The chat template includes an empty thinking block, and the loss function is computed using only assistant tokens. SFT recalibrates every 128 steps, resulting in five allocations over 706 steps. Reallocation is disabled during the final pretraining step, ensuring that each run concludes under an allocation it has previously trained on. Each calibration step samples a new batch from the reserved calibration split.

\paragraph{Numerics and hardware.}
MXFP8 employs E4M3 values with 32-element E8M0 scales, round-to-nearest (RTN) rounding, and a ceiling scale rule. MXFP6 utilizes E2M3 with RTN for the forward pass and stochastic rounding (SR) for the backward pass. NVFP4 adopts the Quartet~II path, which incorporates randomized Hadamard transforms and its unbiased backward estimator \citep{panferov2026quartet}. In this implementation, MXFP6 values are represented exactly in E4M3 and processed through the MXFP8 GEMM path. Consequently, the 6-bit tier evaluates 6-bit numerics without assessing a native 6-bit execution path. All experiments are conducted on NVIDIA B200 GPUs, which natively support low-bit data formats.

\clearpage

\section{SNIP Weight-grad-only} 
\label{app:snipwgo}
SNIP~\citep{snip_quant} provides a first-order alternative to our method. Let $G_l$ be the full-precision gradient of layer $l$ on the calibration batch and let $\delta W_l^{(b)}=q_b(W_l)-W_l$ be its measured quantization error at tier $b$. We use
\begin{equation}
q_{l,b}=
\frac{\|G_l\|_F\|\delta W_l^{(b)}\|_F}{\sqrt{d_l}},
\label{eq:snip_}
\end{equation}
which bounds the magnitude of the first-order loss change up to the stated normalization. The same budgeted solver minimizes the selected scores. The original SNIP framework also includes activation- and optimizer-propagated backward terms. We observed that using only Eq.~\ref{eq:snip_} outperforms the full pipeline. Therefore, we added it to some of the experiments for comparison and call it SNIP weight-grad-only.

\section{\methodname hyperparameter ablation}
\label{app:ablation-m-D}
Here we evaluate how the performance of \methodname depends on the two
main hyperparameters: the number of noise levels $m$ used to estimate each layer sensitivity, and the size of the calibration dataset $|\mathcal{D}|$.

Figure~\ref{fig:ablation-heatmap-m-D} reports the final test-loss
improvement for 350M model over uniform FP4 training for different combinations of
$m$ and $|\mathcal{D}|$. Using only a single perturbation and a single
calibration sequence leads to the least accurate allocation. Increasing
either the number of noise levels or the amount of calibration data
improves the resulting allocation, but the gains quickly saturate.

This indicates that \methodname does not require a highly accurate
curvature estimate to identify useful relative differences in
layer sensitivity. A modest number of perturbations $(m \geq 4)$ and calibration
examples $|\mathcal{D}| \geq 64$ is already sufficient to obtain effective precision
allocations. This robustness to calibration
hyperparameters allows us to keep the computational overhead small
during training.

\begin{figure}[h]
    \centering
    \includegraphics[width=0.5\linewidth]{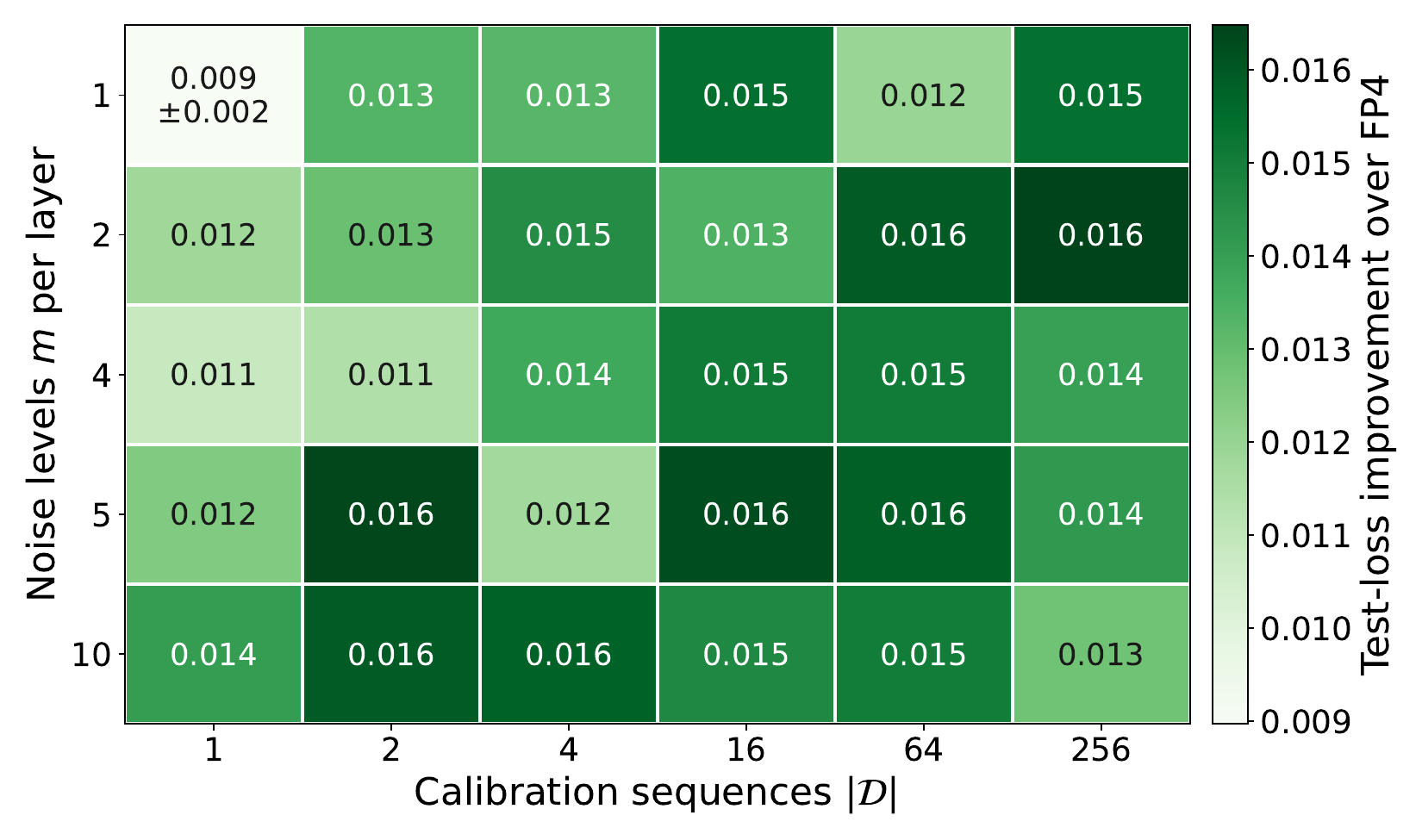}
    \caption{Test loss gain over uniform FP4 for 350M model over the number of noise levels $m$ per layer and calibration sequences $|\mathcal{D}|$.  Performance
    improves over the smallest calibration setting (1,1) and then remains
    relatively stable across a broad range of $m$ and $|\mathcal{D}|$,
    indicating diminishing returns from additional calibration compute.}
    \label{fig:ablation-heatmap-m-D}
\end{figure}

\section{Alternative sensitivity models}

\methodname is primary sensitivity model. We additionally compare against several alternative approximations.

\paragraph{\methodname inversed.} Alternatively to \methodname model, one can consider a symmetric linear approximation relative to the loss of uniformly FP4 quantized model:
\begin{equation}
    L(W, B) = L_{fp4} - \sum_i \alpha_i \frac{1}{d_i} ||\delta W_i||^2_F ,
\end{equation}

here $\Delta W_i = 0$ for quantized weights, and the sum effectively covers the set  of \textbf{non-quantized} weights. $ L_{fp4} = L(W, (b_i = fp4))$ is the loss evaluated on fully quantized model weights.

Generally, we want to assign each linear layer a scalar value to measure the "sensitivity" of a particular layer, that is, how much the loss would change if the layer were quantized while keeping the remaining layers fixed. 

\begin{equation}
    L(W, B) = \sum_i f(\alpha_i) + L_{bf16}.
\end{equation}

\paragraph{First-order Taylor sensitivity.} By keeping only the linear term, one can obtain the first-order approximation, which relates to the quality metric used by \citep{snip_quant}.

\begin{equation}
    \alpha_i = \langle \nabla_{W_i}, \Delta W_i \rangle, \;\;\;
         L(W, B) = \sum_i \alpha_i+ L_{bf16}.
\end{equation}

\paragraph{Cosine similarity.}

\begin{equation}
     \alpha_i = 1 - \cos (W_i, Q(W_i)) = 1 - \frac{\langle W_i, Q(W_i) \rangle}{||W_i||_F ||Q(W_i)||_F}.
\end{equation}

\paragraph{\methodname-Quadratic model.}
 \methodname formulation considers only independent effects of each layer. It can be extended to include pair-wise interactions between  layers:

\begin{equation}
        L(W, B) = \sum_i f(\alpha_i) + \sum_{i\neq j} g(\gamma_{ij}) +  L_{bf16}.
\end{equation}

\begin{equation}
    L(W, B) = \sum_k \alpha_k \frac{||\Delta W_k||^2_F}{||W_k||^2_F} + \sum_{k \neq j} \gamma_{kj} \frac{||\Delta W_k||_F||\Delta W_j||_F}{||W_k||_F ||W_j||_F} + L_{bf16}.
\end{equation}
where $\{ k \}$ is the set  of quantized weights. 

In Table~\ref{tab:alternative-methods}, we report the test loss results for 350M-parameter model runs using the alternative strategies described above. All evaluated strategies improve over the FP4 baseline, although the magnitude of the improvement varies. The best results are obtained by the inversed \methodname variant and the default \methodname, which achieve equal improvements within error interval. Variants based on measured weight error, gradient MSE, and the quadratic \methodname perform similarly, yielding improvements between $0.472\%$ and $0.491\%$. In contrast, cosine similarity, first-order Taylor, and \lastlayers provide smaller gains over FP4. 

Overall, these results show that the performance of \methodname is robust across several alternative formulations, while the default \methodname formulation remains the strongest variant.

\begin{table}[t]
\centering
\caption{Test loss results for 350M model trained on 18B tokens with budget 6.5 bit/parameter and choices {4,16}. Average and std error over 3 random seeds.}
\label{tab:alternative-methods}
\begin{tabular}{lcc}
\toprule
Method & Final validation loss & Improvement over FP4 \\
\midrule
\methodname inversed                           & $2.8551_{\pm 0.0010}$ & $0.554_{\pm 0.038}\,\%$ \\
\methodname                                 & $2.8554_{\pm 0.0011}$ & $0.544_{\pm 0.037}\,\%$ \\
\methodname: measured MSE($W$)                            & $2.8569_{\pm 0.0021}$ & $0.491_{\pm 0.061}\,\%$ \\
\methodname: MSE($W_\text{grad}$)            & $2.8570_{\pm 0.0005}$ & $0.488_{\pm 0.031}\,\%$ \\
Quadratic \methodname        & $2.8575_{\pm 0.0018}$ & $0.472_{\pm 0.050}\,\%$ \\
\methodname: weights + activations MSE          & $2.8581_{\pm 0.0010}$ & $0.450_{\pm 0.048}\,\%$ \\
Cosine sim.                           & $2.8593_{\pm 0.0039}$ & $0.410_{\pm 0.138}\,\%$ \\
First-order Taylor                   & $2.8628_{\pm 0.0015}$ & $0.280_{\pm 0.044}\,\%$ \\
\lastlayers                   & $2.8641_{\pm 0.0012}$ & $0.240_{\pm 0.046}\,\%$ \\
FP4                                   & $2.8710_{\pm 0.0004}$ & $0.00$ \\
\bottomrule
\end{tabular}
\end{table}

\section{Additional sensitivity analysis}
\label{app:sensitivity-analysis}
We report additional sensitivity results in this section for the 800M model. These results complement the analysis in the main text by showing how the sensitivity profile varies across layer types, model depth, and training time.

\begin{figure}[t]
    \centering
    \includegraphics[width=0.7\linewidth]{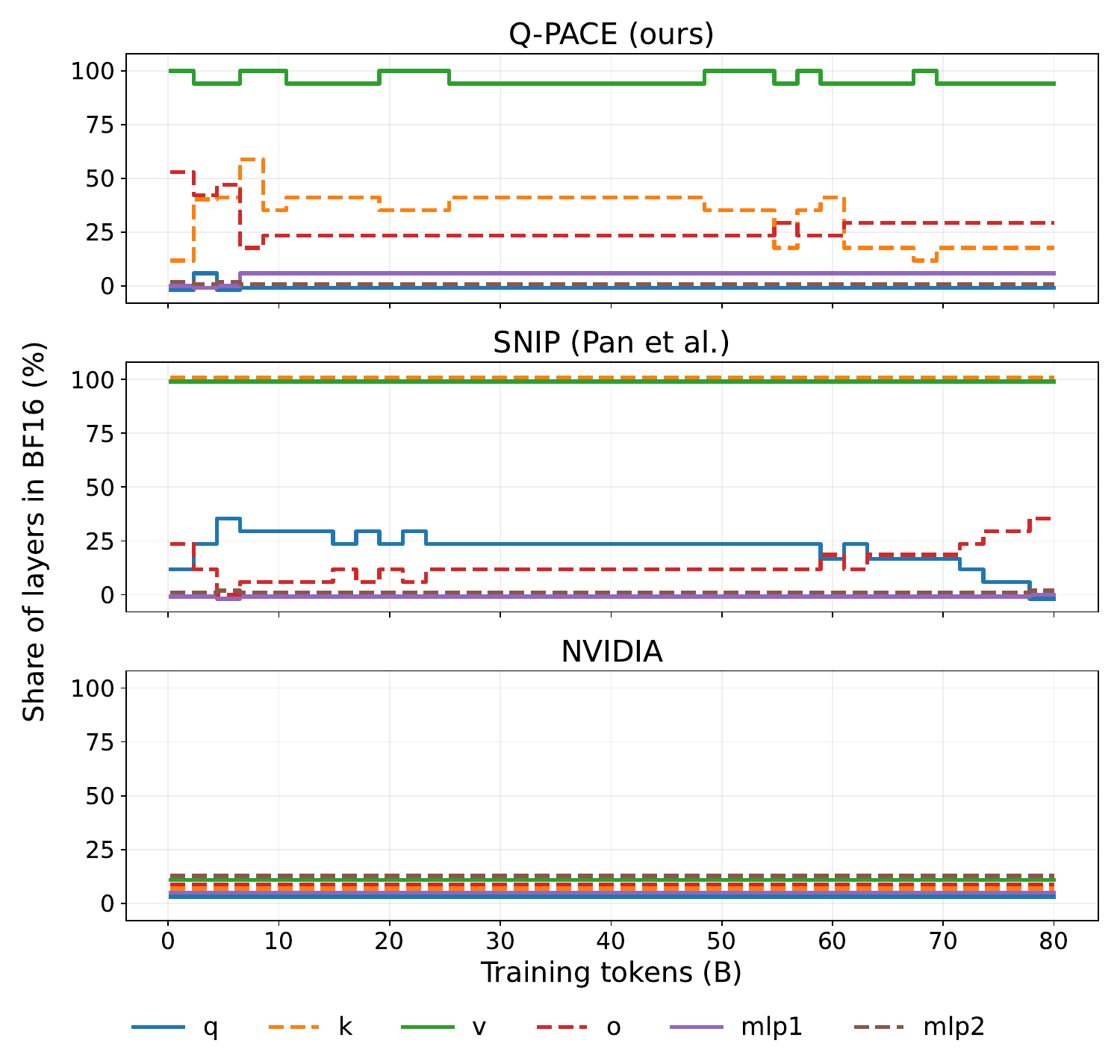}
    \caption{Share of different layer types promoted to BF16 over the training duration.}
    \label{fig:bf16-share-800m}
\end{figure}

Figure~\ref{fig:bf16-share-800m} shows the fraction of each layer type assigned
to BF16 throughout training. The allocations produced by \methodname are highly
non-uniform across layer types and evolve over the course of training. In
particular, the V projections are consistently assigned high precision, while
the fractions of K and O projections with high precision vary. In contrast, SNIP keeps nearly all K and V projections in BF16 for
most of training, while the NVIDIA baseline induces a static
allocation across layer types. 

This illustrates that \methodname adapts its
precision allocation over training time.

\begin{figure}[t]
    \centering
    \includegraphics[width=0.7\linewidth]{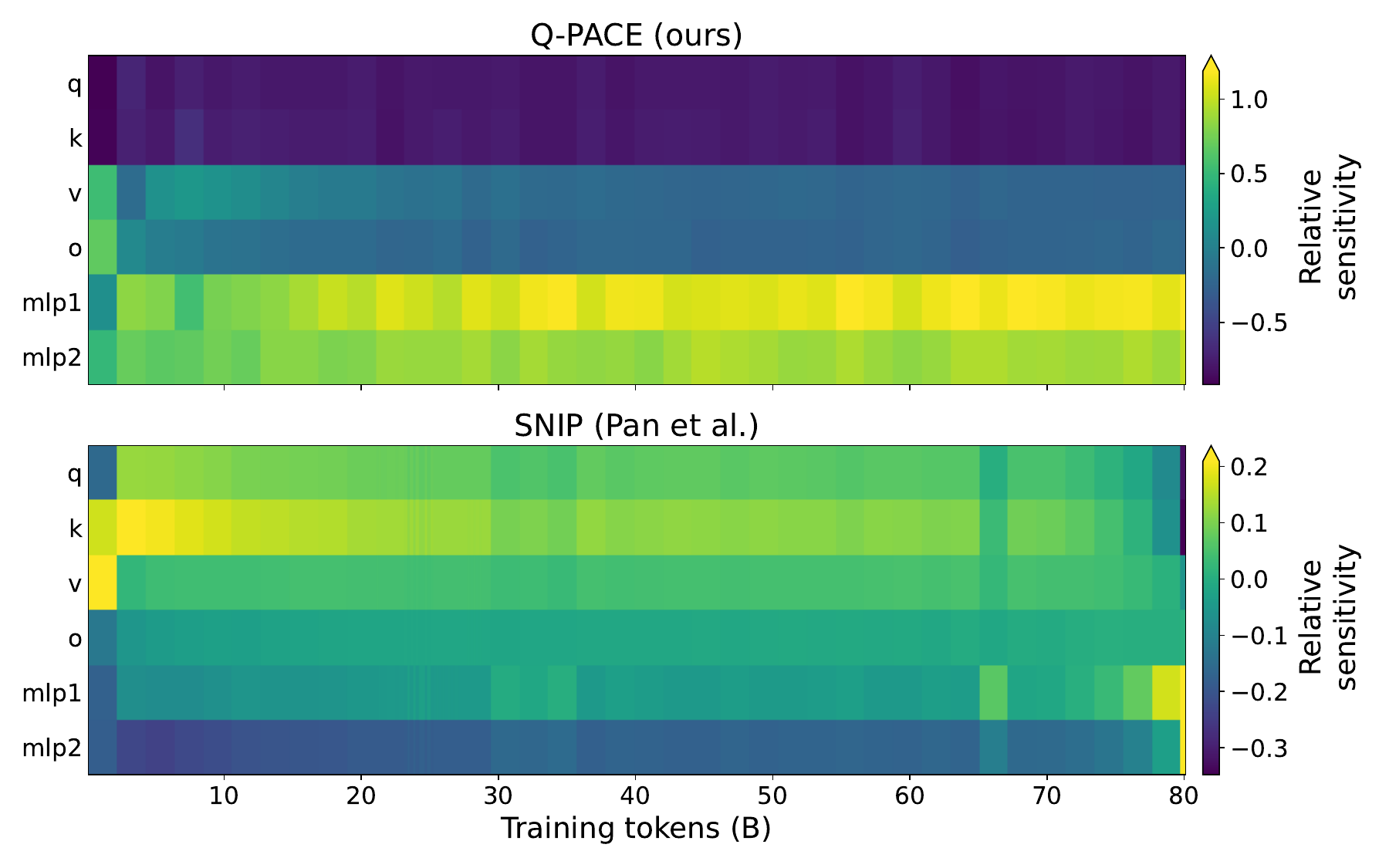}
    \caption{Relative sensitivity by layer type over training for the 800M model.
    Values are averaged across layers of the same type at each calibration
    point. \methodname assigns the largest relative sensitivity to the MLP
    projections, whereas SNIP places comparatively greater sensitivity on
    K projections.}
    \label{fig:sensitivity-type-800m}
\end{figure}

To better understand these allocations, Figure~\ref{fig:sensitivity-type-800m}
shows the corresponding relative sensitivity scores aggregated by layer type.
For \methodname, the MLP projections exhibit the largest relative sensitivities,
whereas attention projections, in particular Q and K, are substantially less
sensitive on average. SNIP produces a different ordering, assigning
higher relative sensitivity to attention projections and lower sensitivity to
the MLP projections. The sensitivity profiles are comparatively stable after
the early stages of training. 

This difference helps explain why the two methods can produce substantially
different mixed-precision allocations despite operating under the same global
precision budget.

\begin{figure}[t]
    \centering
    \includegraphics[width=0.8\linewidth]{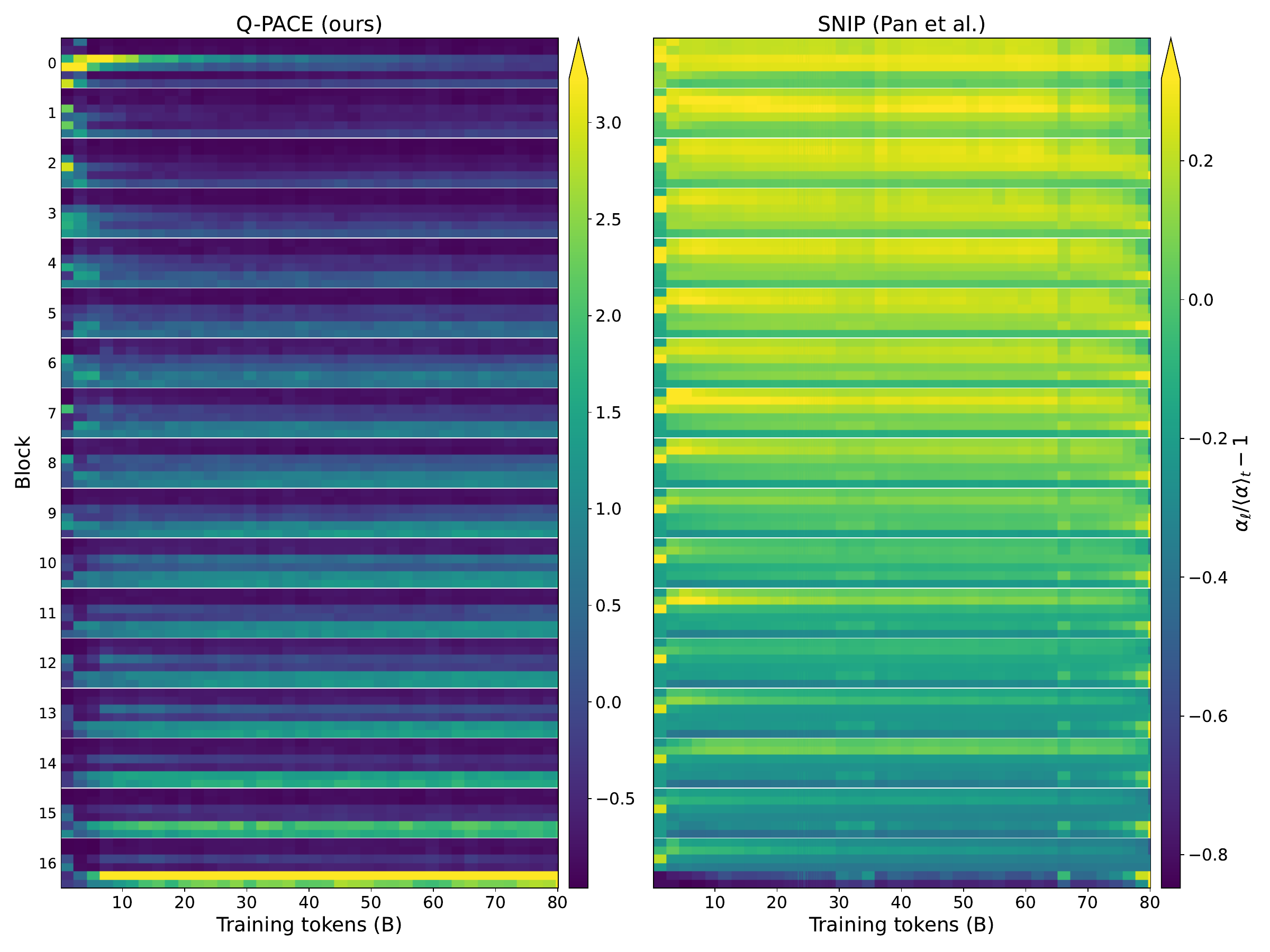}
    \caption{Relative sensitivity over training for the 800M model. }
    \label{fig:sensitivity-depth-800m}
\end{figure}

Sensitivity is also strongly structured by model depth.
Figure~\ref{fig:sensitivity-depth-800m} shows the sensitivity $\alpha_i$ normalized by the average sensitivity at that training point. For \methodname, later blocks are generally more sensitive, with the final block standing out throughout most of training.
After an initial transition, this pattern becomes remarkably stable. SNIP exhibits a qualitatively different depth profile, with higher relative sensitivity concentrated toward earlier blocks and decreasing toward the end of the network. These results further show that quantization sensitivity cannot be described only by layer type or only by depth: \methodname captures a structured combination of both factors that remains largely stable over long portions of training.

\begin{figure}[t]
    \centering
    \includegraphics[width=0.7\linewidth]{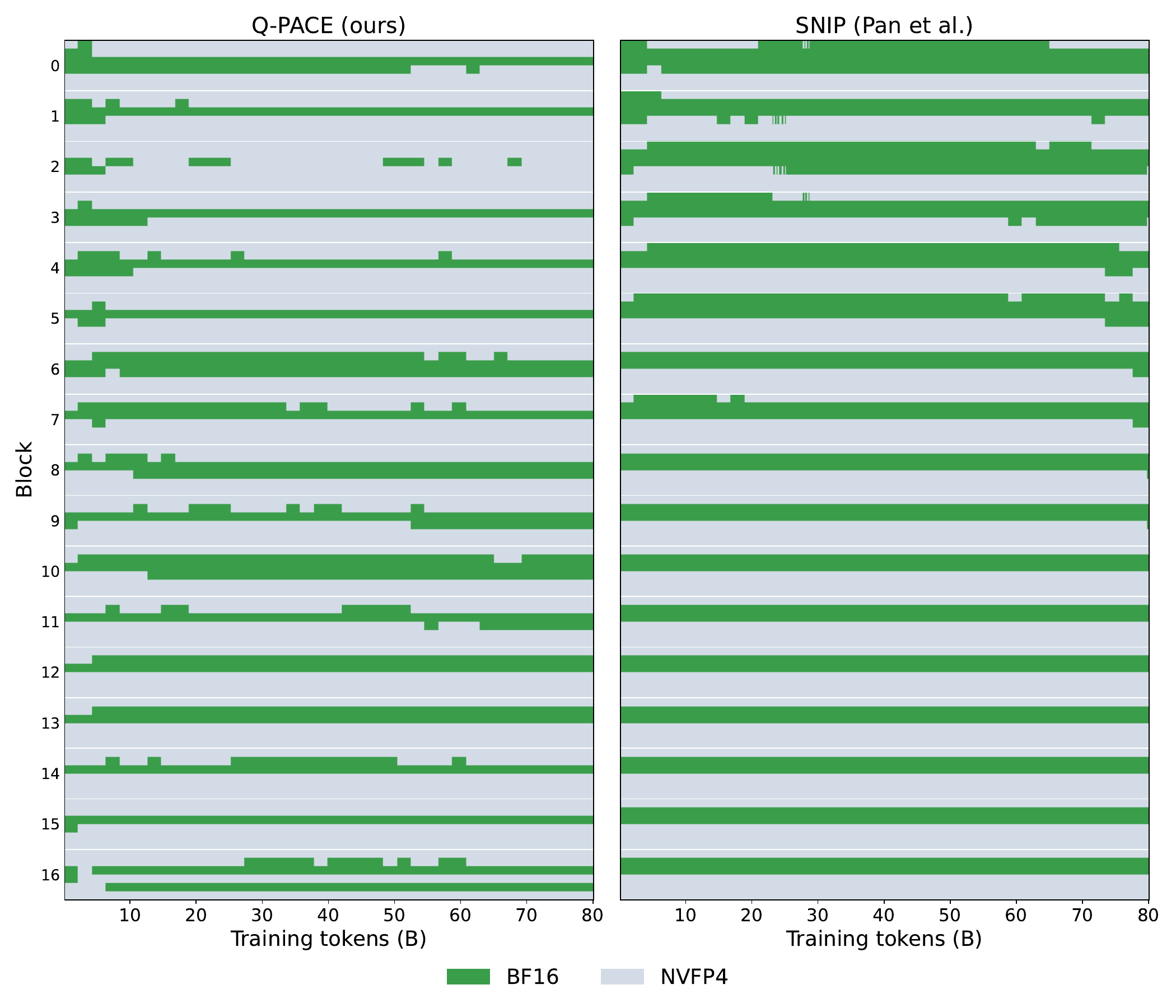}
    \caption{Bitwidth allocations throughout training for 800M model at budget 5.5 bits/parameter with bit choices {4,16}.}
    \label{fig:allocations-800m-app}
\end{figure}

\begin{figure}[t]
    \centering
    \includegraphics[width=0.8\linewidth]{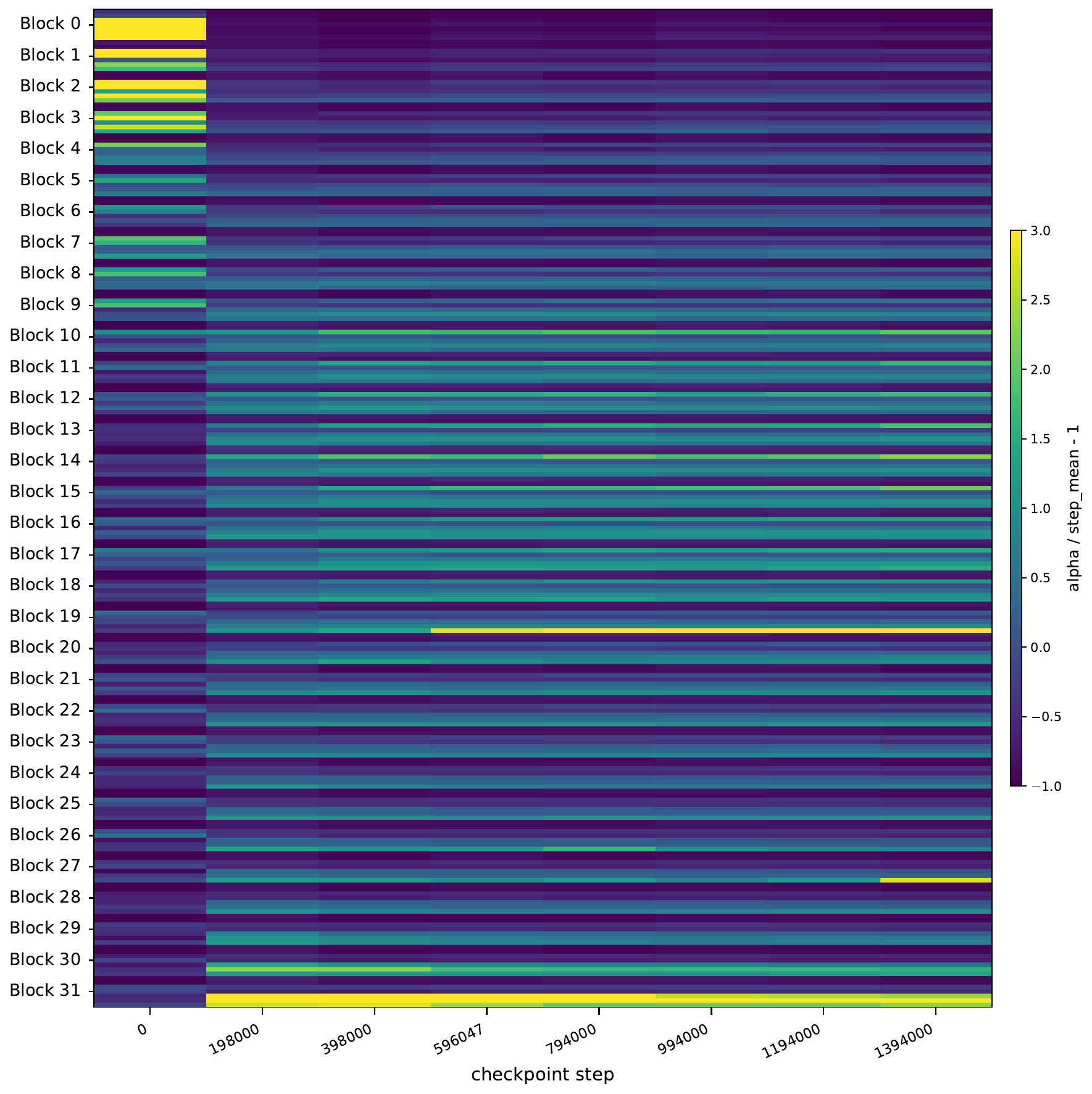}
    \caption{Layer-wise sensitivity heatmap for Olmo-3-7B~\citep{olmo3} during training. Lighter color corresponds to higher sensitivity $\alpha$.}
    \label{fig:olmo-heatmap}
\end{figure}

\clearpage
\section{Budget trade-off, sensitivity, and allocations}
\label{app:figures}
This section presents the figures supporting the grid and 4B results. Figure~\ref{fig:tradeoff} displays each allocation on the budget vs. loss plots. At both pretraining sizes and during SFT, \methodname achieves greater improvement over uniform NVFP4 than both SNIP variants and NVIDIA at a comparable bit count. The subsequent figures illustrate the distribution of precision. Sensitivity is normalized at each calibration, allowing for comparison over time but not across estimators (Figures~\ref{fig:sens} and~\ref{fig:sensft}). During pretraining, \methodname concentrates its score on the MLP up-projections, whereas SNIP distributes it evenly across projection types. In SFT, the score for \methodname shifts to the MLP down-projections. The allocations correspond to these scores and exhibit minimal change after the initial recalibration (Figures~\ref{fig:alloc} to~\ref{fig:alloc-sft}). Figures~\ref{fig:sens4b} to~\ref{fig:heatall} provide a parallel analysis for the 4B runs across all 168 switchable layers.
\begin{figure}[htbp]
\centering
\includegraphics[width=\textwidth]{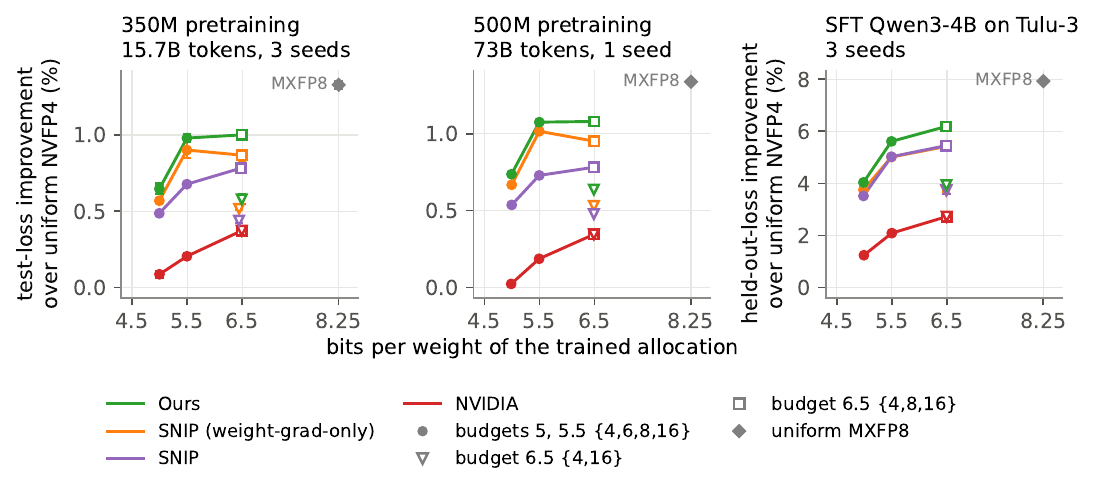}
\caption{Loss improvement over uniform NVFP4 against the realized bits per weight of the trained allocation, for pretraining at 350M and 500M and for SFT.}
\label{fig:tradeoff}\label{fig:tradeoff-sft}
\end{figure}

\begin{figure}[htbp]
\centering
\includegraphics[width=0.92\textwidth]{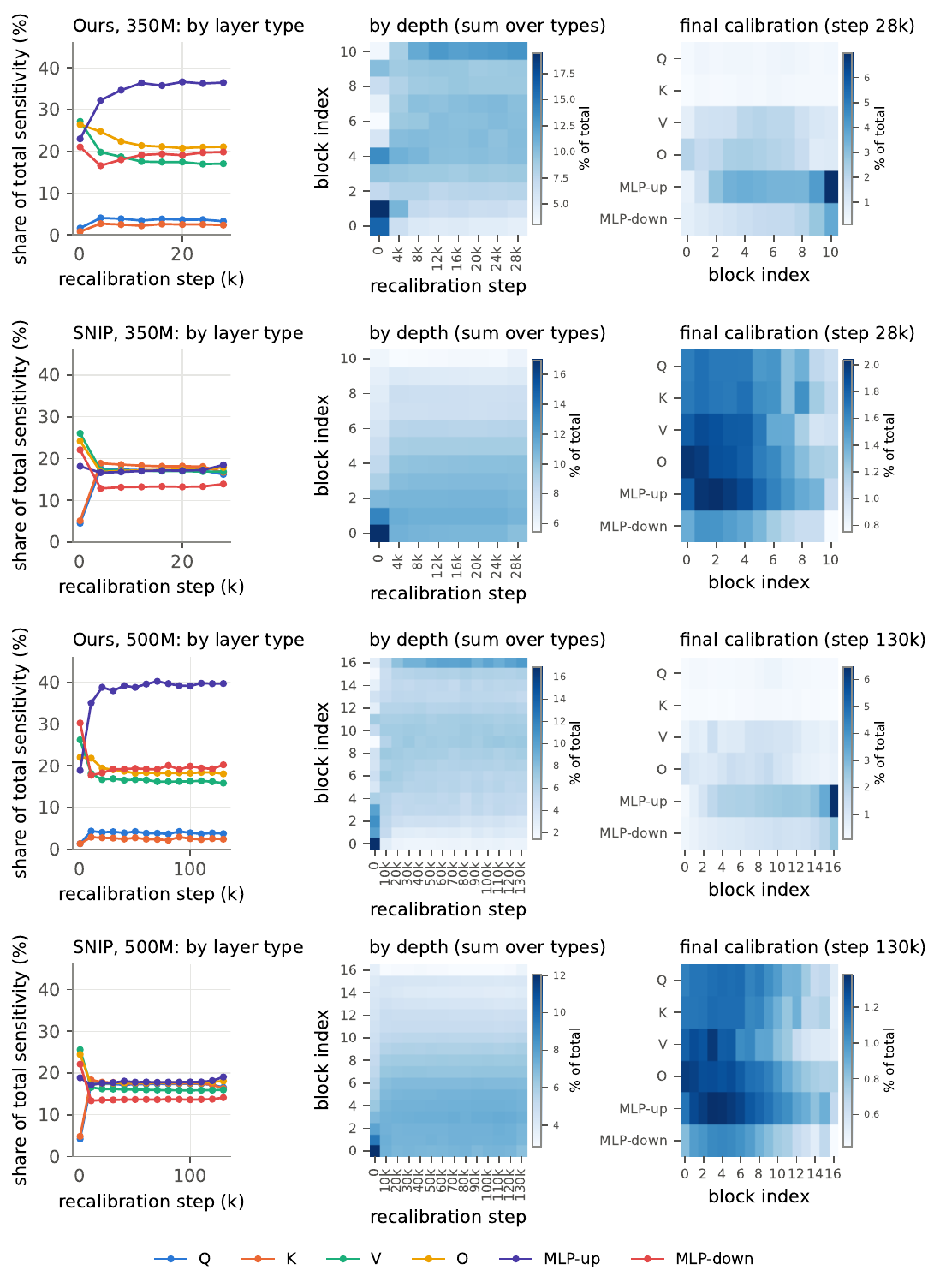}
\caption{Normalized sensitivity of \methodname and SNIP at budget 6.5 on $\{4,8,16\}$ at 350M and 500M. Type shares over the calibrations, depth shares, and the final type-by-block map.}
\label{fig:sens}
\end{figure}

\begin{figure}[htbp]
\centering
\includegraphics[width=0.95\textwidth]{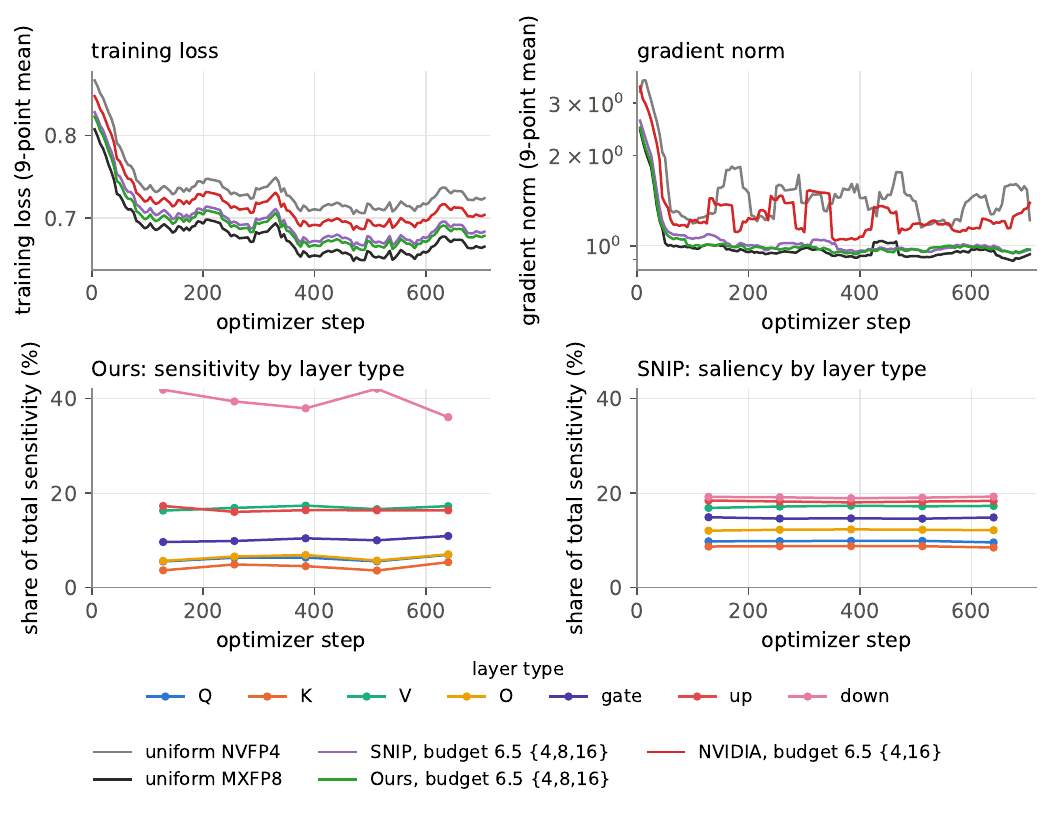}
\caption{SFT training loss and gradient norm (mean of three seeds) and the normalized sensitivity by projection type of \methodname and SNIP at budget 6.5 on $\{4,8,16\}$.}
\label{fig:sensft}
\end{figure}

\begin{figure}[htbp]
\centering
\includegraphics[width=0.92\textwidth]{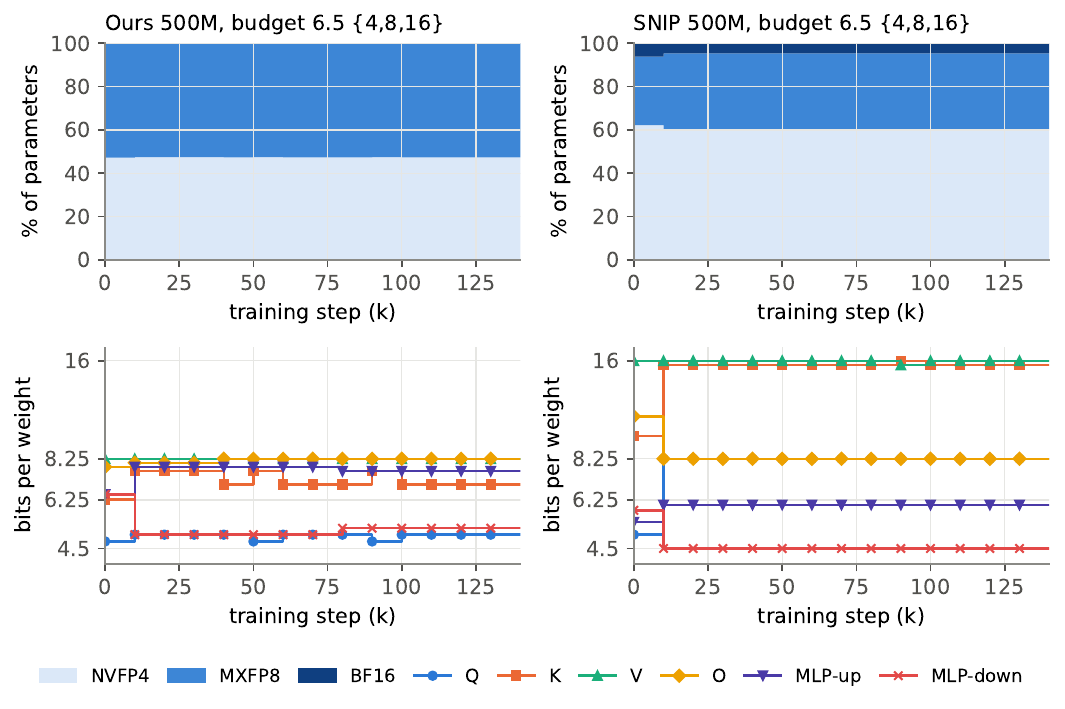}
\caption{500M allocations of \methodname and SNIP at budget 6.5 on $\{4,8,16\}$ over training. Parameter share per format and bits per weight by projection type.}
\label{fig:alloc}
\end{figure}

\begin{figure}[htbp]
\centering
\includegraphics[width=0.9\textwidth]{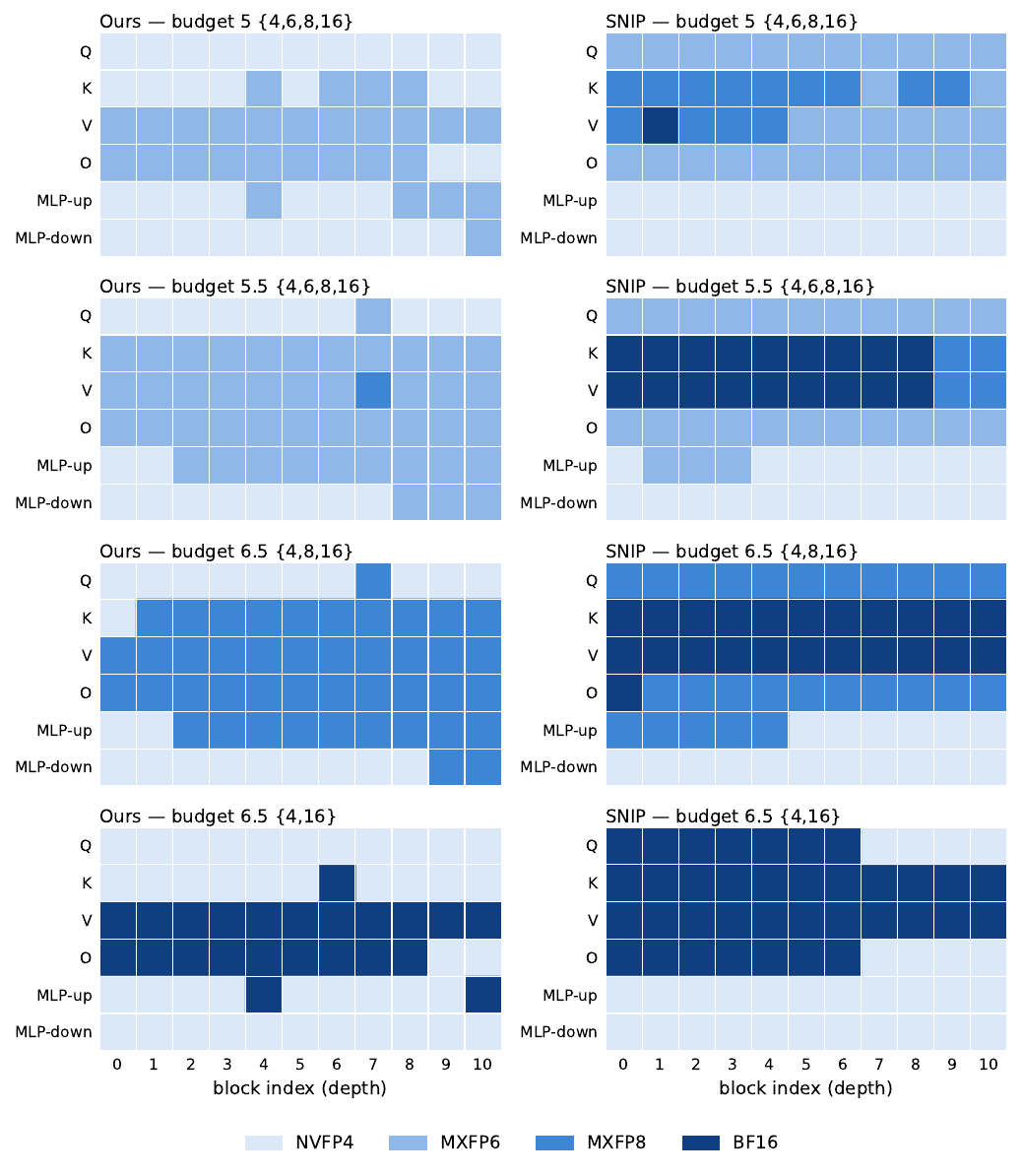}
\caption{Final 350M allocations of \methodname and SNIP for every budget and menu.}
\label{fig:alloc-pt350}
\end{figure}

\begin{figure}[htbp]
\centering
\includegraphics[width=0.9\textwidth]{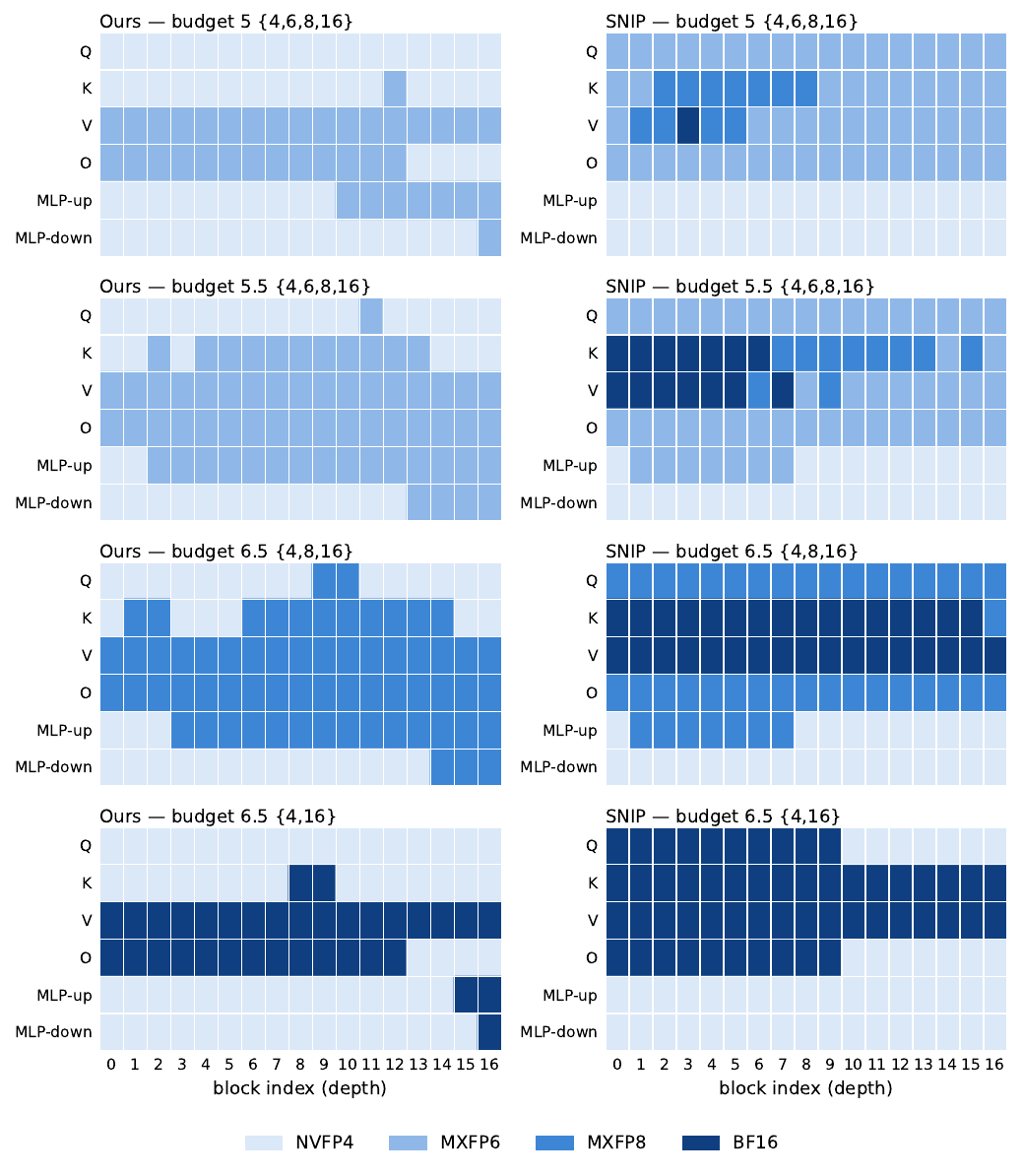}
\caption{Final 500M allocations of \methodname and SNIP for every budget and menu.}
\label{fig:alloc-pt05b}
\end{figure}

\begin{figure}[htbp]
\centering
\includegraphics[width=0.8\textwidth]{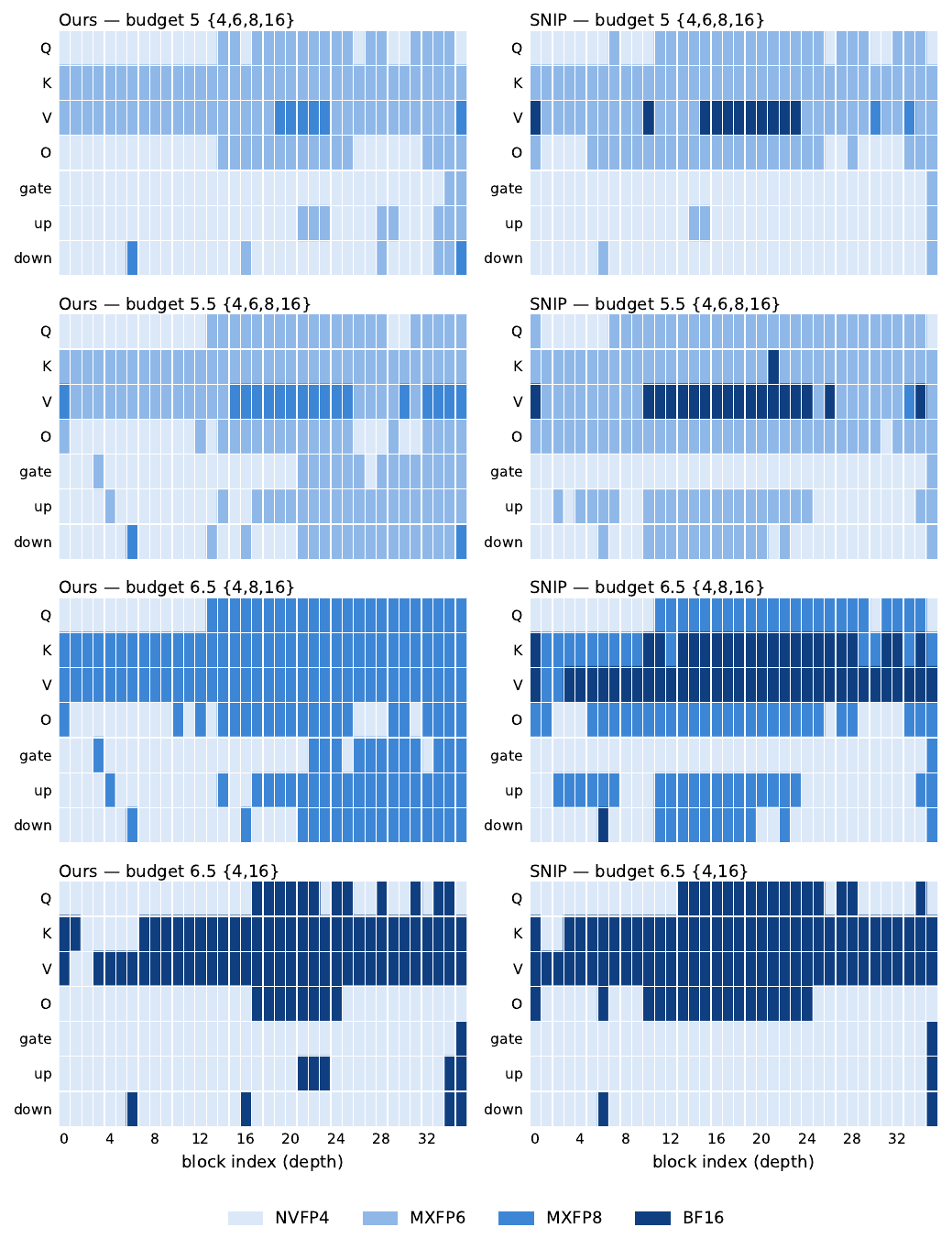}
\caption{Final SFT allocations of \methodname and SNIP for every budget and menu. Qwen3 has three MLP projections per block.}
\label{fig:alloc-sft}
\end{figure}

\begin{figure}[htbp]
\centering
\begin{subfigure}{\textwidth}\centering
\includegraphics[width=\linewidth]{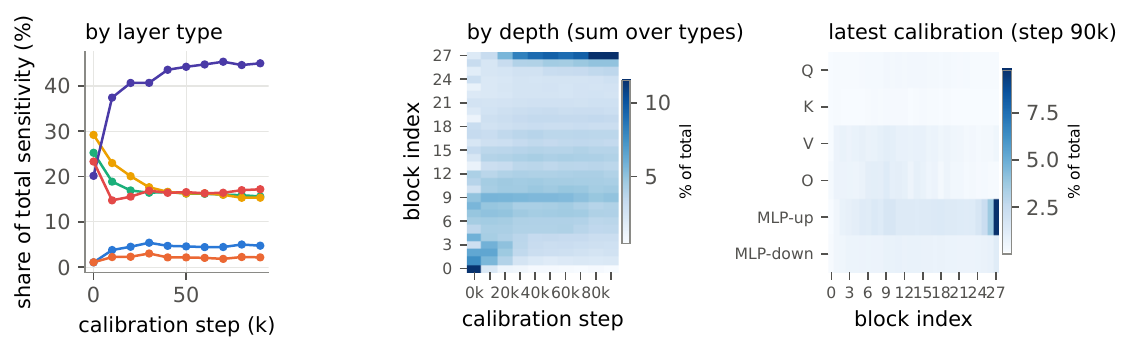}
\caption{\methodname}
\end{subfigure}\\[6pt]
\begin{subfigure}{\textwidth}\centering
\includegraphics[width=\linewidth]{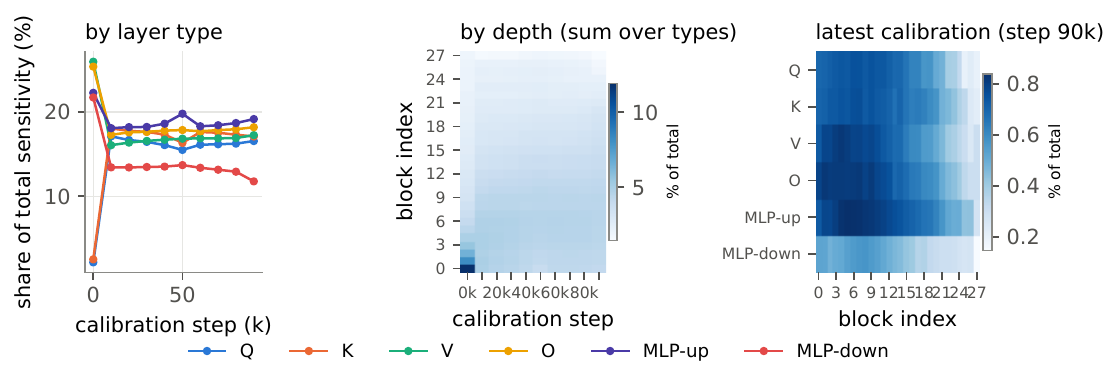}
\caption{SNIP}
\end{subfigure}
\caption{4B normalized sensitivity at budget 6.5 on $\{4,16\}$. Type shares over the calibrations, depth shares, and the latest type-by-block map.}
\label{fig:sens4b}
\end{figure}

\begin{figure}[htbp]
\centering
\begin{subfigure}[b]{0.46\textwidth}\centering
\includegraphics[height=4.3in]{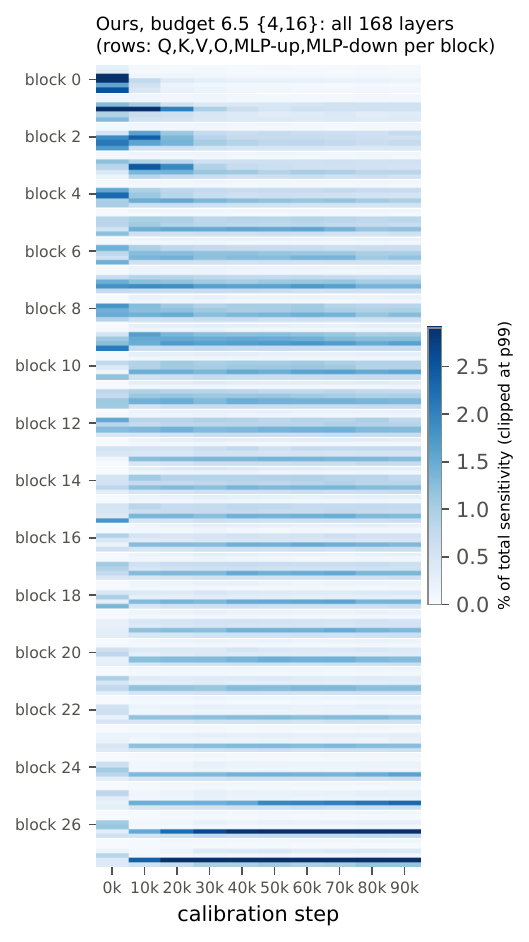}
\caption{\methodname}
\end{subfigure}\hfill
\begin{subfigure}[b]{0.5\textwidth}\centering
\includegraphics[height=4.3in]{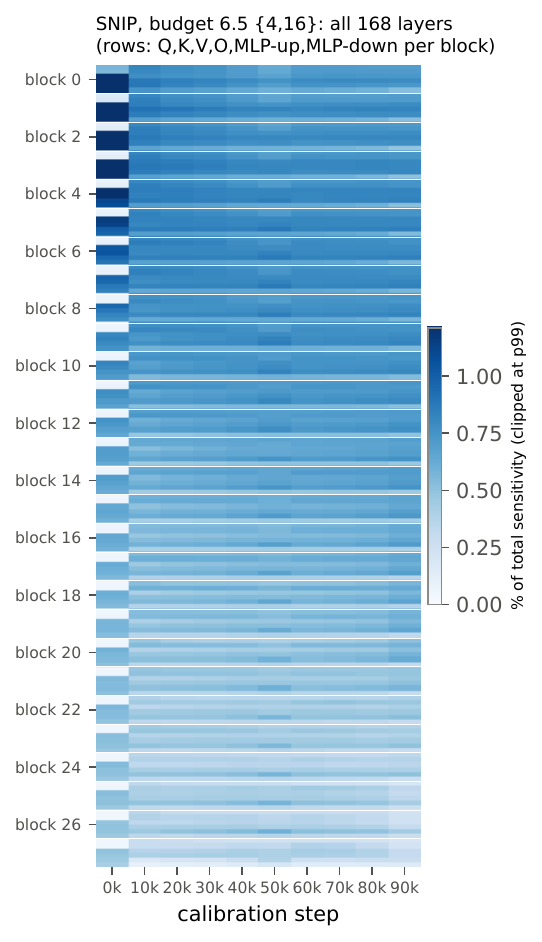}
\caption{SNIP}
\end{subfigure}
\caption{4B sensitivity of all 168 layers across the calibrations at budget 6.5 on $\{4,16\}$. Colors are clipped at the 99th percentile, on separate scales for the two estimators.}
\label{fig:sens4b-layers}
\end{figure}

\begin{figure}[htbp]
\centering
\begin{subfigure}{\textwidth}\centering
\includegraphics[width=0.8\linewidth]{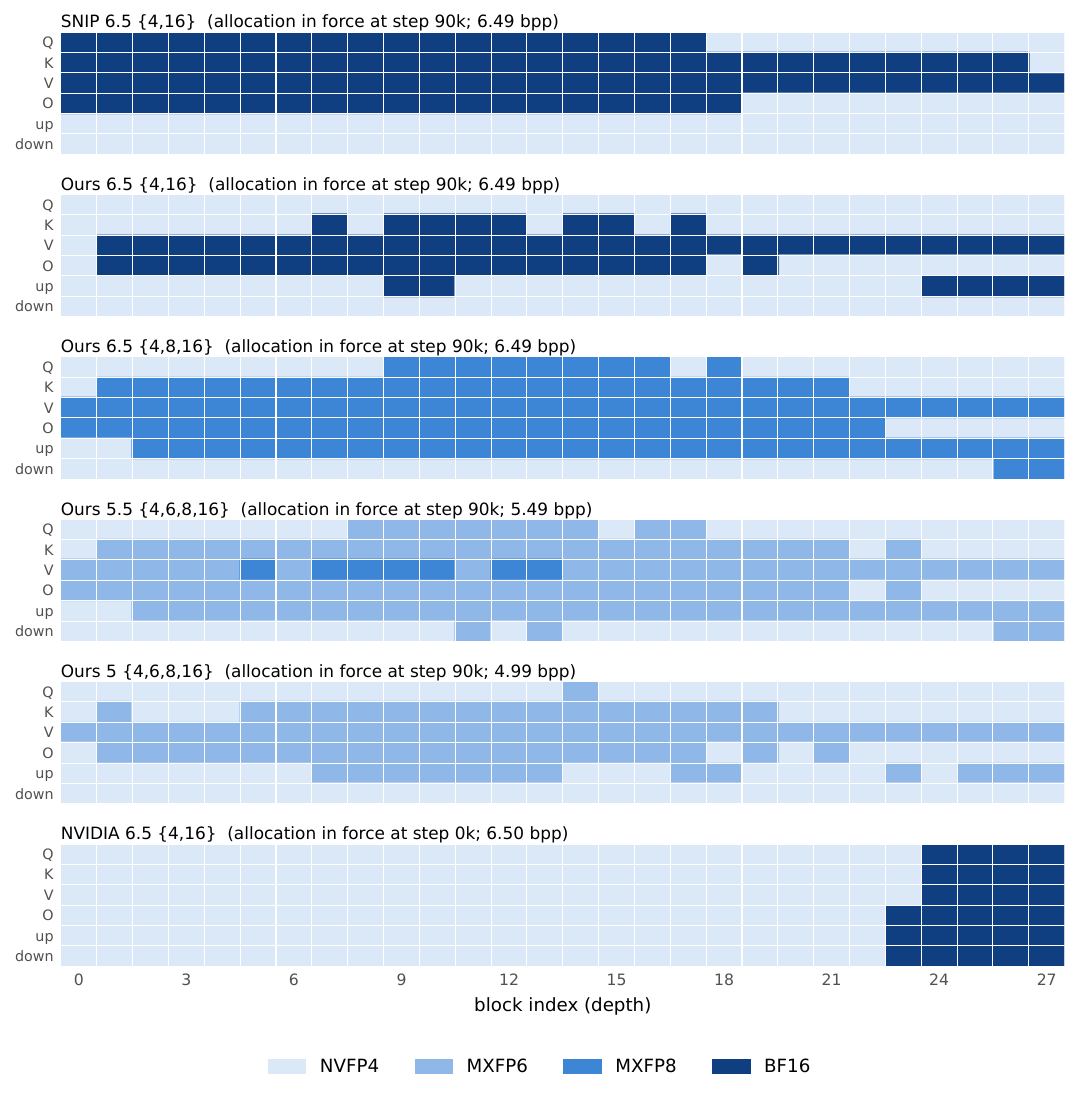}
\caption{Final allocations}
\end{subfigure}\\[6pt]
\begin{subfigure}{\textwidth}\centering
\includegraphics[width=0.9\linewidth]{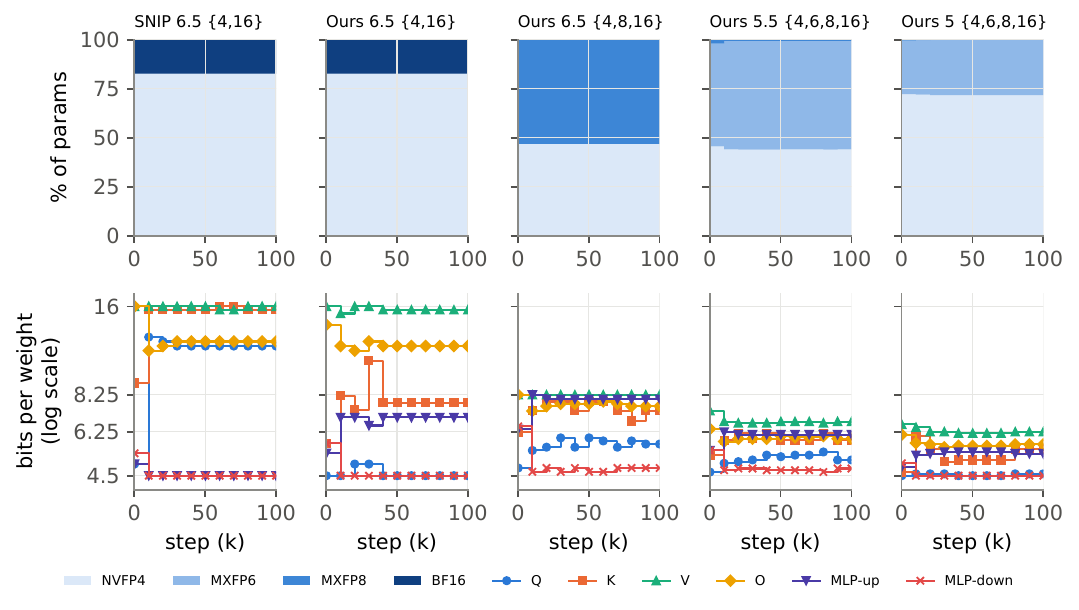}
\caption{Allocation dynamics}
\end{subfigure}
\caption{4B final allocations and their dynamics. Parameter share per format and bits per weight by projection type.}
\label{fig:alloc4b}
\end{figure}

\begin{figure}[htbp]
\centering
\includegraphics[width=0.95\textwidth]{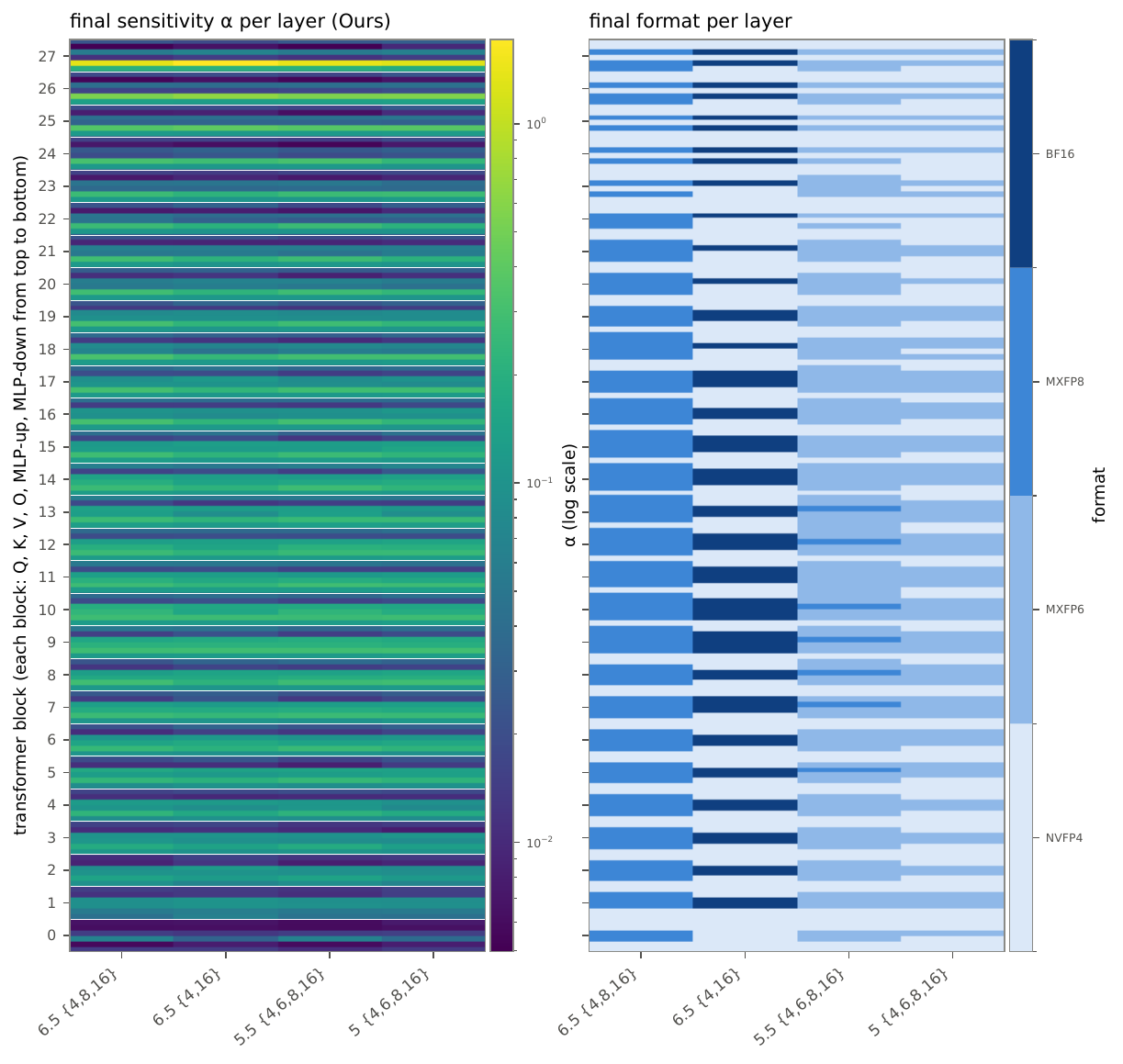}
\caption{Final sensitivity (logarithmic color scale) and format of all 168 switchable layers in the four 4B \methodname runs.}
\label{fig:heatall}
\end{figure}

\clearpage
\subsection{Downstream task scores}
\label{app:sft}
Accuracies are reported as percentages at the trained allocation and are averaged over three random seeds. The downstream mean presented in Table~\ref{tab:sftgrid} represents the average performance across 24 distinct tasks, with Natural Questions counted once. All tasks are evaluated in a zero-shot setting, except for GSM8K, which is assessed using an eight-shot chain-of-thought approach. Evaluation is conducted using the LM Evaluation Harness \citep{lmeval}.

\begin{itemize}
\item Generation and instruction following (Table~\ref{tab:sftgen}): GSM8K \citep{gsm8k}, exact match; IFEval \citep{ifeval}, strict prompt-level accuracy; HumanEval \citep{humaneval}, pass@1; Natural Questions \citep{nq}, exact match; TruthfulQA \citep{truthfulqa}, MC2 accuracy.
\item Knowledge and likelihood (Table~\ref{tab:sftraw}): ARC-Challenge, ARC-Easy, HellaSwag, PIQA, WinoGrande, LAMBADA, and MMLU, accuracy.
\item Probe tasks (Table~\ref{tab:probe1}): BoolQ, COPA, RACE, ASDiv, and CommonsenseQA, accuracy; LogiQA, SAT-Math, SciQ, OpenBookQA, AQuA-RAT, LSAT-AR, and LSAT-LR, length-normalized accuracy; Natural Questions, exact match.
\end{itemize}
\begin{table}[htbp]
\centering\scriptsize
\renewcommand{\arraystretch}{0.95}
\caption{SFT generative and instruction benchmarks (accuracy \%, mean $\pm$ seed standard deviation over three seeds).}
\label{tab:sftgen}
\setlength{\tabcolsep}{2.4pt}
\resizebox{\textwidth}{!}{%
\begin{tabular}{lccccc}
\toprule
run & GSM8K & IFEval & HumanEval & NQ & TQA \\
\midrule
\multicolumn{6}{l}{\textit{uniform references}} \\
uniform NVFP4 & $80.6_{\pm 1.4}$ & $30.6_{\pm 0.8}$ & $69.1_{\pm 2.9}$ & $8.7_{\pm 0.4}$ & $51.2_{\pm 0.7}$ \\
uniform MXFP8 & $84.2_{\pm 0.3}$ & $34.3_{\pm 1.1}$ & $75.8_{\pm 0.9}$ & $11.2_{\pm 0.7}$ & $52.6_{\pm 0.1}$ \\
\midrule
\multicolumn{6}{l}{\textit{budget 5, menu $\{4,6,8,16\}$}} \\
NVIDIA & $81.0_{\pm 0.9}$ & $28.0_{\pm 1.7}$ & $71.3_{\pm 2.7}$ & $9.2_{\pm 0.5}$ & $\mathbf{51.2}_{\pm 0.2}$ \\
\methodname & $81.1_{\pm 0.7}$ & $33.0_{\pm 2.2}$ & $\mathbf{72.8}_{\pm 3.1}$ & $8.9_{\pm 0.6}$ & $49.9_{\pm 0.6}$ \\
SNIP & $81.0_{\pm 0.9}$ & $\mathbf{33.8}_{\pm 0.7}$ & $71.1_{\pm 2.7}$ & $9.3_{\pm 0.4}$ & $50.0_{\pm 0.3}$ \\
\midrule
\multicolumn{6}{l}{\textit{budget 5.5, menu $\{4,6,8,16\}$}} \\
NVIDIA & $80.5_{\pm 0.5}$ & $30.2_{\pm 0.9}$ & $69.5_{\pm 1.1}$ & $9.2_{\pm 0.4}$ & $\mathbf{50.8}_{\pm 0.4}$ \\
\methodname & $\mathbf{84.3}_{\pm 0.8}$ & $32.2_{\pm 0.7}$ & $\mathbf{73.6}_{\pm 1.3}$ & $10.0_{\pm 0.7}$ & $50.6_{\pm 1.0}$ \\
SNIP & $83.5_{\pm 1.2}$ & $33.9_{\pm 0.3}$ & $70.7_{\pm 1.2}$ & $9.4_{\pm 0.5}$ & $50.8_{\pm 0.5}$ \\
\midrule
\multicolumn{6}{l}{\textit{budget 6.5, menu $\{4,8,16\}$}} \\
NVIDIA & $78.1_{\pm 1.0}$ & $28.0_{\pm 1.1}$ & $71.3_{\pm 1.1}$ & $9.7_{\pm 0.2}$ & $50.7_{\pm 0.2}$ \\
\methodname & $\mathbf{84.4}_{\pm 0.9}$ & $32.3_{\pm 1.7}$ & $73.4_{\pm 0.9}$ & $\mathbf{10.0}_{\pm 0.6}$ & $\mathbf{51.1}_{\pm 0.6}$ \\
SNIP & $83.1_{\pm 0.8}$ & $\mathbf{33.1}_{\pm 0.8}$ & $\mathbf{73.6}_{\pm 0.4}$ & $8.2_{\pm 0.3}$ & $51.0_{\pm 0.4}$ \\
\midrule
\multicolumn{6}{l}{\textit{budget 6.5, menu $\{4,16\}$}} \\
NVIDIA & $78.1_{\pm 1.0}$ & $28.0_{\pm 1.1}$ & $71.3_{\pm 1.1}$ & $\mathbf{9.7}_{\pm 0.2}$ & $\mathbf{50.7}_{\pm 0.2}$ \\
\methodname & $79.6_{\pm 0.9}$ & $33.3_{\pm 1.1}$ & $72.6_{\pm 1.2}$ & $8.8_{\pm 0.7}$ & $50.2_{\pm 0.6}$ \\
SNIP & $80.0_{\pm 0.6}$ & $33.1_{\pm 0.4}$ & $72.4_{\pm 1.3}$ & $9.3_{\pm 0.5}$ & $50.2_{\pm 0.3}$ \\
\bottomrule
\end{tabular}}
\end{table}

\begin{table}[htbp]
\centering\scriptsize
\renewcommand{\arraystretch}{0.95}
\caption{SFT knowledge/likelihood tasks (accuracy \%, mean $\pm$ seed standard deviation over three seeds).}
\label{tab:sftraw}
\setlength{\tabcolsep}{2.4pt}
\resizebox{\textwidth}{!}{%
\begin{tabular}{lccccccc}
\toprule
run & ARC-c & ARC-e & HellaSw & PIQA & Wino & LAMBADA & MMLU \\
\midrule
\multicolumn{8}{l}{\textit{uniform references}} \\
uniform NVFP4 & $44.9_{\pm 0.6}$ & $74.6_{\pm 0.3}$ & $53.8_{\pm 0.1}$ & $76.1_{\pm 0.4}$ & $67.6_{\pm 0.8}$ & $66.3_{\pm 0.2}$ & $68.7_{\pm 0.1}$ \\
uniform MXFP8 & $47.0_{\pm 0.2}$ & $78.4_{\pm 0.1}$ & $54.6_{\pm 0.0}$ & $78.0_{\pm 0.1}$ & $70.7_{\pm 0.6}$ & $68.6_{\pm 0.1}$ & $71.1_{\pm 0.1}$ \\
\midrule
\multicolumn{8}{l}{\textit{budget 5, menu $\{4,6,8,16\}$}} \\
NVIDIA & $44.6_{\pm 0.5}$ & $74.6_{\pm 0.7}$ & $53.7_{\pm 0.2}$ & $76.1_{\pm 0.3}$ & $67.9_{\pm 1.2}$ & $67.0_{\pm 0.2}$ & $68.8_{\pm 0.1}$ \\
\methodname & $46.3_{\pm 0.7}$ & $76.9_{\pm 0.1}$ & $53.9_{\pm 0.1}$ & $76.3_{\pm 0.3}$ & $68.4_{\pm 0.2}$ & $\mathbf{67.2}_{\pm 0.2}$ & $69.5_{\pm 0.1}$ \\
SNIP & $\mathbf{47.0}_{\pm 0.8}$ & $\mathbf{77.1}_{\pm 0.6}$ & $\mathbf{54.0}_{\pm 0.1}$ & $\mathbf{76.7}_{\pm 0.4}$ & $68.3_{\pm 0.7}$ & $65.8_{\pm 0.3}$ & $69.9_{\pm 0.1}$ \\
\midrule
\multicolumn{8}{l}{\textit{budget 5.5, menu $\{4,6,8,16\}$}} \\
NVIDIA & $45.6_{\pm 0.8}$ & $75.4_{\pm 0.6}$ & $53.6_{\pm 0.1}$ & $76.2_{\pm 0.3}$ & $67.4_{\pm 0.6}$ & $67.3_{\pm 0.3}$ & $68.7_{\pm 0.2}$ \\
\methodname & $\mathbf{46.8}_{\pm 0.6}$ & $77.1_{\pm 1.0}$ & $54.2_{\pm 0.0}$ & $77.0_{\pm 0.1}$ & $68.9_{\pm 0.7}$ & $\mathbf{67.9}_{\pm 0.2}$ & $70.1_{\pm 0.1}$ \\
SNIP & $46.2_{\pm 0.5}$ & $77.5_{\pm 0.4}$ & $54.2_{\pm 0.0}$ & $\mathbf{77.2}_{\pm 0.3}$ & $\mathbf{69.1}_{\pm 0.3}$ & $67.4_{\pm 0.3}$ & $\mathbf{70.5}_{\pm 0.1}$ \\
\midrule
\multicolumn{8}{l}{\textit{budget 6.5, menu $\{4,8,16\}$}} \\
NVIDIA & $44.8_{\pm 0.2}$ & $74.6_{\pm 0.6}$ & $53.8_{\pm 0.1}$ & $76.6_{\pm 0.3}$ & $68.5_{\pm 1.5}$ & $67.2_{\pm 0.2}$ & $68.9_{\pm 0.1}$ \\
\methodname & $47.0_{\pm 0.2}$ & $77.3_{\pm 0.5}$ & $54.1_{\pm 0.1}$ & $\mathbf{77.1}_{\pm 0.1}$ & $68.8_{\pm 0.7}$ & $\mathbf{68.0}_{\pm 0.2}$ & $70.0_{\pm 0.0}$ \\
SNIP & $46.5_{\pm 0.3}$ & $76.7_{\pm 0.2}$ & $\mathbf{54.3}_{\pm 0.2}$ & $76.6_{\pm 0.2}$ & $68.9_{\pm 0.7}$ & $67.6_{\pm 0.0}$ & $\mathbf{70.4}_{\pm 0.3}$ \\
\midrule
\multicolumn{8}{l}{\textit{budget 6.5, menu $\{4,16\}$}} \\
NVIDIA & $44.8_{\pm 0.2}$ & $74.6_{\pm 0.6}$ & $53.8_{\pm 0.1}$ & $76.6_{\pm 0.3}$ & $68.5_{\pm 1.5}$ & $67.2_{\pm 0.2}$ & $68.9_{\pm 0.1}$ \\
\methodname & $46.4_{\pm 0.6}$ & $76.5_{\pm 1.0}$ & $53.9_{\pm 0.0}$ & $\mathbf{76.9}_{\pm 0.3}$ & $\mathbf{69.0}_{\pm 0.4}$ & $67.0_{\pm 0.1}$ & $69.7_{\pm 0.2}$ \\
SNIP & $\mathbf{47.3}_{\pm 0.8}$ & $\mathbf{77.3}_{\pm 0.2}$ & $54.0_{\pm 0.2}$ & $76.5_{\pm 0.3}$ & $67.8_{\pm 0.6}$ & $67.0_{\pm 0.1}$ & $\mathbf{70.1}_{\pm 0.2}$ \\
\bottomrule
\end{tabular}}
\end{table}

\begin{table}[htbp]
\centering\scriptsize
\renewcommand{\arraystretch}{0.95}
\caption{SFT probe tasks (accuracy \%, mean $\pm$ seed standard deviation over three seeds).}
\label{tab:probe1}
\setlength{\tabcolsep}{2.4pt}
\begin{tabular}{lccccccc}
\toprule
run & BoolQ & Copa & RACE & LogiQA & ASDiv & SAT-Math & NQ \\
\midrule
\multicolumn{8}{l}{\textit{uniform references}} \\
uniform NVFP4 & $77.1_{\pm 0.2}$ & $84.0_{\pm 1.7}$ & $39.5_{\pm 0.7}$ & $40.2_{\pm 0.7}$ & $27.9_{\pm 0.2}$ & $43.6_{\pm 2.0}$ & $8.7_{\pm 0.4}$ \\
uniform MXFP8 & $81.2_{\pm 0.3}$ & $83.0_{\pm 0.0}$ & $40.5_{\pm 0.1}$ & $42.8_{\pm 0.6}$ & $26.6_{\pm 0.5}$ & $48.5_{\pm 2.1}$ & $11.2_{\pm 0.7}$ \\
\midrule
\multicolumn{8}{l}{\textit{budget 5, menu $\{4,6,8,16\}$}} \\
NVIDIA & $79.3_{\pm 0.4}$ & $83.7_{\pm 0.6}$ & $\mathbf{40.4}_{\pm 0.9}$ & $39.6_{\pm 0.8}$ & $27.4_{\pm 0.9}$ & $42.7_{\pm 1.4}$ & $9.2_{\pm 0.5}$ \\
\methodname & $\mathbf{81.9}_{\pm 0.6}$ & $83.7_{\pm 3.1}$ & $40.1_{\pm 0.9}$ & $41.0_{\pm 0.8}$ & $28.5_{\pm 1.0}$ & $45.8_{\pm 1.3}$ & $8.9_{\pm 0.6}$ \\
SNIP & $80.6_{\pm 0.5}$ & $83.7_{\pm 1.2}$ & $39.5_{\pm 0.5}$ & $\mathbf{42.3}_{\pm 1.7}$ & $\mathbf{31.9}_{\pm 1.1}$ & $\mathbf{46.1}_{\pm 2.7}$ & $9.3_{\pm 0.4}$ \\
\midrule
\multicolumn{8}{l}{\textit{budget 5.5, menu $\{4,6,8,16\}$}} \\
NVIDIA & $79.0_{\pm 0.5}$ & $84.3_{\pm 2.1}$ & $\mathbf{40.5}_{\pm 0.4}$ & $40.8_{\pm 0.8}$ & $26.6_{\pm 0.9}$ & $43.5_{\pm 1.1}$ & $9.2_{\pm 0.4}$ \\
\methodname & $\mathbf{82.2}_{\pm 0.5}$ & $84.0_{\pm 2.6}$ & $40.4_{\pm 0.1}$ & $\mathbf{42.0}_{\pm 0.2}$ & $32.6_{\pm 1.8}$ & $46.2_{\pm 2.1}$ & $10.0_{\pm 0.7}$ \\
SNIP & $80.0_{\pm 0.2}$ & $\mathbf{86.7}_{\pm 2.1}$ & $40.3_{\pm 0.2}$ & $40.6_{\pm 0.2}$ & $31.7_{\pm 0.1}$ & $\mathbf{48.8}_{\pm 2.3}$ & $9.4_{\pm 0.5}$ \\
\midrule
\multicolumn{8}{l}{\textit{budget 6.5, menu $\{4,8,16\}$}} \\
NVIDIA & $79.2_{\pm 0.8}$ & $\mathbf{85.0}_{\pm 1.0}$ & $39.6_{\pm 0.4}$ & $39.6_{\pm 1.9}$ & $25.7_{\pm 0.7}$ & $43.2_{\pm 2.1}$ & $9.7_{\pm 0.2}$ \\
\methodname & $\mathbf{82.0}_{\pm 0.9}$ & $84.3_{\pm 0.6}$ & $40.4_{\pm 0.3}$ & $40.3_{\pm 1.5}$ & $31.2_{\pm 2.0}$ & $45.6_{\pm 4.4}$ & $\mathbf{10.0}_{\pm 0.6}$ \\
SNIP & $79.3_{\pm 0.5}$ & $84.0_{\pm 1.0}$ & $39.8_{\pm 0.2}$ & $41.6_{\pm 1.6}$ & $\mathbf{33.1}_{\pm 1.1}$ & $\mathbf{48.9}_{\pm 0.9}$ & $8.2_{\pm 0.3}$ \\
\midrule
\multicolumn{8}{l}{\textit{budget 6.5, menu $\{4,16\}$}} \\
NVIDIA & $79.2_{\pm 0.8}$ & $\mathbf{85.0}_{\pm 1.0}$ & $39.6_{\pm 0.4}$ & $39.6_{\pm 1.9}$ & $25.7_{\pm 0.7}$ & $43.2_{\pm 2.1}$ & $\mathbf{9.7}_{\pm 0.2}$ \\
\methodname & $80.1_{\pm 0.4}$ & $83.7_{\pm 1.2}$ & $40.1_{\pm 0.6}$ & $\mathbf{41.0}_{\pm 0.9}$ & $29.3_{\pm 1.5}$ & $45.2_{\pm 2.2}$ & $8.8_{\pm 0.7}$ \\
SNIP & $80.6_{\pm 0.2}$ & $84.3_{\pm 2.5}$ & $\mathbf{40.7}_{\pm 0.6}$ & $40.3_{\pm 1.3}$ & $\mathbf{33.8}_{\pm 0.9}$ & $45.2_{\pm 1.6}$ & $9.3_{\pm 0.5}$ \\
\bottomrule
\end{tabular}

\vspace{8pt}
\begin{tabular}{lcccccc}
\toprule
run & SciQ & OBQA & CSQA & AQuA & LSAT-ar & LSAT-lr \\
\midrule
\multicolumn{7}{l}{\textit{uniform references}} \\
uniform NVFP4 & $94.2_{\pm 0.4}$ & $41.2_{\pm 0.9}$ & $79.2_{\pm 0.8}$ & $39.5_{\pm 1.6}$ & $18.0_{\pm 1.8}$ & $43.1_{\pm 1.2}$ \\
uniform MXFP8 & $94.5_{\pm 0.2}$ & $40.3_{\pm 0.6}$ & $82.1_{\pm 0.5}$ & $40.6_{\pm 2.2}$ & $19.4_{\pm 0.5}$ & $48.6_{\pm 0.2}$ \\
\midrule
\multicolumn{7}{l}{\textit{budget 5, menu $\{4,6,8,16\}$}} \\
NVIDIA & $94.4_{\pm 0.1}$ & $40.7_{\pm 0.3}$ & $79.0_{\pm 0.6}$ & $\mathbf{39.8}_{\pm 1.7}$ & $\mathbf{21.2}_{\pm 1.8}$ & $39.8_{\pm 0.9}$ \\
\methodname & $94.6_{\pm 0.2}$ & $40.7_{\pm 0.7}$ & $\mathbf{79.9}_{\pm 0.8}$ & $36.7_{\pm 2.0}$ & $20.6_{\pm 3.1}$ & $43.8_{\pm 1.6}$ \\
SNIP & $\mathbf{94.7}_{\pm 0.3}$ & $\mathbf{41.8}_{\pm 0.2}$ & $79.8_{\pm 0.2}$ & $38.2_{\pm 1.6}$ & $21.0_{\pm 0.9}$ & $\mathbf{44.4}_{\pm 0.8}$ \\
\midrule
\multicolumn{7}{l}{\textit{budget 5.5, menu $\{4,6,8,16\}$}} \\
NVIDIA & $94.4_{\pm 0.1}$ & $\mathbf{41.2}_{\pm 0.7}$ & $78.7_{\pm 0.3}$ & $38.7_{\pm 2.8}$ & $18.0_{\pm 1.6}$ & $39.7_{\pm 1.0}$ \\
\methodname & $\mathbf{94.7}_{\pm 0.1}$ & $40.5_{\pm 0.3}$ & $79.6_{\pm 0.6}$ & $38.5_{\pm 0.6}$ & $\mathbf{19.3}_{\pm 3.3}$ & $42.4_{\pm 1.6}$ \\
SNIP & $94.3_{\pm 0.2}$ & $41.0_{\pm 0.3}$ & $80.1_{\pm 0.4}$ & $37.5_{\pm 1.8}$ & $17.8_{\pm 1.2}$ & $\mathbf{44.2}_{\pm 3.1}$ \\
\midrule
\multicolumn{7}{l}{\textit{budget 6.5, menu $\{4,8,16\}$}} \\
NVIDIA & $94.4_{\pm 0.5}$ & $40.3_{\pm 0.2}$ & $78.1_{\pm 0.6}$ & $38.7_{\pm 0.8}$ & $18.6_{\pm 0.7}$ & $40.4_{\pm 0.4}$ \\
\methodname & $\mathbf{94.9}_{\pm 0.3}$ & $40.9_{\pm 0.3}$ & $79.7_{\pm 0.2}$ & $38.1_{\pm 1.3}$ & $19.4_{\pm 0.7}$ & $44.9_{\pm 1.2}$ \\
SNIP & $94.2_{\pm 0.2}$ & $\mathbf{41.4}_{\pm 0.5}$ & $\mathbf{80.6}_{\pm 1.0}$ & $\mathbf{41.1}_{\pm 1.8}$ & $\mathbf{21.2}_{\pm 1.5}$ & $\mathbf{46.0}_{\pm 0.8}$ \\
\midrule
\multicolumn{7}{l}{\textit{budget 6.5, menu $\{4,16\}$}} \\
NVIDIA & $94.4_{\pm 0.5}$ & $40.3_{\pm 0.2}$ & $78.1_{\pm 0.6}$ & $38.7_{\pm 0.8}$ & $18.6_{\pm 0.7}$ & $40.4_{\pm 0.4}$ \\
\methodname & $94.5_{\pm 0.2}$ & $40.7_{\pm 1.0}$ & $79.7_{\pm 0.6}$ & $37.8_{\pm 3.1}$ & $20.9_{\pm 0.9}$ & $\mathbf{43.5}_{\pm 0.4}$ \\
SNIP & $94.7_{\pm 0.2}$ & $41.0_{\pm 1.6}$ & $\mathbf{80.1}_{\pm 0.8}$ & $36.9_{\pm 0.9}$ & $19.6_{\pm 2.6}$ & $43.0_{\pm 0.8}$ \\
\bottomrule
\end{tabular}
\end{table}

\end{document}